\renewcommand{\hat}[1]{\widehat{#1}}
\newtheorem{theorem}{Theorem}[section]
\newtheorem{lemma}{Lemma}[section]
\newtheorem{proposition}{Proposition}[section]
\icmltitlerunning{
Accountable Off-Policy Evaluation With Kernel Bellman Statistics
}
\begin{document}
\twocolumn[

\icmltitle{
Accountable Off-Policy Evaluation With Kernel Bellman Statistics
}
\icmlsetsymbol{equal}{*}
\begin{icmlauthorlist}
\icmlauthor{Yihao Feng}{ut}
\icmlauthor{Tongzheng Ren}{equal,ut}
\icmlauthor{Ziyang Tang}{equal,ut}
\icmlauthor{Qiang Liu}{ut}
\end{icmlauthorlist}
\vskip 0.3in
\icmlcorrespondingauthor{Yihao Feng}{yihao@cs.utexas.edu}
\icmlcorrespondingauthor{Tongzheng Ren}{tzren@cs.utexas.edu}

\icmlaffiliation{ut}{Department of Computer Science, The University of Texas at Austin}]

\printAffiliationsAndNotice{\icmlEqualContribution}

\begin{abstract}
We consider off-policy evaluation (OPE), which 
evaluates the performance of a new policy from observed data collected from previous experiments, 
without requiring the execution of the new policy. 
This finds important applications in areas with high execution cost or safety concerns, such as medical diagnosis, recommendation systems and robotics. 
In practice, due to the limited information from off-policy data,  
it is highly desirable to construct rigorous confidence intervals, not just point estimation, for the policy performance. 
In this work, we propose a new \emph{variational} framework 
which reduces the problem of calculating tight confidence bounds in OPE into an optimization problem on a  feasible set that catches the true state-action value function with high probability. 
The feasible set is constructed by leveraging statistical properties of a recently proposed kernel Bellman loss \citep{feng2019kernel}. 
We design an efficient computational approach for calculating our bounds, 
and extend it to perform post-hoc diagnosis and correction for existing estimators. 
Empirical results show that 
our method yields tight confidence intervals in different settings.

\end{abstract}

\section{Introduction}
Reinforcement learning \citep{sutton98beinforcement} has achieved remarkable successes recently, but is still highly data demanding. 
An essential approach to improve data efficiency is to  use \emph{off-policy} methods, which leverage 
 historical data collected from different behavior  policies (a.k.a. off-policy data) to evaluate and optimize the target policies of interest.  
Using off-policy data is of critical importance in many real-world applications \citep[e.g.,][]{murphy01marginal,li11unbiased,bottou13counterfactual,thomas17predictive}, 
especially 
in cases when it is 
infeasible or even risky to collect online experimental data or build high-quality simulators, such as 
robotics, 
advertisement, 
online education, 
and 
medical treatment  \citep[e.g.,][]{murphy01marginal,li11unbiased,hirano2003efficient,liao2019off}.

This work concerns the problem of off-policy  evaluation (OPE), which estimates the expected reward of a given target policy from off-policy data. 
Because the information in off-policy data is often  limited, 
typical point estimation may suffer from large error in difficult cases.  
Therefore, for making high-stakes decisions in areas such as medical treatment,  
it is crucially important %
to provide reliable \emph{confidence bounds} for quantifying the uncertainties in the estimation.  %

Importance sampling (IS) provides a basic principle for developing OPE methods 
\citep[e.g.,][]{dudik11doubly, jiang16doubly,thomas16data,liu2018breaking,nachum2019dualdice,zhang2020gendice}, 
as well as doubly robust estimators  
\citep{dudik11doubly,jiang16doubly, farajtabar18more,kallus2019double,tang2020doubly},
as well as constructing confidence bounds for OPE  \citep[e.g.,][]{thomas15high,hanna2017bootstrapping}.
However, the typical IS-based methods suffer from a  number of critical disadvantages: 

$\bullet$~\emph{Requiring known behavior policies}:
The typical IS estimators require the observed data to be drawn from a known behavior policy. However, this is often not the case in practice. 
In practice, the policies from which the observed data is collected may be unknown, or even not exist (e.g., when the data is drawn from a mix of different policies). 

$\bullet$~\emph{High variance and loose bounds}: 
The variance of IS-based estimators can be excessively high when the target and behavior policies are very different from each other. As a consequence, concentration confidence bounds based on IS \citep[e.g.,][]{thomas15high,thomas2015highope} could be very loose. 

$\bullet$~\emph{Curse of long horizon}: 
The variance of IS estimators are known to deteriorate quickly when the horizon the Markov decision processes becomes large. 
Although there are recent improved methods for long horizon problems \citep{liu2018breaking,tang2020doubly,mousavi2020blackbox}, 
they introduce additional biases due to the estimation of the density ratios.

\textbf{Our Method}\quad
We propose 
a variational framework for constructing  
\emph{tight, provable confidence bounds} 
which avoids both the requirement on known behavior policies and the curse of long horizon.  
Instead of using importance sampling,
we construct bounds based on the state-action value function (a.k.a. Q-function) specified by Bellman equation. 
The idea is to construct a high confidence feasible  set of the true Q-function of the target policy, by leveraging the tail bound of the kernel Bellman loss  \citep{feng2019kernel}, and derive the upper (resp. lower) bounds of the expected reward of interest by solving a maximization (resp. minimization) problem on the high confidence feasible set.  
Further, by constraining the optimization inside a reproducing kernel Hilbert space (RKHS) ball, 
we obtain a computationally efficient framework for providing confidence bounds of  off-policy value evaluation.

\textbf{Post-hoc Analysis}\quad
In practice, 
we may already have an estimated Q-function 
from previous data/experiments, and we want to 
construct post-hoc confidence bounds around the existing estimation, 
or 
perform post-hoc correction if the estimation is detected to deviate significantly from the true value. 
In this work, we enable such post-hoc diagnosis by extending our main framework. 
We empirically evaluate our method on several control benchmarks; results show that our  method can provide  tight confidence bounds, and also reasonable post-hoc corrections for existing estimators.

\section{Related Work}
Off-policy value (point) estimation has been studied in various reinforcement learning settings, including contextual bandits \citep[e.g.,][]{dudik11doubly,wang17optimal}, finite horizon Markov decision processes (MDPs) \citep[e.g.,][]{thomas16data,jiang16doubly,farajtabar18more,xie2019optimal}, 
and infinite horizon RL MDPs  \citep[e.g.,][]{liu2018breaking,nachum2019dualdice,tang2020doubly}.
Unlike our work, most of these works are not behavior agnostic 
and require to know the behavior policy, except a line of very recent works \citep[e.g.,][]{nachum2019dualdice,zhang2020gendice,mousavi2020blackbox}.

A smaller set of works focus on confidence bounds for various RL  problems \citep[e.g.,][]{white2010interval,thomas15high,thomas2015highope,hanna2017bootstrapping,dann2018policy,jiang2020minimax,duan2020minimax}.
A recent line of works \citep[e.g.,][]{dann2018policy,jin2018q,jin2019provably} have utilized upper confidence bounds for explorations on tabular (or linear) MDPs. 
\citet{thomas15high,hanna2017bootstrapping} give high concentration confidence bounds for off-policy estimators, based on importance sampling methods, which 
suffer from the issue of high variance of IS-based estimators.

Recently, \citet{jiang2020minimax} proposes upper/lower bounds for OPE 
based a minimax formulation of double robustness \citep[e.g.,][]{tang2020doubly,uehara2019minimax}, but their method does not  take the randomness of empirical data  (data uncertainty) into account to get high confidence bounds. 
Concurrently, \citet{duan2020minimax} provides a data-dependent confidence bounds, based on regression-based Fitted Q iteration (FQI), by assuming  the function class is linear and making additional assumptions on MDPs.

Compared with these existing works, our method provides tighter bounds that avoid the  high variance issue of IS, work for behavior-agnostic OPE estimators, and can be directly applied to both continuous and discrete state and action MDPs once a kernel function can be properly defined. 

Finally, the kernel method has been widely used in reinforcement learning, 
but is  
typically used  
 to represent value  functions or transition models ~\citep[e.g.,][]{xu05kernel,xu07kernel,jung2007kernelizing,taylor09kernelized,grunewalder2012modelling,farahmand2016regularized}. %
Our work admits a novel application of kernel in RL, leveraging it as a technical tool for both defining the kernel Bellman loss and constructing high confidence bounds.

\section{Off-Policy Evaluation and Kernel Bellman Statistics}
We start with introducing background on reinforcement learning and off-policy evaluation (OPE), 
then give a review on the kernel Bellman loss  \citep{feng2019kernel} and introduce the concentration bounds for it.
\subsection{Preliminary}
Denote by $M = \langle \mathcal{S}, \mathcal{A}, \mathcal{P}, r, \gamma, \mu_{0} \rangle$  a Markov decision process (MDP), where $\mathcal{S}$ is the state space; $\mathcal{A}$ is the action space; $\cP(s' | s, a)$ is the transition probability; $r(s,a)$ is the average immediate reward;  %
$\mu_{0} \in \Delta(\mathcal{S})$ is the initial state distribution, 
and $\gamma \in (0, 1)$ is the discount factor. We focus on the discounted case ($\gamma < 1$) in the main paper, and discuss the extension to  average reward case ($\gamma =1$) in Appendix \ref{sec:average_case}.

A policy $\pi$ specifies a distribution of actions given states, and $\pi(a|s)$ denotes the probability of selecting $a$ given $s$. 
Given a \emph{target} policy $\pi$, 
we are interested in estimating the expected total discounted reward associated with $\pi$:  
\begin{align}
    \eta^{\pi}:= 
    \lim_{T\to\infty} \mathbb{E}_{\pi}\left[ \sum_{t=0}^T \gamma^t r_t
    \right] =
    \E_{\pi}\left [\sum_{t=0}^{\infty}\gamma^{t}r_t\right ]\,,%
\end{align}
where we start from an initial state $s_{0} \sim \mu_{0}$, and execute the policy $\pi$ in the MDP to generate trajectories; 
here $T$ is the trajectory length. We mainly consider the case with infinite horizon ($T= \infty$) in this work. %

Assume we are given 
an observed set of transition pairs $\mathcal D = (s_i, a_i, r_i, s_i')_{i=1}^n$, 
where  $r_i$ and $s_i'$ 
are the observed local reward and the next state following $(s_i,a_i)$. 
The data is assumed to be \emph{off-policy} in that they are collected from some arbitrary, unknown policy (a.k.a. behavior policy), which is different from the target policy $\pi$ of interest.  
The goal of off-policy evaluation (OPE) is to provide interval estimation (upper and lower bounds) for the expected reward $\eta^\pi$
from the off-policy data $\mathcal D$, without knowing the  underlying behavior policy (i.e., behavior agnostic). 

\paragraph{Value Estimation via Q-Function}
We approach this problem by approximating the expected reward $\eta^\pi$  through the Q-function. 
The Q-function (a.k.a state-action value function) 
$Q^\pi \colon \mathcal{S} \times \mathcal{A} \to \mathbb{R}$ 
of a policy $\pi$ is defined by 
$$Q^{\pi}(s, a):= \E_{\pi}\left [\sum_{t=0}^{\infty}\gamma^{t}r(s_t,a_t) ~\big |~ s_0 = s, a_0 = a\right ]\, ,$$
which equals the expected long-term discounted reward when we 
execute policy $\pi$ starting from $(s,a)$. 

Obviously, $Q^\pi$ and $\eta^\pi$ are related via   
\begin{align}\label{equ:etaq} 
    \eta^{\pi} = \eta(Q^\pi):= \E_{s_0 \sim \mu_0, a_0 \sim \pi(\cdot|s_0)}[Q^{\pi}(s_0, a_0)]\,,
\end{align}
where $s_0$ is drawn from the fixed initial distribution $\mu_0$. 
Therefore, given an empirical approximation $\hat Q$ of $Q^{\pi}$, we can estimate $\eta^{\pi}$ by $\eta(\hat Q)$, which can be 
further approximated by drawing an i.i.d. sample $(s_{i,0}, a_{i,0})_{i=1}^N$ from $\mu_0 \times \pi$: 
\begin{align} \label{equ:xxhatQ}
\eta(\hat Q) \approx 
\hat \eta(\hat Q) := \frac{1}{N}\sum_{i=1}^{N}\hat{Q}(s_{i,0}, a_{i,0})\,.
\end{align}
Since $\mu_0$ and $\pi$ are assumed to be known, 
we can draw an i.i.d.  sample with a very large size $N$ to get arbitrarily high accuracy in the Monte Carlo estimation.

\paragraph{Bellman Equation} 
It is well-known that $Q = Q^{\pi}$ is the unique solution of the \textit{Bellman equation} \citep{puterman94markov}: $Q = \B_{\pi}Q$, 
where $\B_{\pi}$ is the \textit{Bellman evaluation operator}, defined by 
$$
    \B_{\pi}Q(s, a):= 
    \E_{(s',a')}
    \left[r(s, a) + \gamma Q(s', a')~|~ s, a\right ]\,.
$$
where $(s',a')$ is drawn from $s' \sim \mathcal{P}(\cdot ~|~s, a)$ and $a' \sim \pi(\cdot ~|~ s')$. 
For simplicity, we can define the \emph{Bellman residual operator} as $\R_{\pi}Q =  \B_{\pi}Q - Q$.

We can approximate the Bellman operator on a state-action pair $(s_i, a_i)$ by {bootstrapping}: 
$$\hat \B_{\pi}Q(s_i, a_i):= r_i + \gamma \mathbb{E}_{a^\prime\sim \pi(\cdot|s_i^\prime)}[ Q(s_i', a^\prime)]\,,$$
where $r_i, s_i'$ are the observed local reward and the next state of $(s_i,a_i)$, and $a'$ is drawn from policy $\pi$ by Monte Carlo (same as expected Sarsa \citep{sutton98beinforcement}).
Because the policy is known, we can  approximate the expectation term $\E_{a' \sim \pi(\cdot | s'_i)}\left[Q(s_i^{\prime}, a')\right]$ upto arbitrarily high accuracy by drawing a large sample from $\pi(\cdot | s'_i)$.
We can also empirically estimate the Bellman residual by
$\hat \R_\pi Q(s_i, a_i) := \hat \B_\pi Q(s_i, a_i) - Q(s_i, a_i)$,
which provides an unbiased  estimation of the Bellman residual, that is, $\E[\hat \R_\pi Q(s_i, a_i)~|~s_i,a_i~]=\R_\pi Q(s_i,a_i).$ 
Note that $\hat \B_{\pi}Q(s_i, a_i)$ is a function of $(s_i, a_i, r_i, s_i')$, but the dependency on $r_i, s_i'$  is dropped in notation for convenience. 

\qiangremoved{ 
Notice that, instead of the off-policy \emph{value} estimation, here we focus on solving the two key problems we propose for \textit{accountable} off-policy evaluation,
e.g. given $n$ off-policy data $\D=\{s_i, a_i, r_i, s_i'\}_{1 \leq i \leq n}$,
we want to construct a tight interval $[\hat{\eta}^{+}, \hat{\eta}^{-}]$ that contains the expected total discounted reward $\eta^{\pi}$ with high probability.
}

\qiangremoved{ 
\textbf{The Issue of Double Sampling}~~
When the transition of the MDP is stochastic, kernel loss does not suffer from the double sampling problem (i.e. at least two independent sample pair $(r, s')$ for the same state-action pair $x$ to provide consistent and unbiased estimation,
while prior algorithms such as residual gradient \citep{baird95residual} give biased estimation without double samples. \red{this should go earlier. And explained more explicitly (with equations, not words)}\red{check how it is done in feng etal.}
}

\paragraph{The Double Sampling Problem} 
A naive approach for estimating $Q^\pi$ is to minimize the empirical Bellman residual. However, a key barrier is the so-called ``double sampling'' problem. 
To  illustrate,  
consider 
estimating $Q^\pi$ by minimizing the square norm of the empirical Bellman residual: 
$$
\hat L_2(Q) = \frac{1}{n}\sum_{i=1}^n\left [ \left (\hat{\mathcal R}_\pi Q(s_i,a_i) \right)^2\right]. 
$$
Unfortunately, minimizing $\hat L_2(Q)$ does not yield a consistent estimation of $Q^\pi$, because $(\hat{\mathcal R}_\pi Q(s_i,a_i))^2$ does not correctly estimate $(\R_\pi Q (s_i,a_i))^2$ due to the square function outside of the empirical Bellman residual (even though 
$\hat R_\pi Q(s_i,a_i)$ is an unbiased estimation of $
\mathcal R_\pi Q(s_i,a_i)$).  %
One strategy for obtaining an unbiased estimation of the exact square $(\R_\pi Q (s_i,a_i))^2$ is to multiply two copies of $\hat \R_\pi Q (s_i,a_i)$ obtained from two \emph{independent} copies of $r_i, s_i^\prime$ following the same $(s_i, a_i)$,  hence requiring \emph{double sampling} \citep{baird95residual}.    
However, double sampling rarely happens in observed data collected from natural environment,   and is 
impractical to use in real-world problems. 
As a result, many works on 
Q-function estimation resort to temporal difference (TD) based algorithms, which, however, suffer from instability and divergence issues, especially when nonlinear function approximation is used.

\subsection{Kernel Bellman Statistics}

Recently, \citet{feng2019kernel} proposed a kernel Bellman loss to address the double sampling problem in value function estimation. 
This objective function provides a consistent  estimation of the derivation  from a given $Q$ to the true $Q^\pi$, 
and hence allows us to  estimate  $Q^\pi$ 
without requiring double sampling, nor risking the instability or divergence in TD-based algorithms. We now give a brief introduction to the kernel Bellman loss and the related U/V-statistics, and their concentration inequalities 
that are useful in our framework.

For notation, we use $x = (s, a)$ to denote a state-action pair, and $x':= (s',a')$ a successive state-action pair following 
$x = (s, a)$ collected under some behavior policy.
We use $\tau = (s, a, r, s')$ to denote a transition pair.

Following \citet{feng2019kernel}, 
let $K:(\mathcal{S}\times\mathcal{A})\times(\mathcal{S}\times\mathcal{A}) \to \mathbb{R}$ be an \emph{integrally strictly positive definite (ISPD) kernel}. 
The expected kernel Bellman loss is defined by 
\begin{align}
    L_K(Q) := \mathbb{E}_{x, \bar{x} \sim \mu} \left[ \R_\pi Q(x) K(x, \bar{x})  \R_\pi Q(\bar{x})\right]\,, \label{equ:kernel_Loss}
\end{align}
where $\bar x = (\bar s, \bar a)$ is an independent and identical copy of $x$, and $\mu$ is any distribution supported on $\mathcal S \times \mathcal A$. %
Note that $\mu$ can be either the on-policy visitation distribution induced by  $\pi$, or some other valid distributions defined by the observed data (the \emph{off-policy} case). 

As shown in \citet{feng2019kernel}, 
the expected kernel Bellman loss fully characterizes the true $Q^\pi$ function in that 
$L_K(Q) \geq 0$ for any $Q,$ and 
\begin{align*}
    L_K(Q) = 0 && 
\text{if and only if} &&
Q = Q^\pi. 
\end{align*}
Another key property of $L_K(Q)$ is that 
it can be easily estimated and optimized from observed transitions, without requiring double samples.  
Given a set of observed transition pairs $\D = \{\tau_i\}_{i=1}^n$, 
we can use the so-called \emph{V-statistics} to estimate $L_K(Q)$:
\begin{align}
    \hat{L}^{V}_{K}(Q):= \frac{1}{n^{2}}\sum_{i,j=1}^n \ell_{\pi, Q}(\tau_i,
    \tau_j)\,,\label{equ:k_vstats}
\end{align}
where $\ell_{\pi, Q}$ is defined by 
\begin{align}\label{equ:ellQpi}
\ell_{\pi, Q}(\tau_i, \tau_j)  %
:= 
\hat{\R}_{\pi}Q(x_i) K(x_i, x_j) 
\hat{\R}_{\pi}Q(x_j)\,.
\end{align}
Another way is to use the so-called \emph{U-statistics}, which removes the diagonal terms in \eqref{equ:k_vstats}:
\begin{align}
    \hat{L}^{U}_{K}(Q):= \frac{1}{n(n-1)}\sum_{1 \leq i\ne j \leq n} \ell_{\pi,Q}(\tau_i, \tau_j)\,.\label{equ:k_ustats}
\end{align}
We call $\hat{L}^{U}_{K}(Q)$ and $\hat{L}^{V}_{K}(Q)$ the kernel Bellman U/V-statistics. 
Both the U/V-statistics can be shown to give consistent estimation of $L_K(Q)$. The U-statistics $\hat L_K^U(Q)$ is known to be an unbiased estimator of the kernel loss $L_K(Q)$, but can be negative and hence unstable when being used as an objective function in practice. 
In comparison,  the V-statistics $\hat L_K^V(Q)$ is always non-negative, that is, $\hat L_K^V(Q) \geq 0$ for any $Q$, and hence behaves more stably in practice. 
We mainly use the V-statistics in this work.

\subsection{Concentration Bounds of Kernel Bellman Statistics}
We now introduce concentration inequalities for the kernel  Bellman statistics, which form an important tool in our framework. 
The inequalities here are classical results from \citet{hoeffding1963probability}. 

\begin{proposition}[Concentration bound of U-/V-statistics]
Consider a set of i.i.d. random transition pairs $\{\tau_i\}_{i=1}^n$ with $x_i \sim \mu$. %
Assume 
 $$\sup_{\tau,\bar{\tau}}|\ell_{\pi, Q}(\tau, \bar\tau))| \leq \ell_{\max} <\infty,$$
and $n$ is an even number,
then we have the following concentration bound for U-statistics, 
\begin{align*}
    \mathbb{P}\left[|\hat{L}_{K}^U(Q) - L_K(Q)| \geq 2\ell_{\max} \sqrt{\frac{\log \frac{2}{\delta}}{n}}~\right]\leq \delta\,,
\end{align*}
and for V-statistics, 
\begin{align*}
    &\mathbb{P}\left[|\hat{L}_{K}^V(Q) - L_{K}(Q)|\geq
    2 \ell_{\max} \left (\frac{n-1}{n}\sqrt{\frac{\log \frac{2}{\delta}}{n}} + \frac{1}{n}\right) \right] \leq \delta\,.
\end{align*}
\label{thm:non-asymptotic-bound}
\end{proposition}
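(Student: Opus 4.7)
The plan is to prove the U-statistic bound first via Hoeffding's classical symmetrization trick, and then to obtain the V-statistic bound as a corollary by relating it algebraically to the U-statistic.

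For the U-statistic, the key observation is that $\ell_{\pi,Q}$ is a symmetric kernel (by symmetry of $K$) with $|\ell_{\pi,Q}|\leq \ell_{\max}$ and $\mathbb{E}[\ell_{\pi,Q}(\tau_1,\tau_2)] = L_K(Q)$. Setting $m = n/2$, I would use Hoeffding's representation
\begin{align*}
\hat{L}_K^U(Q) \;=\; \frac{1}{n!} \sum_{\sigma \in S_n} W(\sigma),\qquad W(\sigma) := \frac{1}{m}\sum_{k=1}^{m}\ell_{\pi,Q}(\tau_{\sigma(2k-1)},\tau_{\sigma(2k)}),
\end{align*}
which holds because $\hat{L}_K^U$ is a symmetric average. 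For each fixed $\sigma$, the summands inside $W(\sigma)$ use disjoint indices, so by the i.i.d. assumption they are $m$ i.i.d.\ random variables taking values in $[-\ell_{\max},\ell_{\max}]$ with mean $L_K(Q)$. Standard Hoeffding's inequality then gives, for every $\sigma$ and every $t>0$,
\begin{align*}
\mathbb{E}\bigl[\exp(s(W(\sigma) - L_K(Q)))\bigr] \leq \exp\!\left(\frac{s^2 (2\ell_{\max})^2}{8m}\right).
\end{align*}
Jensen applied to the convex exponential transfers this moment-generating-function bound from each $W(\sigma)$ to the convex combination $\hat{L}_K^U(Q)$. A standard Chernoff optimization then yields $\mathbb{P}[|\hat{L}_K^U(Q) - L_K(Q)| \geq t] \leq 2\exp(-mt^2/(2\ell_{\max}^2))$, and plugging $t = 2\ell_{\max}\sqrt{\log(2/\delta)/n}$ (so that $mt^2/(2\ell_{\max}^2) = \log(2/\delta)$) gives the stated U-statistic bound.

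For the V-statistic, the clean observation is the algebraic identity
\begin{align*}
\hat{L}_K^V(Q) \;=\; \frac{n-1}{n}\,\hat{L}_K^U(Q) \;+\; \frac{1}{n^2}\sum_{i=1}^n \ell_{\pi,Q}(\tau_i,\tau_i),
\end{align*}
obtained by separating diagonal from off-diagonal terms in \eqref{equ:k_vstats}. Subtracting $L_K(Q) = \frac{n-1}{n}L_K(Q) + \frac{1}{n}L_K(Q)$ and applying the triangle inequality gives
\begin{align*}
|\hat{L}_K^V(Q) - L_K(Q)| \;\leq\; \frac{n-1}{n}\,|\hat{L}_K^U(Q) - L_K(Q)| \;+\; \frac{1}{n}\Bigl|L_K(Q)\Bigr| \;+\; \frac{1}{n}\Bigl|\frac{1}{n}\sum_{i=1}^n \ell_{\pi,Q}(\tau_i,\tau_i)\Bigr|.
\end{align*}
Each of the last two deterministic terms is bounded by $\ell_{\max}/n$, contributing $2\ell_{\max}/n$. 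Combining with the high-probability U-statistic bound on the first term yields the advertised V-statistic inequality.

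The main work is the Hoeffding symmetrization for the U-statistic; the rest is a direct Chernoff/Jensen calculation and a one-line algebraic bridge from U to V. No subtlety should arise beyond keeping careful track of the constant in Hoeffding's inequality for variables of range $2\ell_{\max}$.
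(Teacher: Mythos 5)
Your proposal is correct and follows essentially the same route as the paper: Hoeffding's symmetrization of the U-statistic into an average over permutations of block sums of $m=n/2$ independent terms, Jensen plus Chernoff for the tail bound, and then the diagonal/off-diagonal decomposition with the two deterministic $\ell_{\max}/n$ terms to pass to the V-statistic. The constants all check out.
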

We include the proof in the Appendix~\ref{sec:u-concentration} for completeness. 
In addition, in the case when $Q = Q^\pi$, it can be shown that the i.i.d. assumption can be removed; see Appendix~\ref{sec:non_iid}.

The key quantity in the concentration inequalities is the upper bound $\ell_{\max}$. The lemma below shows that it can be estimated easily in practice.
\begin{restatable}[]{lemma}{boundedq}
Assume the reward function and the kernel function are bounded, i.e. $\sup_x|r(x)| \leq r_{\max}$, $\sup_{x, \bar x}|K(x, \bar x)|\leq K_{\max}$.
Then we have 
\begin{align*}
    \begin{split} 
         &\sup_{x}|Q^{\pi}(x)|\leq Q_{\max} :=   %
          \frac{r_{\max}}{1-\gamma}\,,
        \nonumber\\
    &\sup_{\tau, \bar{\tau}}|\ell_{\pi, Q^\pi}(\tau, \bar{\tau})| \leq \ell_{\max} :=   \frac{4K_{\max} r_{\max}^2}{(1 - \gamma)^2}\, .
    \end{split}
\end{align*}
\label{lem:empirical-kernel-loss-bound}
\end{restatable}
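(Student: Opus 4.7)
The plan is to establish both bounds by elementary triangle-inequality arguments combined with summing a geometric series; no deep machinery is needed since both quantities are explicit functions of $Q^\pi$, and $Q^\pi$ is itself a discounted sum of bounded rewards.

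First I would prove the bound on $\sup_x |Q^\pi(x)|$. Starting from the definition $Q^\pi(s,a) = \mathbb{E}_\pi[\sum_{t=0}^\infty \gamma^t r(s_t, a_t) \mid s_0=s, a_0=a]$, I pull the absolute value inside the expectation and the sum via Jensen/triangle inequality, apply the reward bound $|r(s_t,a_t)| \leq r_{\max}$ termwise, and sum the resulting geometric series $\sum_{t=0}^\infty \gamma^t r_{\max} = r_{\max}/(1-\gamma)$, which gives $Q_{\max} = r_{\max}/(1-\gamma)$.

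Next I would bound $\ell_{\pi, Q^\pi}(\tau, \bar\tau)$. Unpacking the definition $\ell_{\pi, Q^\pi}(\tau_i, \tau_j) = \hat{\mathcal R}_\pi Q^\pi(x_i) \, K(x_i, x_j) \, \hat{\mathcal R}_\pi Q^\pi(x_j)$, I factor the absolute value into a kernel factor, bounded by $K_{\max}$, and two empirical Bellman residual factors. For each residual factor, I write $\hat{\mathcal R}_\pi Q^\pi(x_i) = r_i + \gamma \mathbb{E}_{a' \sim \pi(\cdot|s_i')}[Q^\pi(s_i', a')] - Q^\pi(x_i)$ and apply the triangle inequality together with the first-part bound to obtain
\begin{align*}
|\hat{\mathcal R}_\pi Q^\pi(x_i)| \leq r_{\max} + \gamma Q_{\max} + Q_{\max} = \frac{2 r_{\max}}{1 - \gamma},
\end{align*}
where the last equality follows from $r_{\max} + (1+\gamma)\,r_{\max}/(1-\gamma) = 2 r_{\max}/(1-\gamma)$. (Equivalently, one can observe that $\hat{\mathcal B}_\pi Q^\pi(x_i)$ is itself a one-step-bootstrapped discounted return and hence bounded by $Q_{\max}$, so $|\hat{\mathcal R}_\pi Q^\pi(x_i)| \leq 2 Q_{\max}$.) Multiplying the kernel factor and two squared residual factors together yields $K_{\max} \cdot (2 r_{\max}/(1-\gamma))^2 = 4 K_{\max} r_{\max}^2 / (1-\gamma)^2$, as claimed.

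I do not anticipate any real obstacle: both bounds are essentially consequences of the contraction structure of discounted returns. The only care required is tracking three separate terms in the empirical Bellman residual ($r_i$, the bootstrapped $\gamma Q^\pi$, and the current $Q^\pi(x_i)$) and combining them over a common denominator so that the constant collapses to exactly $2$, giving the clean final factor of $4$ in $\ell_{\max}$.
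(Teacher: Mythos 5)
Your proposal is correct and follows essentially the same argument as the paper: bound $Q^\pi$ by summing the geometric series $\sum_t \gamma^t r_{\max}$, then bound each empirical Bellman residual factor by $r_{\max} + \gamma Q_{\max} + Q_{\max} = 2r_{\max}/(1-\gamma)$ and multiply by the kernel bound $K_{\max}$. The arithmetic collapsing the three terms to the constant $2$ matches the paper's computation exactly.
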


\section{Accountable Off-Policy Evaluation}
\label{sec:apps}
This section introduces our  approach for accoutable  off-policy evaluation.  
We start with the main {variational framework} for providing confidence bounds in Section \ref{sec:CI-OPE},  
and then discuss post-hoc diagnosis analysis and correction for existing estimators in Section \ref{sec:post-hoc-CI} and  \ref{sec:post-hoc-debias}.

\subsection{Variational Confidence Bounds for Off-Policy Evaluation}
\label{sec:CI-OPE}

\begin{algorithm*}[t]
\caption{Confidence Bounds for Off-Policy Evaluation}
\label{algo:certified}
\begin{algorithmic}
\STATE {\bfseries Input:} Off-policy data $\mathcal{D}= \{x_i, r_i, s_i^\prime\}_{i\leq i \leq n}$; maximum reward $r_{\max}$, discounted factor $\gamma$, positive definite kernel $K$,

~~~~~~~~~~~~ RKHS norm $\rho$, random feature $\Phi(\cdot)$,  failure probability $\delta$, Monte Carlo sample size $N$. %
\STATE Draw $\{x_{i,0}\}_{i=1}^N$ from the initial distribution $\mu_0 \times \pi$; 
decide $\lambda_K$ and $\lambda_\eta$ by concentration inequalities. 
\vspace{.3em}
\STATE Calculate the upper bound  $\hat{\eta}^+$ by solving \eqref{equ:rf_upper}, and the lower bound  $\hat{\eta}^-$ by  \eqref{equ:rf_lower}.
\vspace{.3em}
\STATE {\bfseries Output:} $\hat{\eta}^+, \hat{\eta}^{-}$. 
\end{algorithmic}
\end{algorithm*}
We consider the problem of providing confidence bounds for the expected reward $\eta^\pi$. To simplify the presentation, we focus on the upper bounds here, and one can derive the lower bounds with almost the same arguments. 

Let $\mathcal{F}$ be a function class that contains $Q^\pi$, that is, $Q^\pi \in \mathcal F$. 
Then we can construct an upper bound of $\eta^\pi$ 
by solving the following variational (or functional) optimization on $\mathcal{F}$: 
\begin{align}
    \eta^+ = \max_{Q\in\mathcal{F}}\Big\{  [\eta(Q)], \ \text{s.t.} \ L_{K}(Q) \leq \lambda \Big\}, ~~~~\ \forall \lambda > 0. 
\end{align}
 Here $\eta^+$ is an upper bound of $\eta^\pi$ 
  because $Q^\pi$ satisfies the condition that $Q^\pi \in\mathcal{F}$ and $L_K(Q^\pi) = 0$, and hence ${\eta}^+ \geq \eta^\pi$ follows the definition of the $\max$ operator.

In practice, we can not exactly calculate  $\eta^+$, as it involves the exact kernel loss $L_{K}(Q)$ and the exact $\eta(Q)$. However, if we
 replace  $L_{K}(Q)$ and $\eta(Q)$ with proper empirical estimates, we can derive a computationally tractable high-probability upper bound of $\eta^\pi$.

\begin{proposition}
Let $\hat{\eta}(Q)$ and $\hat{L}_{K}(Q)$ be the estimators of $\eta(Q)$ and $L_{K}(Q)$, such that for $\delta \in (0,1)$,  
\begin{align}
\begin{split} 
     \mathbb{P}\big [\eta(Q^\pi) \leq \hat{\eta}(Q^\pi) + \lambda_{\eta}\big ] & \leq 1-\delta,  \\
     \mathbb{P} \big [ \hat{L}_{K} (Q^\pi) \leq \lambda_{K} \big] & \leq 1-\delta,
    \end{split}
    \label{eq:upper-bound-condition}
\end{align}
where $\lambda_\eta$ and $\lambda_K$ are two constants. Note that \eqref{eq:upper-bound-condition} only needs to hold for $Q = Q^\pi$.
Assume $Q^\pi \in \mathcal F$. Define 
\begin{align}
    \!\!\!\!\!\!\!\hat{\eta}^+ :=  \max_{Q\in\mathcal{F}} \Big \{  [\hat{\eta}(Q)+\lambda_{\eta}], ~~ \text{s.t.} ~~ \hat{L}_{K}(Q) \leq \lambda_{K} \Big\}\,. 
    \label{eq:optimization-problem}
\end{align}
Then we have  $\hat{\eta}^+ \geq \eta^\pi$ with probability $1-2\delta$.
\end{proposition}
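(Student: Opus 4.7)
The argument is a clean union-bound calculation, so the real work is just making sure the two concentration events line up with the feasibility condition of the variational program.

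The plan is to define two "good" events and show that on their intersection, $Q^\pi$ is itself feasible for the maximization defining $\hat\eta^+$, which is enough to push the upper bound through. Concretely, let
\begin{align*}
\mathcal{E}_\eta &:= \{\eta(Q^\pi) \leq \hat\eta(Q^\pi) + \lambda_\eta\}, \\
\mathcal{E}_K &:= \{\hat L_K(Q^\pi) \leq \lambda_K\}.
\end{align*}
By hypothesis \eqref{eq:upper-bound-condition}, each of $\mathcal E_\eta$ and $\mathcal E_K$ holds with probability at least $1-\delta$, so a union bound gives $\mathbb{P}(\mathcal{E}_\eta \cap \mathcal{E}_K) \geq 1 - 2\delta$. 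I would state this explicitly as the first step.

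Next, I would argue that on $\mathcal{E}_\eta \cap \mathcal{E}_K$, the true value function $Q^\pi$ is a feasible point for the empirical program \eqref{eq:optimization-problem}: the assumption $Q^\pi \in \mathcal{F}$ gives membership in the function class, and $\mathcal{E}_K$ gives the constraint $\hat L_K(Q^\pi) \leq \lambda_K$. Hence, by the definition of $\hat\eta^+$ as a maximum over feasible $Q$,
\begin{align*}
\hat \eta^+ \;\geq\; \hat\eta(Q^\pi) + \lambda_\eta.
\end{align*}
Finally, invoking $\mathcal{E}_\eta$, we bound the right-hand side below by $\eta(Q^\pi) = \eta^\pi$, which yields $\hat\eta^+ \geq \eta^\pi$ on $\mathcal{E}_\eta \cap \mathcal{E}_K$, and therefore with probability at least $1-2\delta$.

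There is essentially no obstacle here beyond bookkeeping: the proposition is a high-level template that reduces the statistical problem to verifying the two concentration hypotheses \eqref{eq:upper-bound-condition}. The genuinely substantive work lives elsewhere, namely in choosing the estimators $\hat\eta$ and $\hat L_K$ (the kernel Bellman V-statistic being the natural candidate for the latter, with $\lambda_K$ supplied by Proposition~\ref{thm:non-asymptotic-bound} and Lemma~\ref{lem:empirical-kernel-loss-bound}) and, more importantly, in turning the abstract program \eqref{eq:optimization-problem} into something computationally tractable via an RKHS-ball restriction on $\mathcal{F}$. I would flag that only the feasibility of $Q^\pi$ matters for the inequality, which is why \eqref{eq:upper-bound-condition} is only required at the single point $Q = Q^\pi$ rather than uniformly over $\mathcal{F}$; this is worth remarking in the proof because it is what keeps the confidence-level loss at $2\delta$ rather than suffering a union bound over the whole class.
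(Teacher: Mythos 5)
Your proof is correct and follows essentially the same route as the paper's: union-bound the two concentration events, observe that on their intersection $Q^\pi$ is feasible for the program defining $\hat\eta^+$, and chain $\eta^\pi = \eta(Q^\pi) \leq \hat\eta(Q^\pi) + \lambda_\eta \leq \hat\eta^+$. (You also correctly read the probabilities in \eqref{eq:upper-bound-condition} as ``at least $1-\delta$,'' which is clearly the intended meaning despite the inequality direction as typeset.)
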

\begin{proof}
From the assumption, with probability at least $1-2\delta$, we have both  $\eta(Q^\pi) \leq \hat{\eta}(Q^\pi) + \lambda_{\eta}$ and $\hat{L}_{K} (Q^\pi) \leq \lambda_{K}.$ 
In this case, $Q^\pi$ belongs the feasible set of the optimization in \eqref{eq:optimization-problem}, i.e., 
$Q^\pi\in \mathcal{F}$, $\hat{L}_{K}(Q^\pi)\leq \lambda_{K}$. 
Therefore, we have 
 $\eta^\pi = \eta(Q^\pi) \leq \hat{\eta}(Q^\pi) + \lambda_{\eta} \leq \hat{\eta}^{+}$ by the definition of the $\max$ operator. 
\end{proof}
We can use the kernel Bellman V-statistics  
in Proposition \ref{thm:non-asymptotic-bound} 
to construct $\hat{L}_K(Q^\pi)$ and $\lambda_K$. 
For $\eta(Q^\pi)$, we use the Monte Carlo estimator $\hat \eta (Q^\pi)$ in \eqref{equ:xxhatQ}   
and and set $\lambda_{\eta} =  Q_{\max} \sqrt{2\log(2/\delta)/N}$ by Hoeffding inequality.

\subsection{Optimization in Reproducing Kernel Hilbert Space}
\label{sec:main_rkhs}
To provide tight high confidence bounds,
we need to choose $\mathcal{F}$ to be a function space that is both simple and  rich enough to contain $Q^{\pi}$.
Here we choose $\mathcal{F}$ to be a reproducing kernel Hilbert space  (RKHS) $\mathcal{H}_{\Kf}$ induced by a positive kernel $\Kf(x,\bar x)$.  We should distinguish $\Kf$ with the kernel $K$ used in kernel Bellman loss. 
Using 
RKHS allows us to incorporate a rich, infinite dimensional  set of functions, and still obtain computationally tractable solution for the optimization in \eqref{eq:optimization-problem}.  

\begin{proposition}\label{pro:full_rkhs}
For RKHS $\mathcal{H}_{\Kf}$ with kernel $\Kf$, define  
\begin{align} \label{equ:rkhsf}
\mathcal F = \{Q \in \H_{\Kf} \colon  \norm{Q}_{ \mathcal{H}_{\Kf}}^{2} \leq \rho\},
\end{align}
where $\rho$ is a positive constant.  
Using the $\mathcal F$ in \eqref{equ:rkhsf}, the optimization solution of \eqref{eq:optimization-problem} 
can be expressed as 
\begin{align}\label{equ:Qform}
    Q(\cdot) = \sum_{i=0}^{n} \alpha_{i}f_{i}(\cdot).
\end{align}
 Here 
  $\alpha = [\alpha_i]_{i=0}^n$ are the coefficients decided by the optimization problem, and 
 $f_{0}(\cdot) = \ \E_{x \sim \mu_{0} \times \pi}\left[\Kf(\cdot, x)\right]$,  
\begin{align*}
& f_i(\cdot) = \Kf(\cdot, x_i) - \gamma \E_{a_i^\prime\sim \pi(\cdot|s_i^\prime)}[\Kf(\cdot, x_i^\prime)],     ~~~ i = 1,\ldots, n, 
\end{align*}
where $x_i^\prime = (s_i', a_i')$, with $s_i'$ the observed next state following $x_i=(s_i,a_i)$ and $a_i'$ randomly drawn from $\pi(\cdot | s_i^\prime)$. 

In addition, 
for $Q$ of form \eqref{equ:Qform}, 
$\hat \eta(Q)$ is a linear function of $\alpha$, and both  $\hat L_K(Q)$ and $\norm{Q}_{\H_{\Kf}}^2$ are convex quadratic functions of $\alpha$. 
In particular, we have $\norm{Q}_{\H_{\Kf}}^2 = \alpha^\top B \alpha$, where $B=[B_{ij}]$ is a $(n+1)\times (n+1)$ matrix with $B_{ij} = \langle f_i, f_j\rangle_{\H_{\Kf}}$. 
Therefore, the optimization in \eqref{eq:optimization-problem} reduces to an optimization on $\alpha$ with linear objective and convex quadratic constraints, 
\begin{align*} %
\max_{\alpha^\top B \alpha \leq \rho^2} \Big \{  [c^\top \alpha  + \lambda_{\eta}], %
    ~~~\text{s.t.} ~~~ (\alpha-b)^\top A (\alpha-b)  \leq \lambda_K\Big \}, 
\end{align*} 
where $A,B$ are two $(n+1)\times (n+1)$ positive definite matrices and $b,c$ two $(n+1)\times 1$ vectors, whose definition  
can be found in 
 Appendix \ref{sec:appendix_random_feature}. %
\end{proposition}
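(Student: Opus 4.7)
The plan is a representer-theorem argument followed by routine Hilbert-space/linear-algebra computations. The whole point is that although $\mathcal{F}$ is infinite-dimensional, the objective and constraints of \eqref{eq:optimization-problem} see $Q$ only through finitely many linear functionals $\langle Q, f_i\rangle_{\mathcal{H}_{\Kf}}$, so the optimum must lie in the span of those representers.

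First I would establish the functional form \eqref{equ:Qform}. By the reproducing property $Q(x) = \langle Q, \Kf(\cdot,x)\rangle_{\mathcal{H}_{\Kf}}$, the empirical Bellman residual evaluated at $x_i$ can be rewritten as
\begin{align*}
\hat{\mathcal{R}}_\pi Q(x_i)
= r_i + \gamma\,\E_{a_i'\sim \pi(\cdot|s_i')}\langle Q, \Kf(\cdot,x_i')\rangle_{\mathcal{H}_{\Kf}} - \langle Q, \Kf(\cdot,x_i)\rangle_{\mathcal{H}_{\Kf}}
= r_i - \langle Q, f_i\rangle_{\mathcal{H}_{\Kf}},
\end{align*}
for $i=1,\dots,n$, and $\hat\eta(Q)$ is an average of evaluation functionals, which likewise takes the form $\langle Q, \tilde f_0\rangle_{\mathcal{H}_{\Kf}}$ with $\tilde f_0 = \tfrac{1}{N}\sum_{i=1}^N \Kf(\cdot,x_{i,0})$ in the Monte-Carlo version, or $\tilde f_0 = f_0 = \E_{\mu_0\times\pi}[\Kf(\cdot,x)]$ in the $N\!\to\!\infty$ limit used to state the proposition. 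In either case, decomposing $Q = Q_{\parallel} + Q_{\perp}$ with $Q_{\parallel}$ in the span of $\{f_0,f_1,\dots,f_n\}$ and $Q_{\perp}$ orthogonal to that span, one sees that $\hat{\mathcal{R}}_\pi Q$, $\hat L_K^V(Q)$ and $\hat\eta(Q)$ all agree with their values on $Q_{\parallel}$, while $\|Q_{\parallel}\|_{\mathcal{H}_{\Kf}}^2 \leq \|Q\|_{\mathcal{H}_{\Kf}}^2$ by Pythagoras. Hence an optimum can always be chosen in the $(n{+}1)$-dimensional span, proving \eqref{equ:Qform}.

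Next I would substitute $Q = \sum_{i=0}^n \alpha_i f_i$ into each quantity. With $B_{ij} = \langle f_i, f_j\rangle_{\mathcal{H}_{\Kf}}$, we get $\langle Q, f_j\rangle_{\mathcal{H}_{\Kf}} = (B\alpha)_j$, so
\begin{align*}
\|Q\|_{\mathcal{H}_{\Kf}}^2 = \alpha^\top B \alpha,
\qquad
\hat\eta(Q) = (B\alpha)_0 =: c^\top \alpha,
\end{align*}
the former convex quadratic (since $B\succeq 0$) and the latter linear in $\alpha$. For the kernel Bellman V-statistic, the identity $\hat{\mathcal{R}}_\pi Q(x_i) = r_i - (B\alpha)_i$ gives
\begin{align*}
\hat L_K^V(Q) = \tfrac{1}{n^2}(r - B_{1:n,:}\alpha)^\top \mathbf{K}\,(r - B_{1:n,:}\alpha),
\end{align*}
where $\mathbf{K}_{ij}=K(x_i,x_j)$ and $r=(r_1,\dots,r_n)^\top$. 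Since $\mathbf{K}\succeq 0$, this is a convex quadratic in $\alpha$, and completing the square casts it in the form $(\alpha-b)^\top A (\alpha-b) + \text{const}$, with $A = \tfrac{1}{n^2} B_{1:n,:}^\top \mathbf{K}\, B_{1:n,:}$ and $b$ any solution of $A b = \tfrac{1}{n^2} B_{1:n,:}^\top \mathbf{K} r$; the constant can be absorbed into $\lambda_K$. Plugging everything back into \eqref{eq:optimization-problem} yields exactly the quadratically constrained linear program displayed at the end of the proposition.

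The conceptual step is the representer reduction, which I expect to be easy once the identity $\hat{\mathcal{R}}_\pi Q(x_i) = r_i - \langle Q, f_i\rangle_{\mathcal{H}_{\Kf}}$ is in place. The main obstacle is only bookkeeping: one must verify that the ``constant'' from completing the square in $\hat L_K^V$ is indeed independent of $\alpha$ (so that it can be folded into the threshold $\lambda_K$), and that the resulting $A$ is positive semidefinite and the linear system for $b$ is consistent, which follows because $A$ inherits its range from $B_{1:n,:}^\top \mathbf{K}$. Once these are checked, convexity of the feasible set and linearity of the objective make the final problem a standard QCLP.
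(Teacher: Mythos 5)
Your proposal is correct and takes essentially the same route as the paper: both hinge on the orthogonal decomposition $Q = Q_{\parallel} + Q_{\perp}$ with respect to the span of $\{f_0,\dots,f_n\}$, observing that $\hat\eta$, $\hat L_K^V$ and the residuals depend on $Q$ only through $\langle Q, f_i\rangle$ while the norm only decreases, followed by the same substitution and completion of the square. The only presentational difference is that the paper runs the decomposition through the Lagrangian of the constrained problem rather than arguing directly on the feasible set as you do, which changes nothing essential.
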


\begin{algorithm*}[t]
\caption{Post-hoc Diagnosis for Existing Estimators}
\label{algo:correction}
\begin{algorithmic}
\STATE {\bfseries Input:} Off-policy data $\mathcal{D}= \{x_i, r_i, s_i^\prime\}_{i\leq i \leq n}$; maximum reward $r_{\max}$, discounted factor $\gamma$, positive definite kernel $K$,

~~~~~~~~~~~~  RKHS norm $\rho$, random feature $\Phi(\cdot)$, fail probability $\delta$, Monte Carlo sample size $N$, existing estimation $\widehat{Q}$.
\vspace{.3em}
\STATE Draw $\{x_0^{i}\}_{i=1}^N$ from the initial distribution $\mu_0 \times \pi$; 
Decide $\lambda_\eta$ and $\lambda_K$ by concentration inequalities.

\vspace{.5em}
\STATE Calculate the upper bound  $\hat{\eta}^+$ by solving \eqref{eq:post-rf-upper}, the 
lower bound  $\hat{\eta}^-$ by  \eqref{eq:post-rf-lower}, and 
the debias function  $Q_{\rm{debias}}$ by   \eqref{eq:debias-problem}.

\vspace{.5em}
\STATE {\bfseries Output:} The upper and lower bounds $\hat{\eta}^{+}$, $\hat{\eta}^{-}$, and $Q_{\rm{debias}}(x) = \theta^\top \Phi(x)$. 
\end{algorithmic}
\end{algorithm*}

\paragraph{Random Feature Approximation} 
Unfortunately, 
solving the programming in Proposition~\ref{pro:full_rkhs} requires an  $O((n+1)^3)$ time complexity and 
is too slow when the data size $n$ is very large.  
We can speed up the computation by using random feature approximations. 
The idea is that any positive definite kernel can be expressed as 
$\Kf(x, y) = \int_{\mathcal{W}}\phi(x, w)\phi(y, w)d\mu(w)$, 
where $\phi(x, w)$ denotes a feature map indexed by a parameter $w$ in some space $\mathcal W$ and $\mu$ is a measure on $\mathcal W$. 
A typical example of this  is the random Fourier expansion of stationary kernels by Bochner's theorem  \citep{rahimi2007random}, in which $\phi(x, w) = \cos(w^\top [x,1])$. %

To speed up the computation, 
we draw i.i.d. $\{w_i\}_{i=1}^n$ from $\mu$ and 
take 
\begin{align}\label{equ:dfdkf}
\Kf(x, y) = \frac{1}{m}\sum_{k=1}^{m}\phi(x, w_k)\phi(y, w_k)\,. 
\end{align}
Then one can show that any function $Q$ in the RKHS of  $\Kf$ in \eqref{equ:dfdkf} can be expressed as 
$$
Q(x) = \theta^\top \Phi(x), 
$$
where $\Phi(x) = [\phi(x, w_i)]_{i=1}^m$, with $\norm{Q}_{\mathcal{H}_{\Kf}}^2 = m \norm{\theta}_2^2$, where $\norm{\cdot }_2$ is the typical $L_2$ norm on $\RR^m$. 

 Given the off-policy dataset $\mathcal{D} = \{x_i, r_i, s_i'\}$ with $x_i =(s_i, a_i)$, and an i.i.d. sample 
 $\{x_{i,0}\}_{i= 1}^N$ with $x_{i,0} = (s_{i,0}, a_{i,0})$ 
  from the initial distribution $\mu_{0} \times \pi$. 
The optimization in \eqref{eq:optimization-problem} can be shown to reduce to 
\begin{align}
    \begin{split} 
    &\hat\eta^+= \max_{\|\theta\|_2^{{2}} \leq \rho/m} \Big \{  \left[{{c_0}^\top \theta} + \lambda_{\eta}\right]\,, \\
    &~~~~~~~~~~~~\text{s.t.} ~~ \left(Z\theta - v\right)^\top M \left(Z\theta - v\right) \leq  \lambda_{K}  \Big\}\,,
    \end{split}
    \label{equ:rf_upper}
\end{align}
where 
$v = [r_i]_{i=1}^n \in \mathbb{R}^{n \times 1}$, 
$M \in \mathbb{R}^{n \times n}$ with $M_{ij}   = [K(x_i, x_j)/n^2]_{ij}$, 
 $c_{0}  = \sum_{i=1}^{N} \Phi(x_{i,0})/N \in \mathbb{R}^{m \times 1}$,  and 
\begin{align*}
     Z = \left[~\Phi(x_i) - \gamma  \E_{a_i^\prime \sim \pi(\cdot | s_i^\prime)}\left [ \Phi(x'_i) \right] ~\right]_{i=1}^n  
     \in \mathbb{R}^{n \times m}, 
\end{align*}
with $x_i' = [s_i', a_i']$.
The expectation in $Z$ can be approximated by Monte Carlo sampling from $\pi(\cdot|s_i')$. 

Similarly, we can get the lower confidence bounds via
\begin{align}
    \begin{split} 
    &\hat\eta^-=  \min_{\|\theta\|_2^{{2}} \leq \rho/m} \Big \{  \left[c_0^\top \theta - \lambda_{\eta}\right]\,,\\
&~~~~~~~~~~~~\text{s.t.} ~~ \left(Z\theta - v\right)^\top M \left(Z\theta - v \right) \leq  \lambda_{K}  \Big\}\,.  
    \end{split} 
    \label{equ:rf_lower}
\end{align}
Compared with the programming in Proposition \eqref{pro:full_rkhs}, the optimization problems in \eqref{equ:rf_upper}-\eqref{equ:rf_lower} 
have lower dimensions and is hence much faster when $m \ll n$, since the dimension of $\theta$ is $m$, while that of $\alpha$ is $n+1$.  
We describe the detailed procedure for obtaining the upper and lower confidence bounds in Algorithm \ref{algo:certified}.

\subsection{Post-hoc Confidence Bound of Existing Estimators}
\label{sec:post-hoc-CI}
We extend our method to provide post-hoc confidence bounds 
around existing estimators provided by users.  
Given an existing estimator $\widehat{Q}$ of the Q-function, 
we denote by $Q_{\text{res}}:= Q^\pi - \widehat{Q}$  the difference  between the ground truth $Q^\pi$ and the prior estimate $\widehat{Q}$. 
Assuming $Q_{\text{res}}$ belongs to $\mathcal F$, we obtain an upper bound by  
\begin{align*}
    \max_{Q_{\text{res}}\in\mathcal{F}} \Big \{  [\hat{\eta}(Q_{\text{res}} + \widehat{Q})+\lambda_{\eta}], ~~ \text{s.t.} ~~ \hat{L}_{K}(\widehat{Q} + Q_{\text{res}}) \leq \lambda_{K} \Big\}\,. 
\end{align*}
This can be viewed as a special case of \eqref{eq:optimization-problem} but with the function space anchored around the existing estimator $\widehat Q$. 

Similar to the earlier case, 
in the case of random feature approximation, the optimization reduces to 
\begin{align}
    \hat{\eta}^+ =  &\max_{\|\theta\|_2 ^{2} \leq \rho/m} \Big \{  \left[{\hat{\eta}(\widehat{Q}) + c_0^\top \theta} + \lambda_{\eta} \right]\,,\nonumber\\
    &~~~~~~\text{s.t.} ~~ \left(Z\theta - \zeta\right)^\top M \left(Z\theta - \zeta\right) \leq\lambda_{K}  \Big\}\,,  \label{eq:post-rf-upper}
\end{align}
where $c_0, Z, M$ are defined as before, $\hat{\eta}(\widehat Q) =\sum_{i=1}^{N}\widehat{Q}(x_{i, 0})/N $, and $\zeta \in \mathbb{R}^{n \times 1}$ is defined to be the TD error vector of $\widehat{Q}(x)$  evaluated at the dataset $\mathcal{D}$; that is, $\zeta = [\zeta_i]_{i=1}^n$ with 
$$
\zeta_{i} = r_i + 
\gamma \E_{a_i'\sim \pi(\cdot|s_i')}\widehat{Q}(s_i^\prime, a_i^\prime) - \widehat{Q}(x_i). 
$$  
The post-hoc lower  bound follows a similar form: 
\begin{align}
    \hat{\eta}^- =  &\min_{\|\theta\|_2^{{2}} \leq \rho/m} \Big \{  \left[{\hat{\eta}(\widehat{Q}) + c_0^\top \theta} - \lambda_{\eta}\right]\,,\nonumber\\
    &~~~~~~\text{s.t.} ~~ \left(Z\theta - \zeta\right)^\top M \left(Z\theta - \zeta\right) \leq  \lambda_{K}  \Big\}\,. \label{eq:post-rf-lower}
\end{align}

\newcommand{\penlen}{.205\linewidth}
\newcommand{\cgapline}{-.026\linewidth}

\begin{figure*}[t]
    \centering
     \begin{tabular}{cccc}
        \multicolumn{4}{c}{
        \includegraphics[width=.95\textwidth]{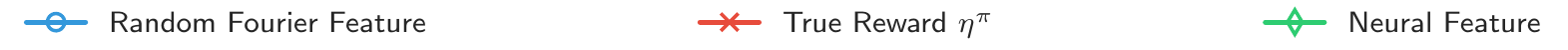}}\\
        \hspace{\cgapline}
        \raisebox{4.1em}{\rotatebox{90}{\small{Rewards}}}\includegraphics[height=\penlen]{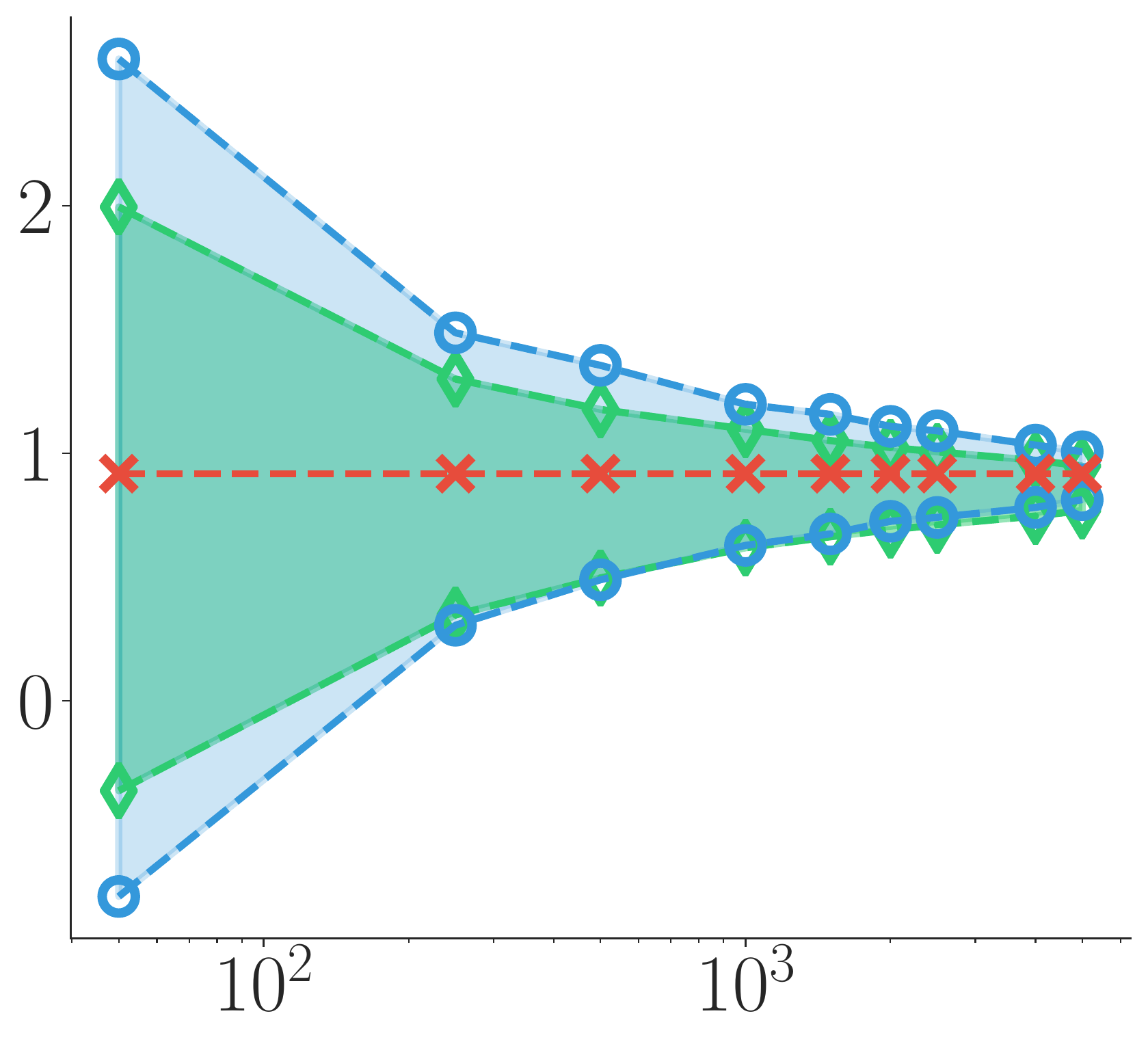}\hspace{-0.1em}&
        \hspace{\cgapline}
        \includegraphics[height=\penlen]{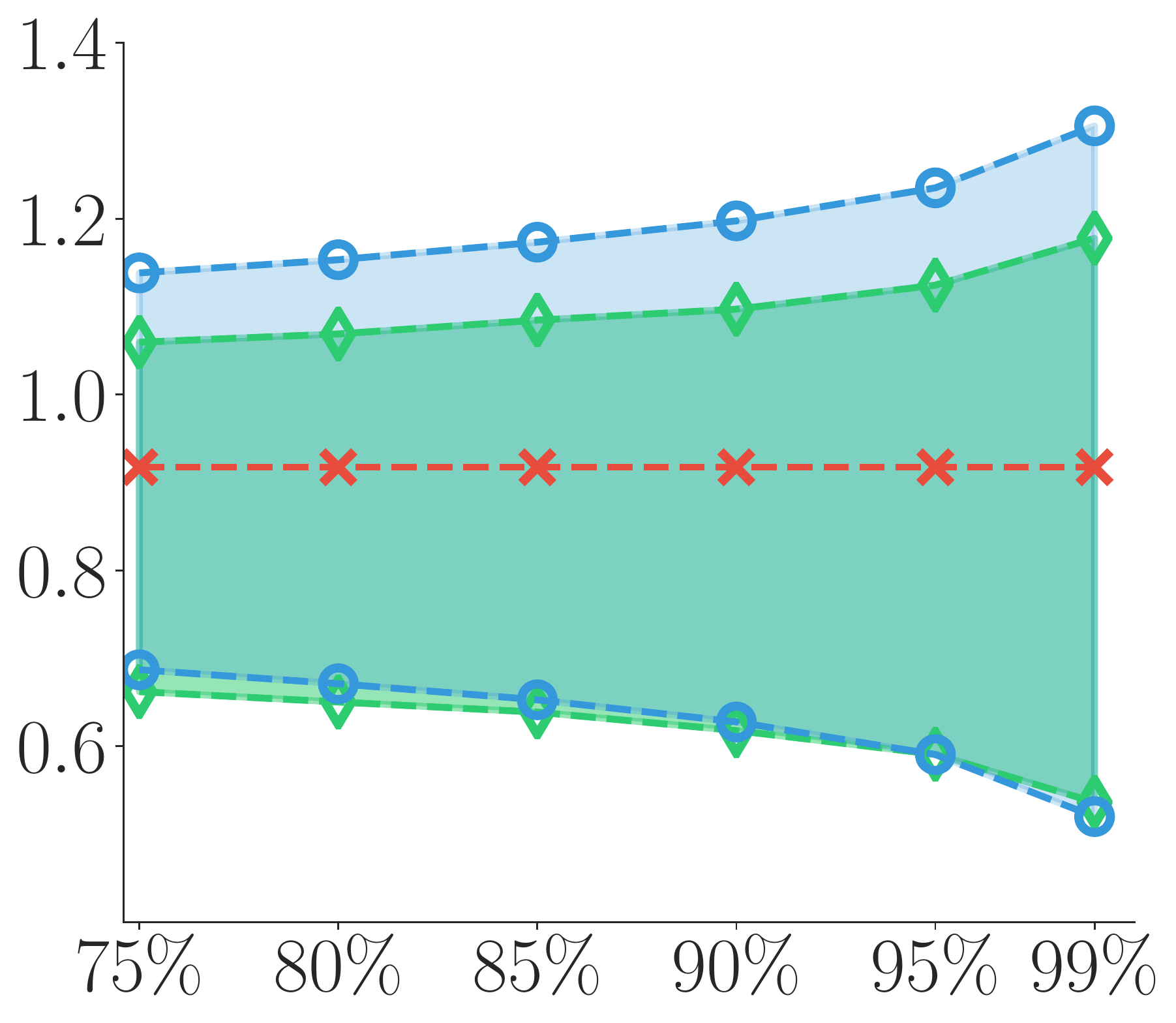}&
        \hspace{\cgapline}
        \includegraphics[height=\penlen]{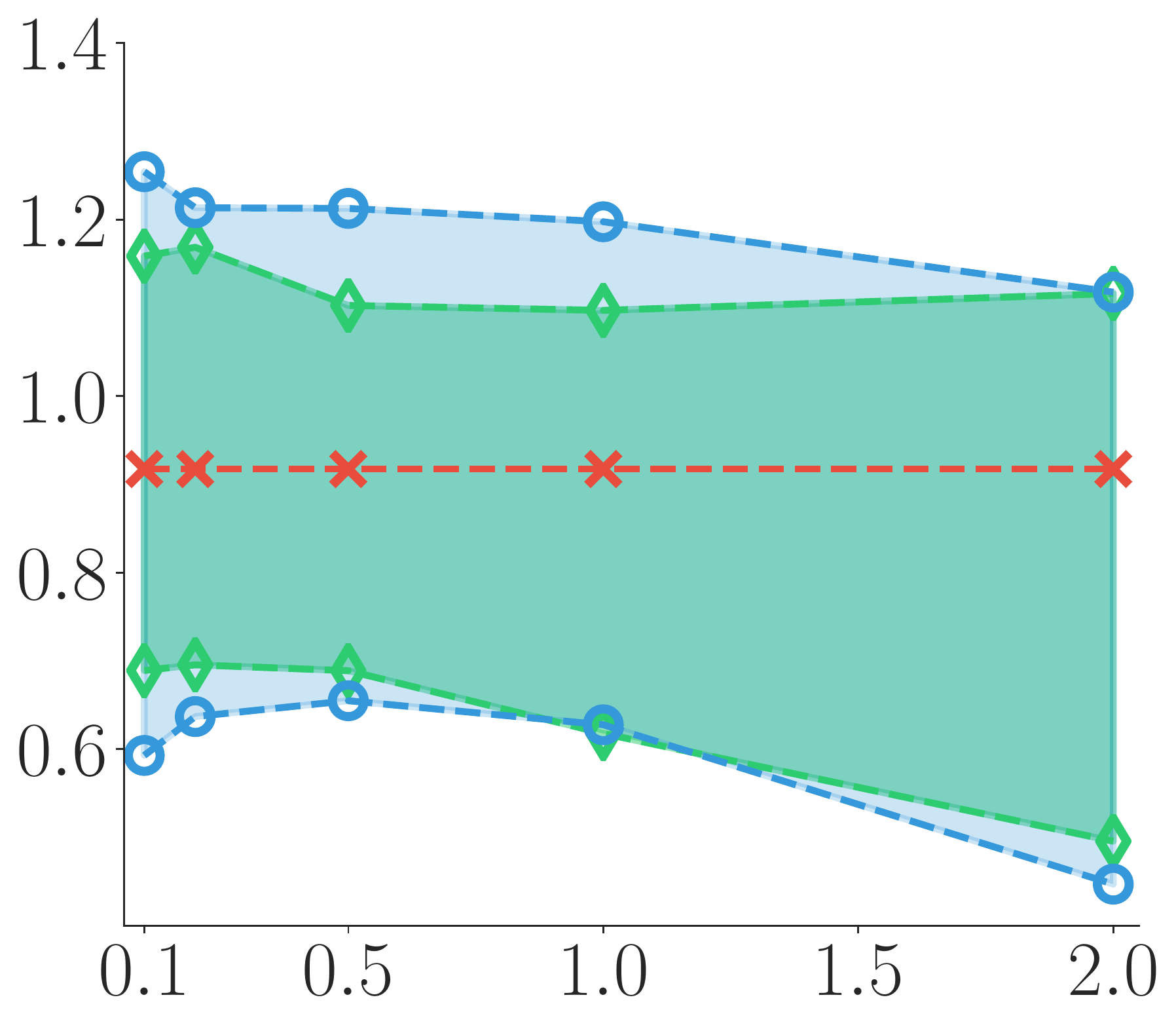}&
        \hspace{\cgapline}
         \includegraphics[height=\penlen]{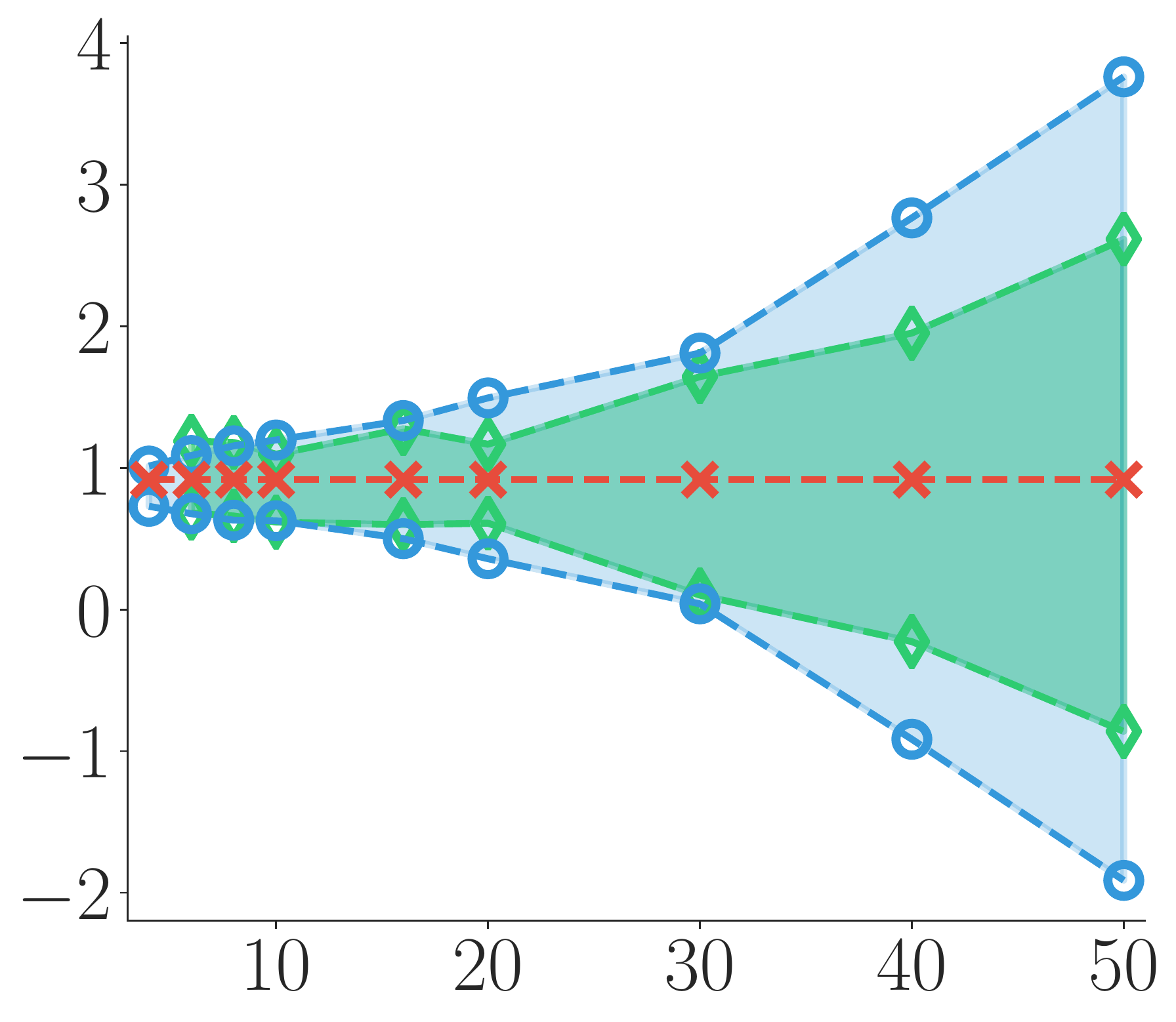}\\
        (a) \small{Number of transitions}& (b) $1 - \delta$ & (c) \small{Temperature} & (d) \small{Number of features} \\
        
        \raisebox{4.1em}{\rotatebox{90}{\small{Rewards}}}\includegraphics[height=\penlen]{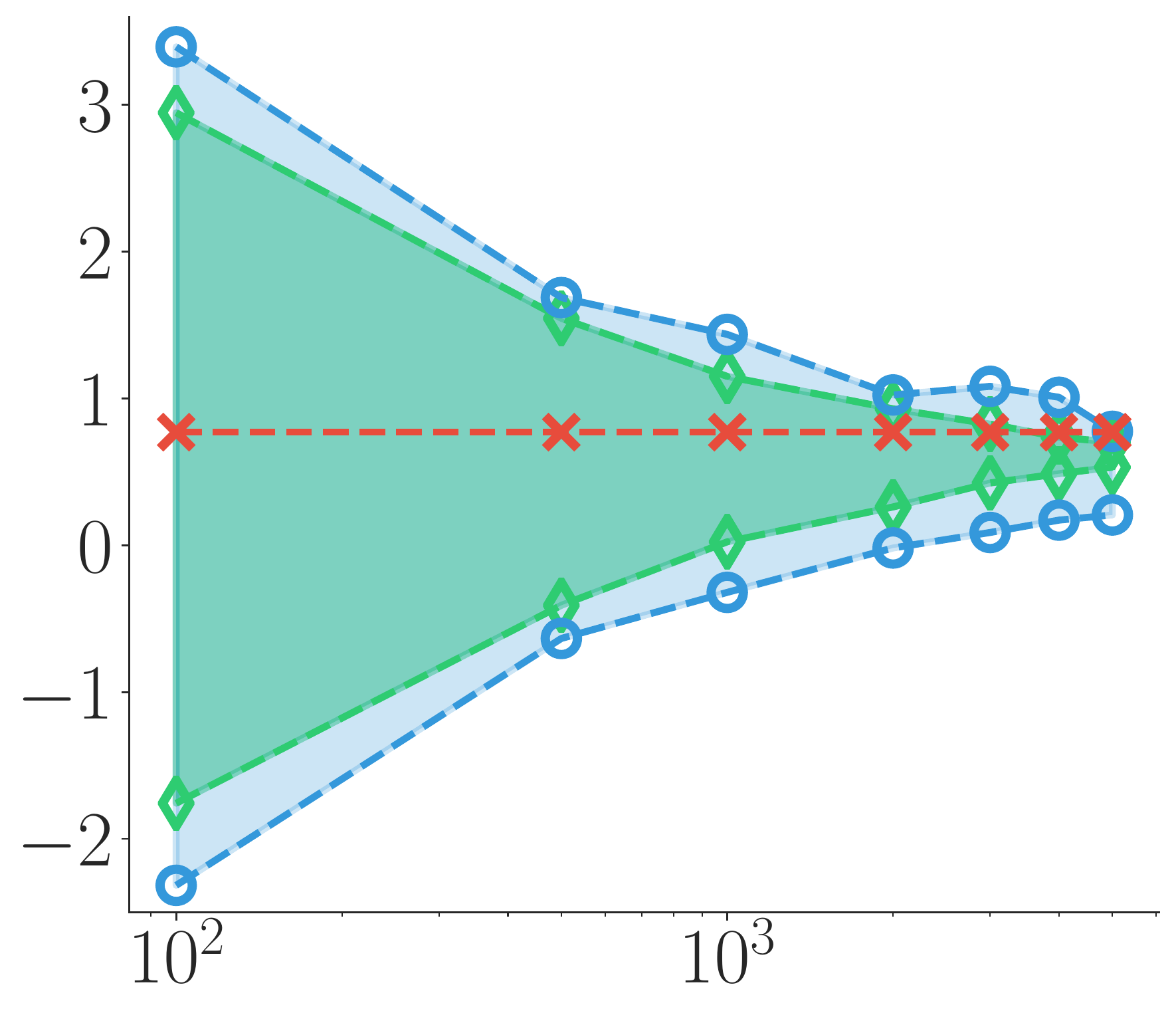}\hspace{-0.1em}&
        \hspace{\cgapline}
        \includegraphics[height=\penlen]{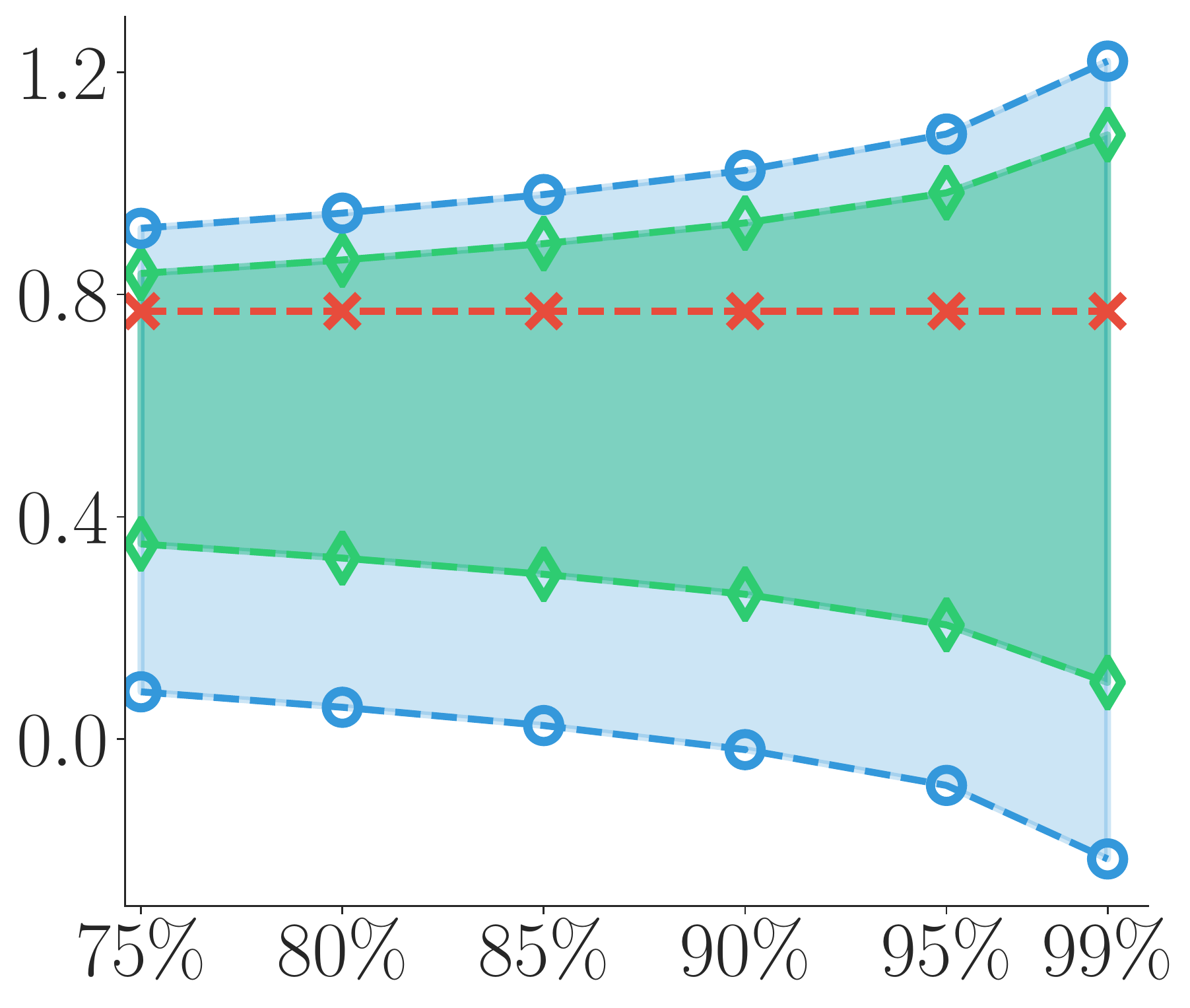}&
        \hspace{\cgapline}
        \includegraphics[height=\penlen]{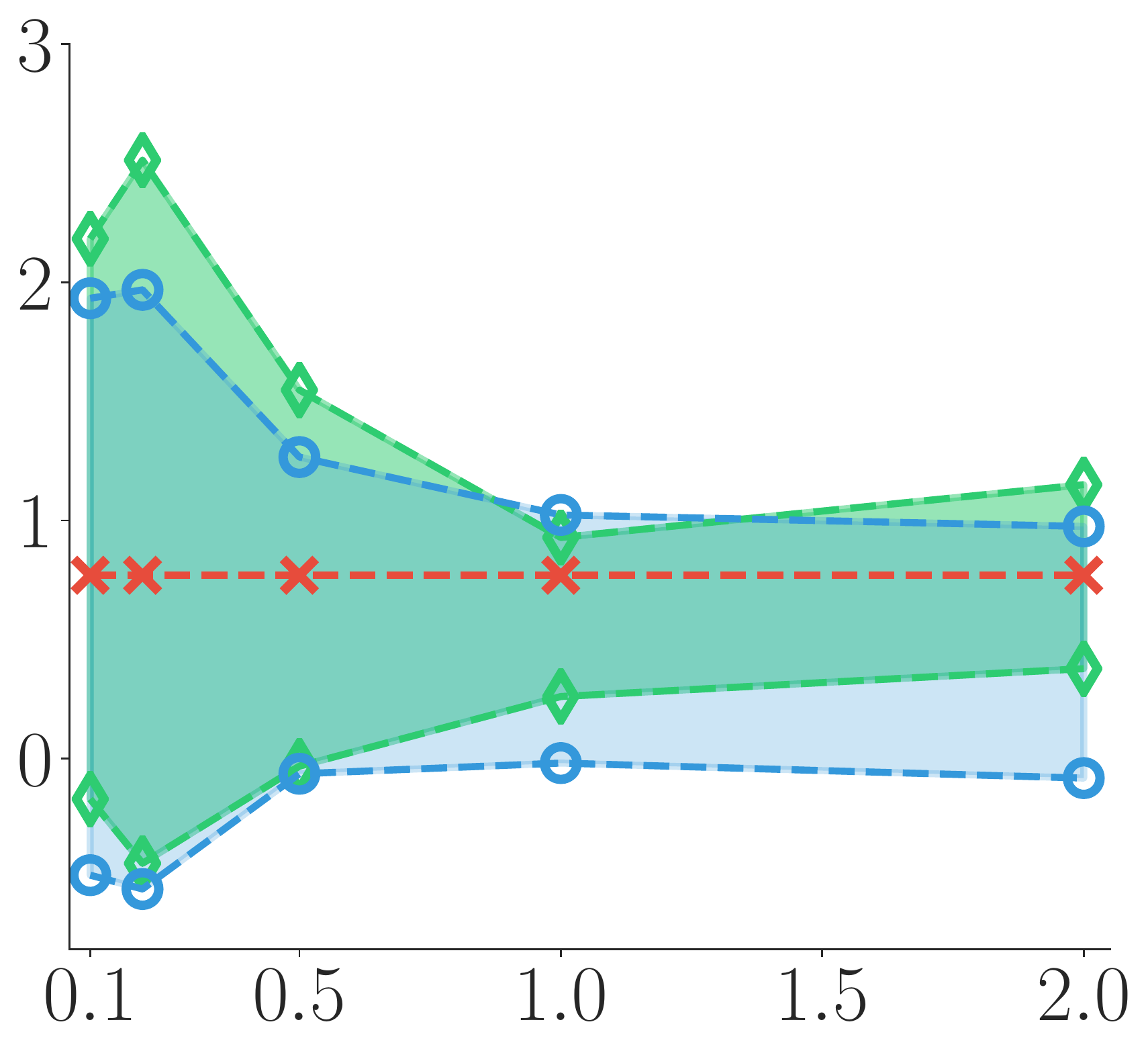}&
        \hspace{-.027\linewidth}
         \includegraphics[height=0.20\linewidth]{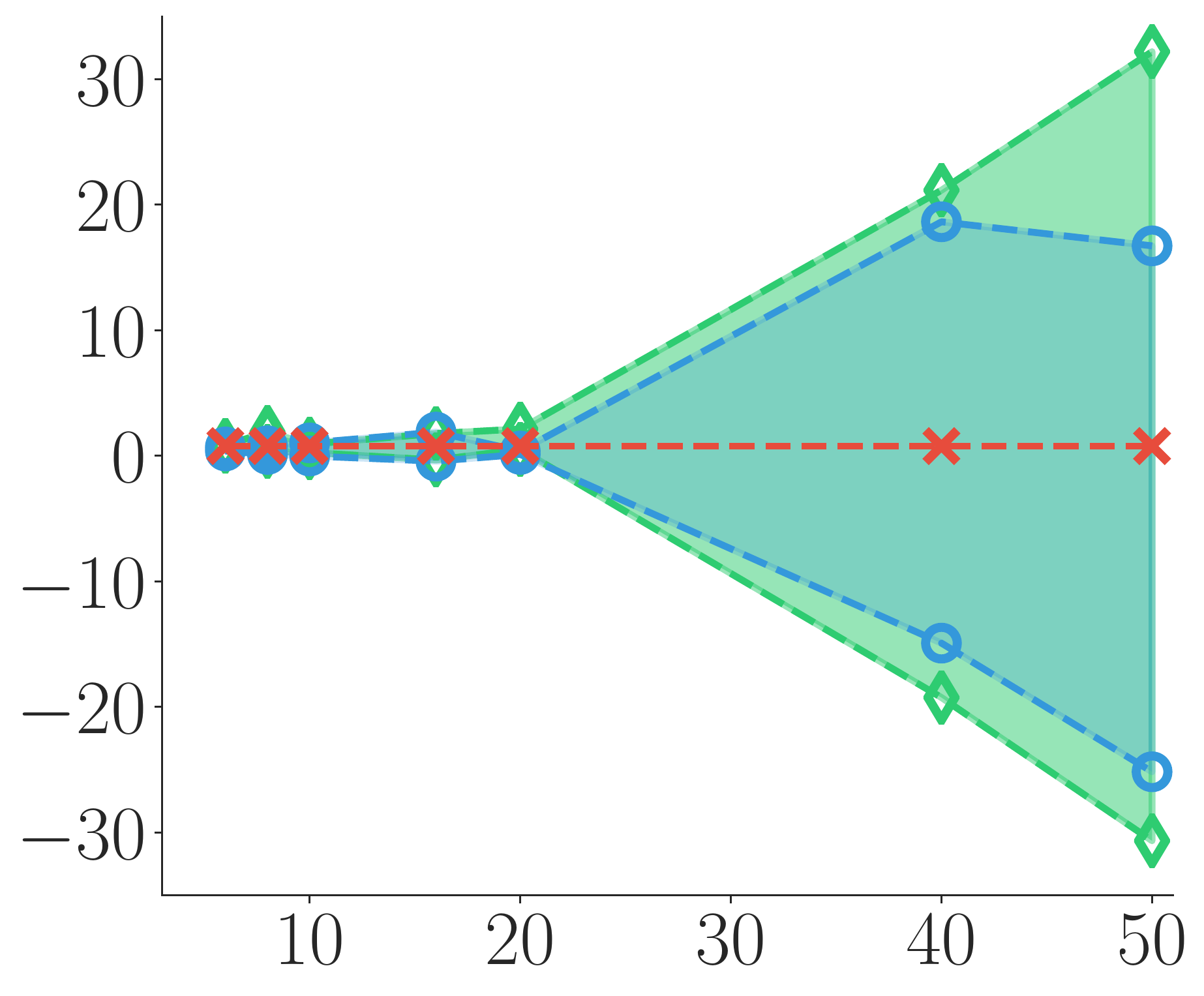}\\
        (e) \small{ Number of transitions}&(f) \small{ $1 - \delta$ }& (g) \small{ Temperature} &(h) \small  Number of features\\
    \end{tabular}
    \caption{ Results on Inverted-Pendulum (a)-(d) and Puck-Mountain (e)-(h).
    The plots show the confidence bounds 
    on the average discounted reward as we 
    vary the number of transitions $n$ in (a) \& (e), failure probability $\delta$  in (b) \& (f), the temperature parameter $\tau$ of the behavior policy in (c) \& (g), and the number of features in (d) \& (h).
    The default parameters (when not varied) are: discounted factor $\gamma=0.95$;  horizon length $T=50$ for Inverted-Pendulum and $T=100$ for Puck-Mountain; number of episodes $20$; failure probability $\delta=0.10$; temperature of the behavior policy $\tau = 1$; and the feature dimension 10.
    }
    \label{fig:pendulum}
\end{figure*}

\subsection{Post-hoc Correction of Existing Estimators}
\label{sec:post-hoc-debias}
In addition to providing confidence bounds around  the existing estimator $\widehat Q$, we may want to further correct $\widehat{Q}$  
when $\widehat{Q}$ is identified to admit large error. 
The post-hoc correction should ensure the corrected estimation falls into the confidence bounds provided earlier, while keeping the information in $\widehat{Q}$ as much as possible.

Our idea is to correct $\widehat{Q}$ by adding a  debiasing term $Q_{\rm{debias}}$, such that $\hat L_{K}^{V}(\widehat{Q} + Q_{\rm{debias}}) \leq \lambda_K$, while keeping $Q_{\rm{debias}}$ as small as possible. This is framed into 
$$
\min_{Q_{\rm{debias}} \in \mathcal F}  \left \{ 
\norm{Q_{\rm{debias}}}_{\H_{\Kf}}^2 ~~\text{s.t.}~~
\hat L_K(Q + Q_{\rm{debias}}) \leq \lambda_K 
\right\}. 
$$
We should distinguish $Q_{\rm{debias}}$ with the $Q_{\rm{res}}$ in Section~\ref{sec:post-hoc-CI}, and $Q_{\rm{res}}$ can not be used for the debiasing purpose because it is designed to give an extreme estimation (for providing the bound), rather than a minimum correction.

In the case of random features approximation, the optimization reduces to
\begin{align}
    &%
    \theta^{\star} =\argmin_{\theta} \Big\{\|\theta\|_2^{2}, \nonumber \\
    &\quad\quad\quad~~~\text{s.t.}~~~ \left(Z\theta - \zeta\right)^\top M\left(Z\theta - \zeta\right) \leq \lambda_{K} \Big\}\,,  \label{eq:debias-problem}
\end{align}
and $Q_{\rm{debias}}(x) = {\theta^{\star}}^{\top}\Phi(x)$. 
If the existing estimator $\widehat{Q}$ already satisfies $\hat{L}_{K}^{V}(\widehat{Q}) \leq \lambda_{K}$,   this provides a zero correction (i.e., $\theta^{\star} = {0}$), since the estimator is already sufficiently accurate. %
This procedure is summarized in Algorithm \ref{algo:correction}.

\newcommand{\delen}{.24\linewidth}
\newcommand{\degapline}{0.0\linewidth}

\begin{figure*}[t]
    \centering
     \begin{tabular}{ccc}
        \multicolumn{3}{c}{
        \includegraphics[width=.85\linewidth]{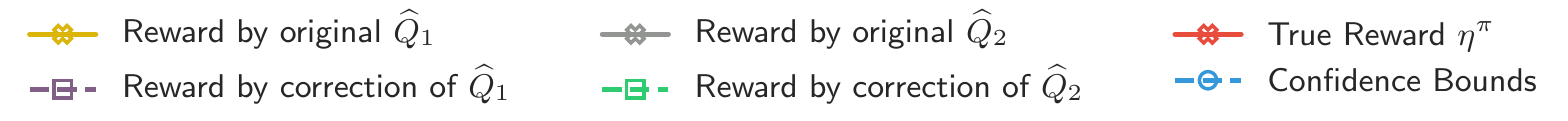}}\\
        \raisebox{5.5em}{\rotatebox{90}{\small{Rewards}}}\includegraphics[height=\delen]{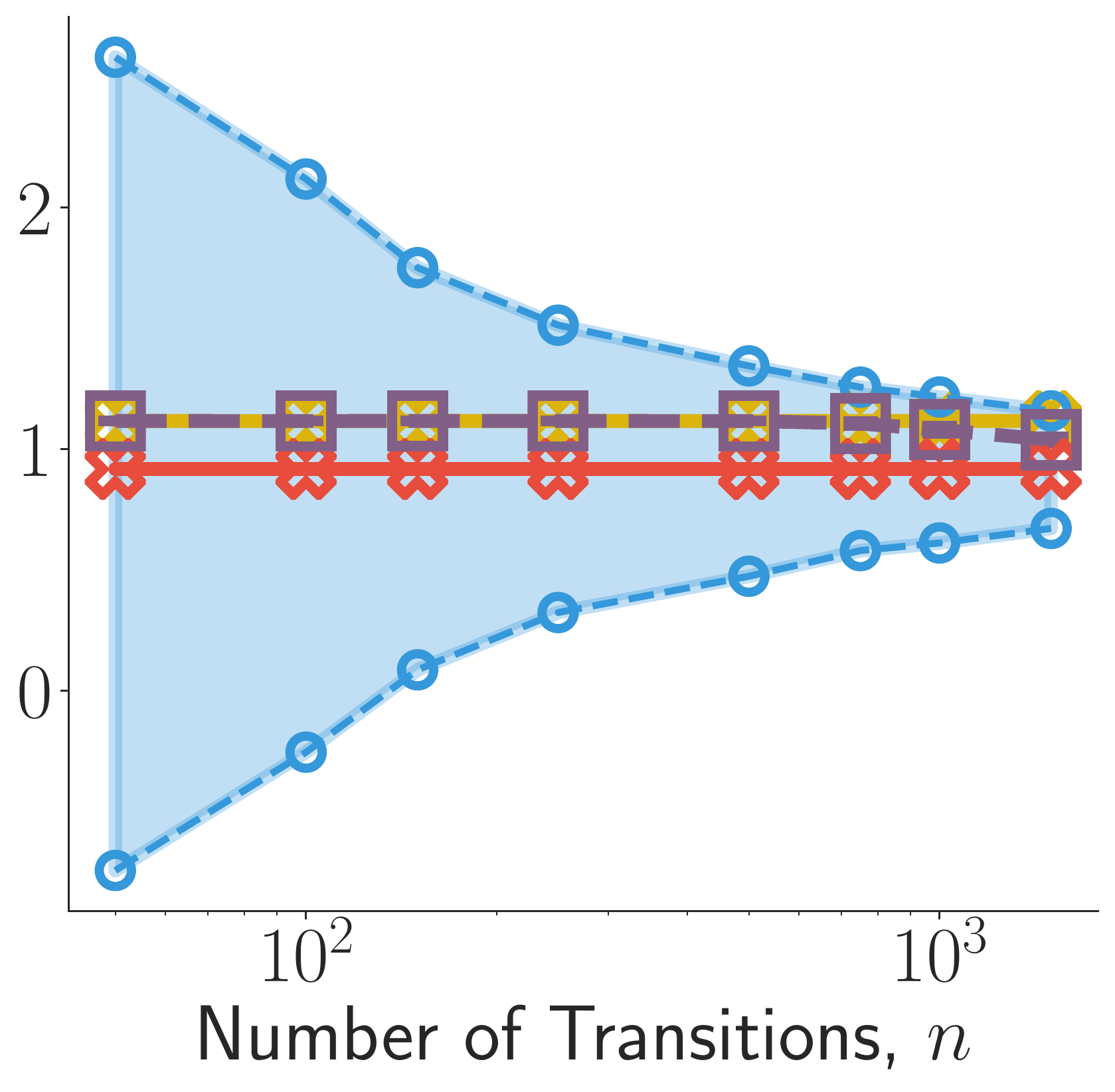}\hspace{-0.1em}&
        \raisebox{5.5em}{\rotatebox{90}{\small{Rewards}}}\includegraphics[height=\delen]{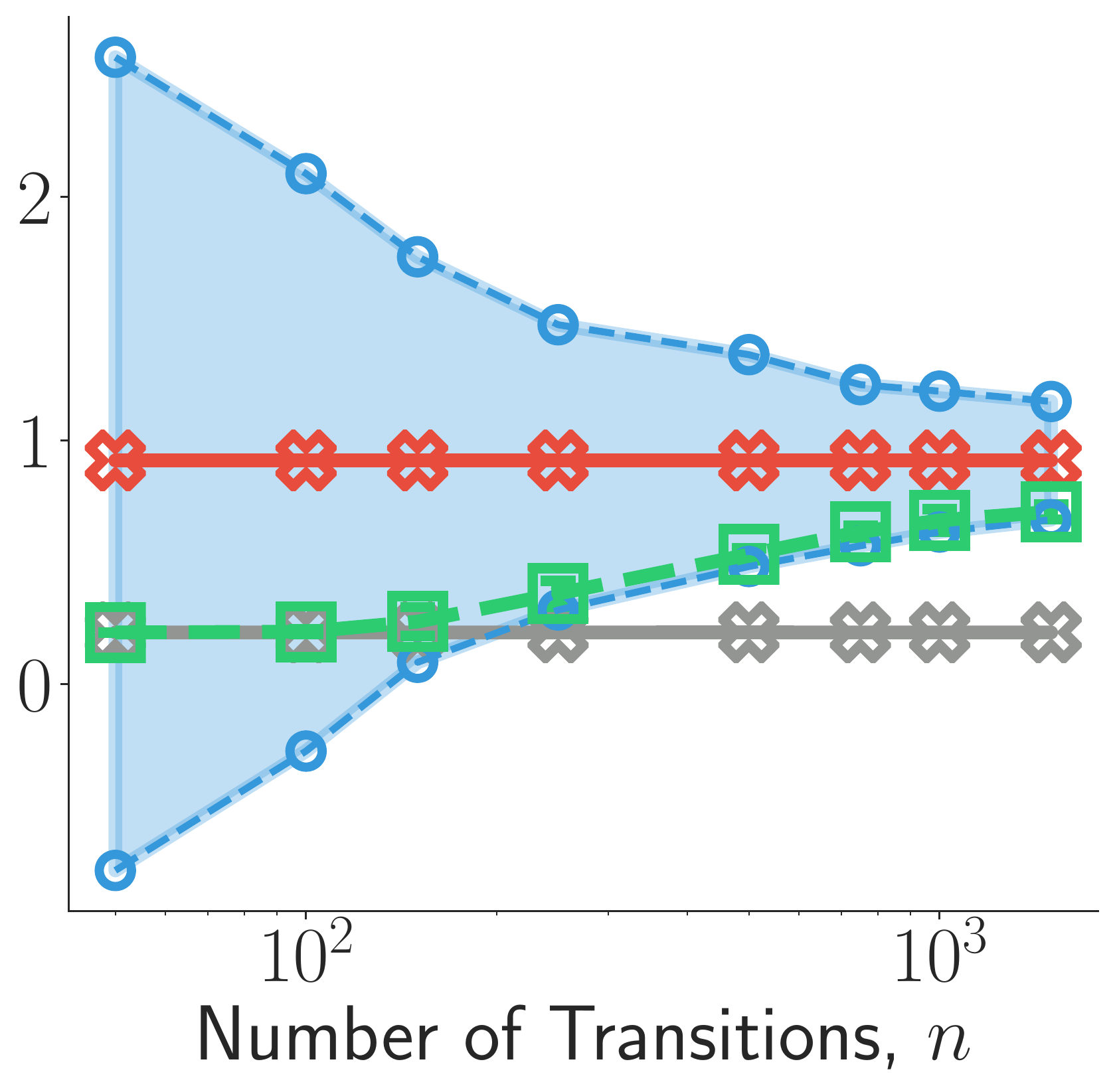} &
        \raisebox{6em}{\rotatebox{90}{\small{Norm }}}\includegraphics[height=\delen]{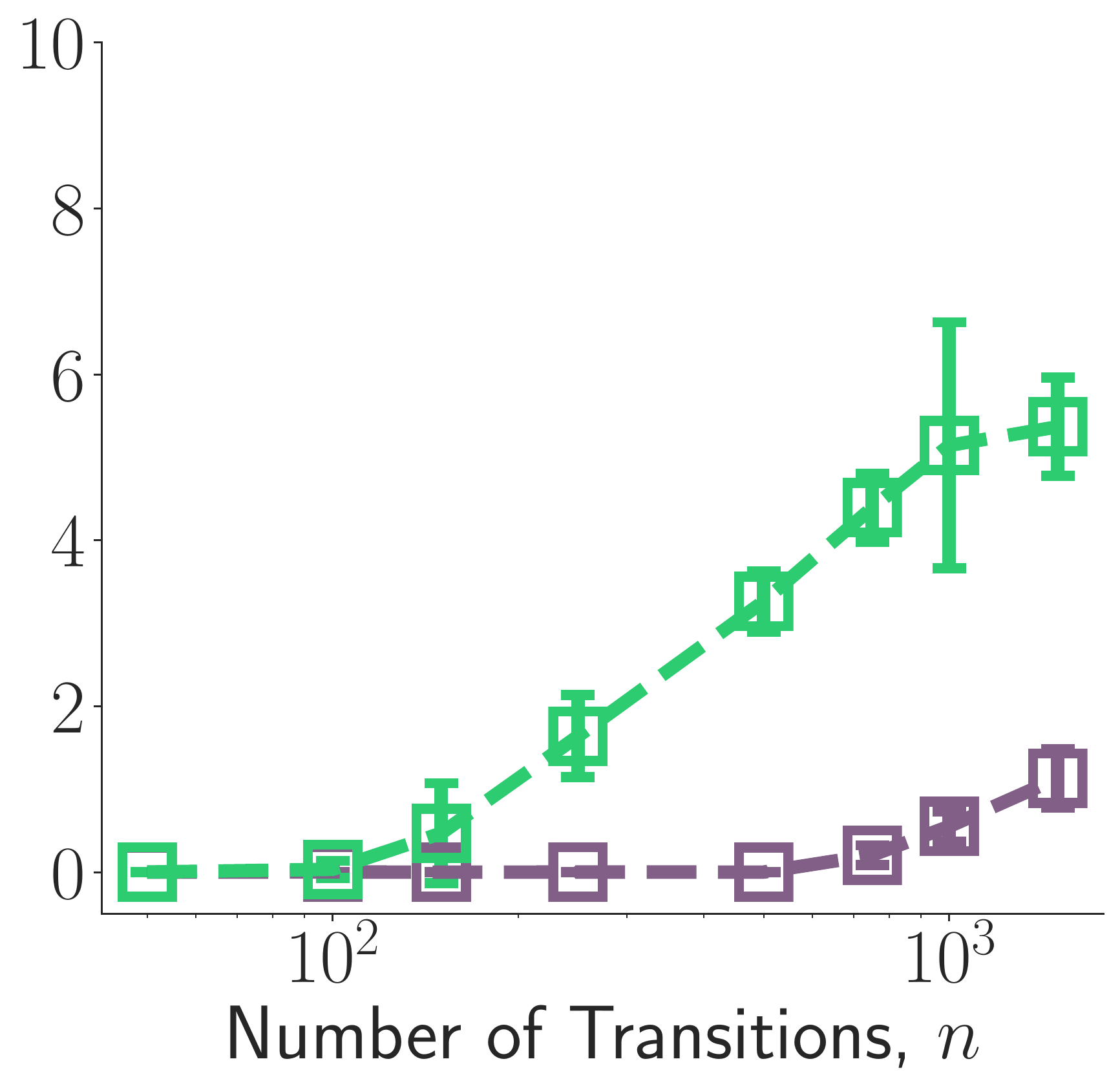}\\
        (a) \small{Diagnosis $\widehat{Q}_1$ with different $n$} & (b) \small{Diagnosis for $\widehat{Q}_2$  with different $n$ } & (c) Norm of  $Q_{\rm{debias}}$\\
        
         \raisebox{5.5em}{\rotatebox{90}{\small{Rewards}}}\includegraphics[height=\delen]{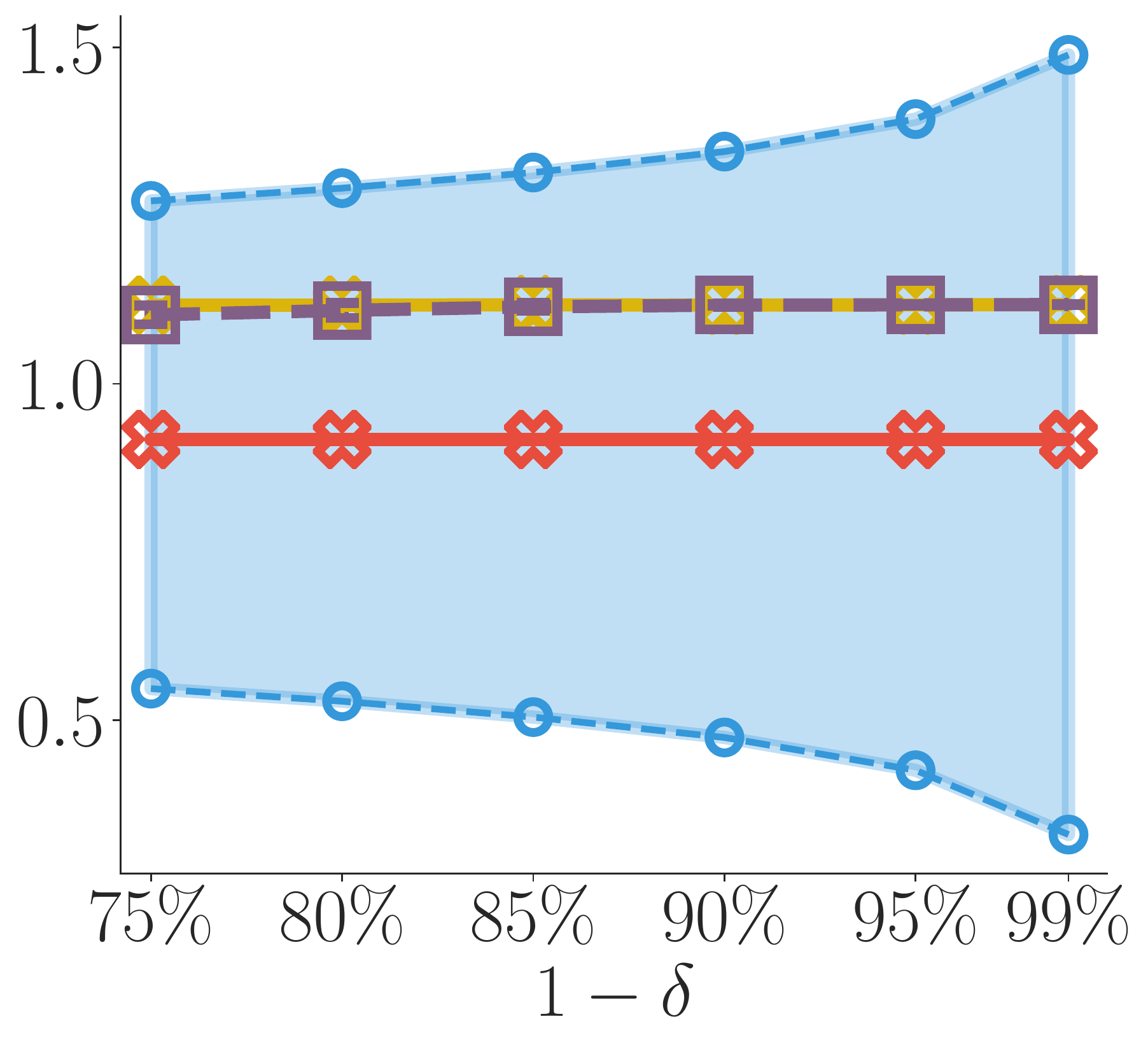}&
        \raisebox{5.5em}{\rotatebox{90}{\small{Rewards}}}\includegraphics[height=\delen]{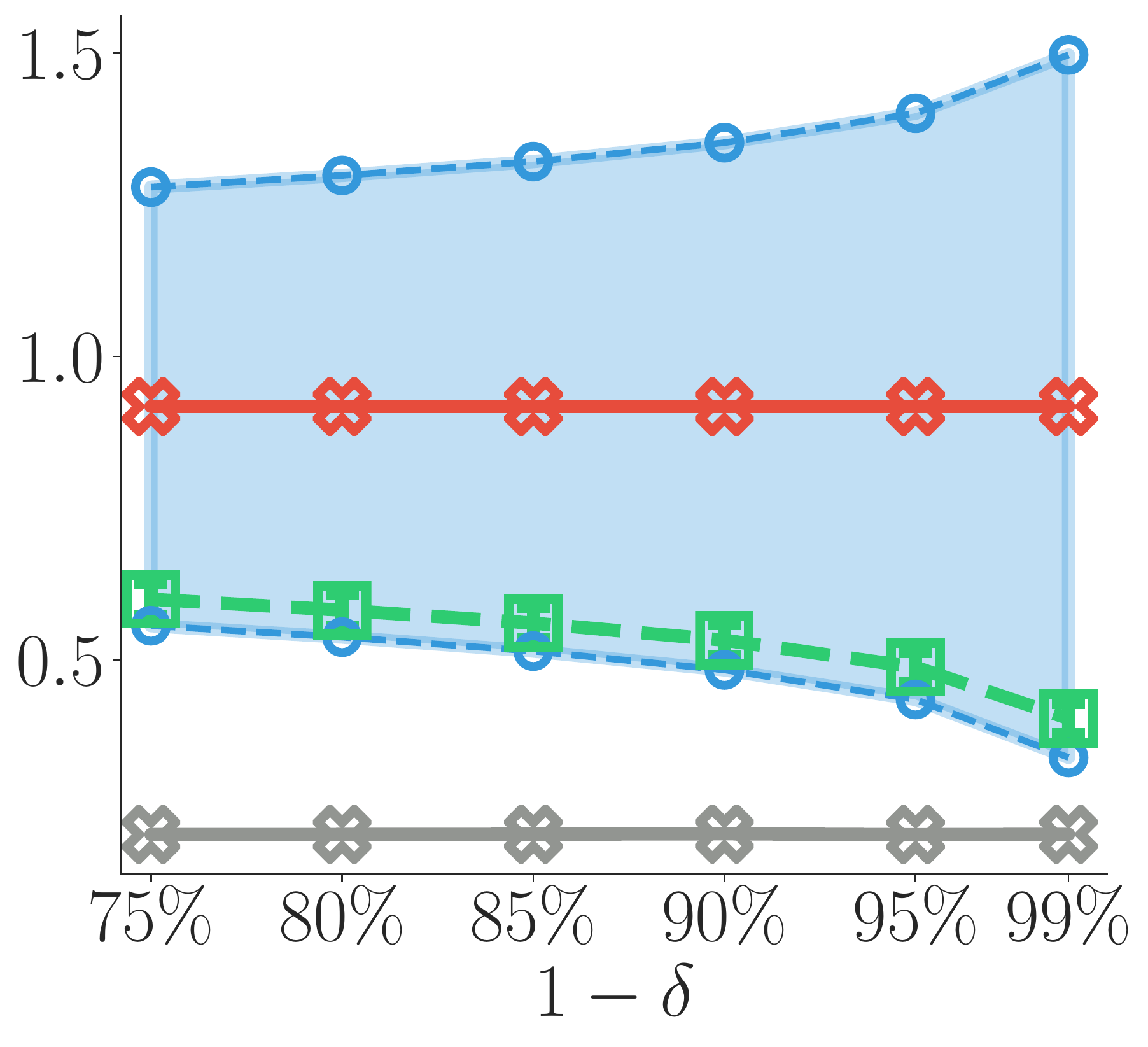}&
         \raisebox{6em}{\rotatebox{90}{\small{Norm }}}\includegraphics[height=\delen]{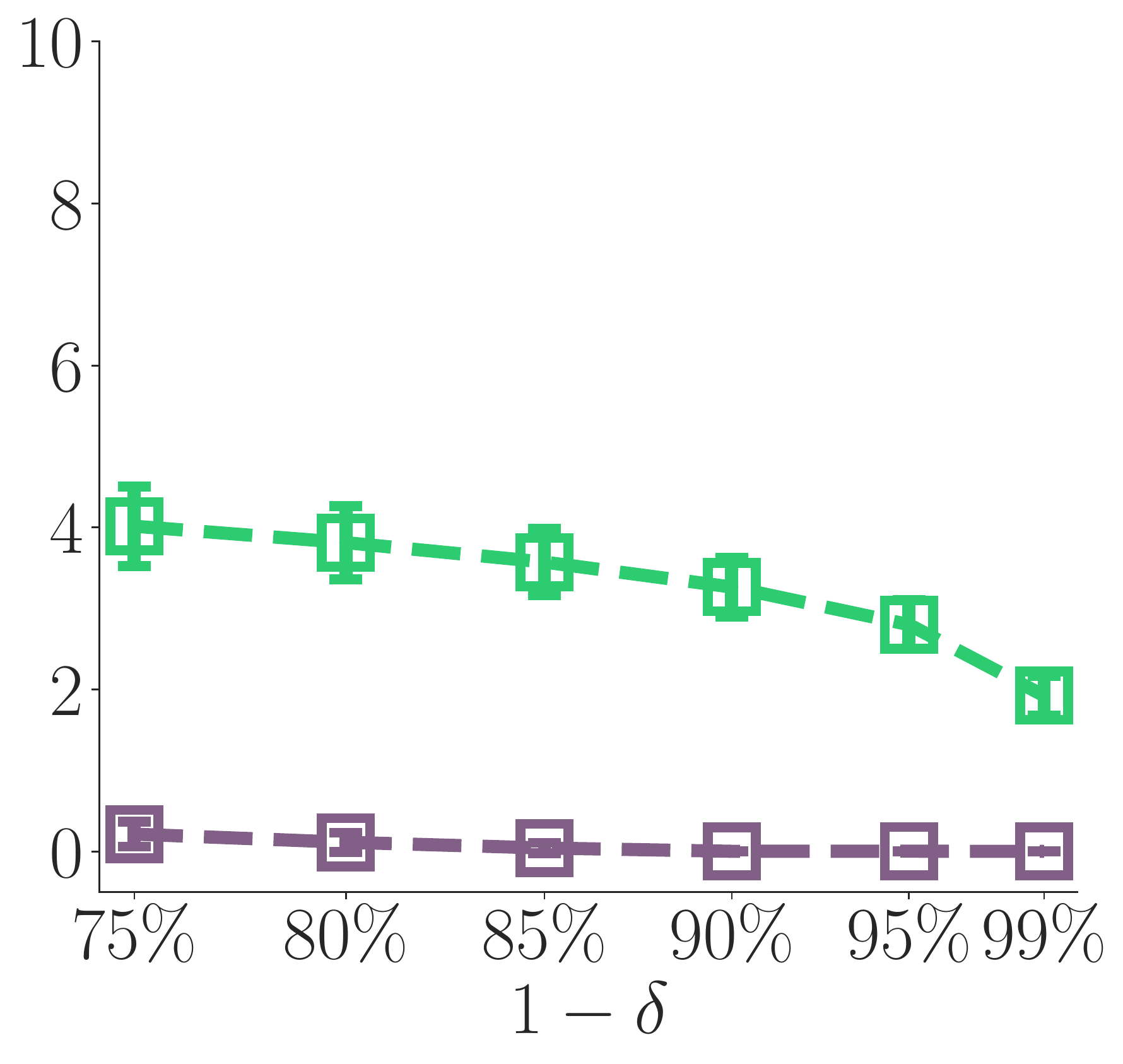}\\
         (d) \small{Diagnosis for $\widehat{Q}_1$ with different $\delta$} & (e) \small{Diagnosis for $\widehat{Q}_2$ with different $\delta$ } & (f) Norm of  $Q_{\rm{debias}}$\\
    \end{tabular}
 
    \caption{Post-hoc diagnosis on Inverted-Pendulum. 
     In figure (a)-(c), we vary the number of transitions with fixed $\delta = 0.10$. In figure (d)-(f), we fixed number of transitions $n=500$ and vary the failure probability $\delta$.  
    The default parameters are: discounted factor $\gamma=0.95$, the horizon length $T=50$, number of transitions $n=500$, failure probability $\delta=0.10$, temperature of the behavior policy $\tau = 1$, and the feature dimension 10.
    }
    \label{fig:pendulum_debias}
\end{figure*}

\section{Experiments}
\label{sec:exp}
We evaluate the proposed algorithms in Section \ref{sec:apps} on two continuous control tasks: Inverted-Pendulum and Puck-Mountain. 
Details of the tasks are in Appendix \ref{sec:exp_appendix}.

For all of our experiments,
we use a Gaussian RBF kernel $K(x_i, x_j) = \exp(-||x_i - x_j||^2_2  / h^{2})$ for evaluating the kernel Bellman statistics,
with a bandwidth selected from a separate batch of training data, and $V$-\textit{statistics} to calculate Equation \eqref{equ:k_vstats} given a set of empirical data.

To construct the behavior and target policies,
we first train an optimal Q-function using deep Q-learning, and use its softmax functions as policies, 
and set the temperature to be higher for the behavior policies to encourage exploration.
Note that our algorithms do not require to know the behavior policies since they are behavior-agnostic.
A description for how to construct policies is in Appendix \ref{sec:exp_appendix}.

\subsection{Confidence Bounds for Off-Policy Evaluation}
We test Algorithm \ref{algo:certified} on the two continuous control tasks, 
Inverted-Pendulum (Figure~\ref{fig:pendulum}(a)-(d)) and Puck-Mountain (Figure~\ref{fig:pendulum}(e)-(h)). 
We solve the convex optimization in Algorithm \ref{algo:certified} using CVXPY \citep{cvxpy,cvxpy_rewriting}, which gives us the upper and lower bound $\hat{\eta}^{+}$ and $\hat{\eta}^{-}$. 
The results are reported w.r.t. the  average discounted reward defined by ${\eta^\pi}/{(1-\gamma)}$. The results are averaged  over 50 random trials.

We use two types of feature maps $\Phi(\cdot)$ to parameterize the state-action value function $Q(x) := \theta^{\top}\Phi(x)$: 

1) Random Fourier features: 
$\Phi(x):= [\cos(\mu_i^\top x+ b_i) ]_{i=1}^m$, where $\mu_{i} \sim \mathcal{N}(0, \frac{1}{h_0^2} \rm{I})$, $b_i \sim \text{Uniform} [0, 2\pi]$, and $h_0$ is a bandwidth parameter.

2) Neural features. We use a small neural network to parameterize $Q$ function, and learn the $Q$ function by minimizing the kernel loss on the training dataset, 
and set $\Phi(\cdot)$ by  selecting a set of neural features  (the neural feature map before the last linear layer) on the validation set.

Figure \ref{fig:pendulum}(a)-(d) demonstrate the evaluation results on Inverted-Pendulum.
From Figure \ref{fig:pendulum}(a) \& (b) we can see that, as we increase the number of transitions, or increase the failure probability $\delta$, 
the confidence bounds become tighter since $\lambda_{K}$ becomes smaller. 
However, it still covers the ground truth $\eta^\pi$, which indicates that our method gives a reasonable confidence  interval for off-policy  estimation.

In Figure \ref{fig:pendulum}(c), we investigate the performance of our method
when we vary a temperature parameter of the behavior policy that controls the concentration of the probability mass. 
The confidence bounds do not change significantly with different
temperature of behavior policies.
In Figure \ref{fig:pendulum}(d), we study the performance of our algorithms with different number of features (including both random Fourier and neural features), 
which shows that we can get a tight confidence interval when the number of features is small. 
This is because when decreasing the number of features,  the Q function is constrained in a lower dimensional function space and hence gives a tighter confidence interval. 
However, decreasing the number of features also increases the risk of model misspecification. 
We also test our method on Puck-Mountain, and report the results in Figure \ref{fig:pendulum} (e)-(h), which show similar results as we observe in Inverted-Pendulum.

\subsection{Post-hoc Diagnosis for Existing Estimators}
We implement Algorithm \ref{algo:correction} to provide post-hoc  diagnosis for existing $Q$ estimators, and test it on  
Inverted-Pendulum (Figure~\ref{fig:pendulum_debias}) and Puck-Mountain (Figure~\ref{fig:puckmountain_debias} in the Appendix \ref{sec:exp_appendix}).
Figure \ref{fig:pendulum_debias} (a)-(f) show
the diagnosis results for two different state-action value function estimators ($\hat{Q}_1$ and $\hat{Q}_2$) on Inverted-Pendulum, which we learn with kernel Bellmen statistics using different number of optimization steps.
Here $\hat{Q}_{1}$ is a relatively accurate estimator while the estimation of $\hat{Q}_{2}$ has larger bias.

 Figure \ref{fig:pendulum_debias} (a)-(c) show that as we increase the number of transitions, the norm of the debias term $Q_{\rm{debias}}$ keeps zero when  
$\hat{Q}_{1}$ and $\hat Q_2$ are inside the confidence interval. %
Meanwhile, when 
$\hat{Q}_{1}$ and 
$\hat{Q}_{2}$ can not provide an accurate estimation (falling outside of the confidence interval), 
our algorithm gives a good post-hoc correction to ensure the corrected estimation are inside the confidence interval. 
As we can see, such post-hoc diagnosis provides 
both confidence bounds and correction for existing estimators, which can be useful in real-world applications.

Figure \ref{fig:pendulum_debias} (d)-(f) demonstrate the performance of our algorithms when we change the failure probability $\delta$. 
Again, the empirical results shows that our algorithms can provide  meaningful post-hoc diagnosis.

We also test our method on Puck-Mountain, following the same procedure as on Inverted-Pendulum, which shows similar results as we find in Inverted-Pendulum.

\section{Conclusion}
In this paper, 
we develop a new variational framework for constructing 
confidence bounds for off-policy evaluation
and extend it to perform post-hoc diagnosis on existing estimators. 
In future work, we will leverage our framework to develop safe and efficient policy optimization and exploration algorithms based on the kernel Bellman statistics.

\section*{Acknowledgment}
This work is supported in part by NSF CRII 1830161 and NSF CAREER 1846421.

\bibliographystyle{icml2020}
\bibliography{references}
\clearpage
\appendix
\onecolumn

\begin{center}
\Large
\textbf{Appendix}
\end{center}

\section{Proof Concentration Bounds}
\label{sec:u-concentration}
The proof of concentrations bounds of U/V-statistics are standard topics in probability and statistics. We provide proof here for completeness. In addition, we show that the concentration bounds still hold in non-i.i.d. cases when $Q = Q^\pi$ due to a special martingale structure from Bellman equation.

\subsection{Proof of Concentration Bound for U-statistics} 
\label{sec:u-concentration-hoeffding}

Assume $X$ is a random variable supported on $\mathcal{X}$. Given some bounded bivariate function\footnote{U-statistics are not limited to the bivariate functions, however, as the kernel loss we discuss in this paper is a bivariate function, we focus on the bivariate function here.} $h:\mathcal{X}^2\to [a, b]$, the U-statistics of $h$ is defined as:
\begin{align*}
    U = \frac{1}{n(n-1)}\sum_{1 \leq i\neq j \leq n} h(X_i, X_j)\,,
\end{align*}
where $X_1, X_2, \cdots, X_n\sim X$ are i.i.d. random variables. It's well known that U-statistics is an unbiased estimation for $\mathbb{E}_{Y, Z\sim X}[h(Y, Z)]$,
and we include the concentration property of U-statistics for completeness.

For simplicity, we assume $n = 2k, k\in\mathbb{Z}$ when we discuss the concentration of U-statistics with Hoeffding's inequality.

\begin{theorem}[Hoeffding's Inequality for U-statistics]
\begin{align*}
    \mathbb{P}\left[\left|U - \mathbb{E}[h] \right|\geq (b-a)\sqrt{\frac{\log \frac{2}{\delta}}{2k}}~\right]\leq \delta \,.
\end{align*}
\end{theorem}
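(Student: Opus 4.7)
The plan is to invoke \emph{Hoeffding's decomposition}, which rewrites the U-statistic as an average over permutations of sums of \emph{independent} terms, thereby reducing the problem to the classical Hoeffding inequality for sums of bounded independent random variables.

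Concretely, for $n=2k$ I would write
\begin{align*}
U \;=\; \frac{1}{n!}\sum_{\sigma \in S_n} W_\sigma, \qquad W_\sigma \;:=\; \frac{1}{k}\sum_{i=1}^{k} h\bigl(X_{\sigma(2i-1)}, X_{\sigma(2i)}\bigr),
\end{align*}
where $S_n$ is the symmetric group on $\{1,\dots,n\}$. One checks that this identity holds by a simple counting argument: each ordered pair $(i,j)$ with $i\neq j$ appears in the sum $\sum_\sigma W_\sigma$ with the same multiplicity. Since the $X_i$ are i.i.d., for every fixed $\sigma$ the quantity $W_\sigma$ is an average of $k$ independent random variables, each bounded in $[a,b]$, and satisfies $\mathbb{E}[W_\sigma] = \mathbb{E}[h]$.

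The key step is to transfer concentration from $W_\sigma$ to $U$. For any $\lambda \in \mathbb{R}$, Jensen's inequality applied to the convex function $x \mapsto e^{\lambda x}$ gives
\begin{align*}
\exp\bigl(\lambda(U - \mathbb{E}[h])\bigr) \;\le\; \frac{1}{n!}\sum_{\sigma \in S_n} \exp\bigl(\lambda(W_\sigma - \mathbb{E}[h])\bigr),
\end{align*}
and taking expectations on both sides yields $\mathbb{E}\bigl[e^{\lambda(U-\mathbb{E}[h])}\bigr] \le \mathbb{E}\bigl[e^{\lambda(W_\sigma - \mathbb{E}[h])}\bigr]$ for any particular $\sigma$ (by symmetry the right-hand side does not depend on $\sigma$). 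Thus the moment generating function of $U$ is dominated by that of an average of $k$ i.i.d.\ bounded variables, to which the classical Hoeffding MGF bound applies: $\mathbb{E}\bigl[e^{\lambda(W_\sigma - \mathbb{E}[h])}\bigr] \le \exp\bigl(\lambda^2 (b-a)^2 / (8k)\bigr)$.

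From here the proof concludes routinely: a standard Chernoff argument gives
\begin{align*}
\mathbb{P}\bigl[U - \mathbb{E}[h] \ge t\bigr] \;\le\; \exp\Bigl(-\tfrac{2kt^2}{(b-a)^2}\Bigr),
\end{align*}
optimizing over $\lambda$, and the same bound holds for $-U + \mathbb{E}[h]$ by symmetry; a union bound and setting $t = (b-a)\sqrt{\log(2/\delta)/(2k)}$ then yields the claim. The only delicate step is the Hoeffding decomposition itself---the permutation-counting identity and the Jensen domination of the MGF---while the rest is textbook Chernoff. I would not expect any real obstacle, since the i.i.d.\ assumption combined with boundedness is exactly what makes this reduction clean; the only subtlety is that the pair-decomposition requires $n$ to be even, which is why the statement assumes $n=2k$.
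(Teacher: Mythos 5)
Your proposal is correct and follows essentially the same route as the paper's proof: the permutation decomposition $U = \frac{1}{n!}\sum_{\sigma} W_\sigma$ with $W_\sigma$ an average of $k$ independent bounded pairs, Jensen's inequality to dominate the moment generating function of $U$ by that of a single $W_\sigma$, and then the standard Chernoff--Hoeffding optimization over $\lambda$. This is exactly Hoeffding's original argument, which is what the paper restates.
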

\begin{proof}
The proof is originated from \citet{hoeffding1963probability}, and we restate the original proof here for the completeness.

We first introduce the following notation:
\begin{align}
    V(X_1, X_2, \cdots, X_n) = \frac{1}{k} \sum_{i \in [k]} h(X_{2i-1}, X_{2i})\,.
    \label{equ:definition_V}
\end{align}
It's easy to see that $\mathbb{E}[V] = \mathbb{E}[h]$, and
\begin{align*}
    U = \frac{1}{n!}\sum_{\sigma\in S_n} V(X_{\sigma_1}, X_{\sigma_2}, \cdots, X_{\sigma_n})\,,
\end{align*}
where $S_n$ is the symmetric group of degree $n$ (i.e. we take the summation over all of the permutation of set $[n]$). 

With Chernoff's bound, we can know
\begin{align*}
    \mathbb{P}\left[U \geq \delta\right]\leq\exp(-\lambda \delta) \mathbb{E}[\exp(\lambda U)], \quad \forall \lambda>0\,.
\end{align*}
So we focus on the term $\mathbb{E}\left[\exp(\lambda U)\right]$. 
With Jensen's inequality, we have:
\begin{align*}
    \mathbb{E}[\exp (\lambda U)] = \mathbb{E}\left[\exp\left(\frac{\lambda}{n!}\sum_{\sigma\in S_n} V\left(X_{\sigma_1}, X_{\sigma_2}, \cdots, X_{\sigma_n}\right)\right)\right]\leq \frac{1}{n!}\sum_{\sigma\in S_n} \mathbb{E}\left[\exp(\lambda V(X_{\sigma_1}, X_{\sigma_2}, \cdots, X_{\sigma_n}))\right].
\end{align*}
Thus, 
\begin{align*}
    \mathbb{P}[U - \mathbb{E}[h] \geq \delta] \leq \frac{1}{n!} \sum_{\sigma\in S_n} \mathbb{E} \left[\exp\left(\lambda V(X_{\sigma_1}, X_{\sigma_2}, \cdots, X_{\sigma_n}) - \lambda \mathbb{E}[h] - \lambda \delta\right)\right].
\end{align*}
Notice that, $V$ is sub-Gaussian with variance proxy $\sigma^2 = \frac{(b-a)^2}{4k}$. Thus, with the property of sub-Gaussian random variable, 
\begin{align*}
    \mathbb{E} \left[\exp \left(\lambda V - \lambda \mathbb{E}[h] - \lambda \delta\right)\right] \leq \exp\left(\frac{ \lambda^2(b-a)^2}{8k} - \lambda \delta\right), \quad \forall \lambda > 0.
\end{align*}
When $\lambda = \frac{4k\delta}{(b-a)^2}$,
$\exp(\frac{ \lambda^2(b-a)^2}{8k} - \lambda \delta)$ achieves the minimum $\exp(-\frac{2k\delta^2}{(b-a)^2})$. Thus,
\begin{align*}
    \mathbb{P}\left[U - \mathbb{E}[h] \geq \delta\right]\leq \frac{1}{n!}\sum_{\sigma\in S_n} \exp\left(-\frac{2k\delta^2}{(b-a)^2}\right) = \exp\left(-\frac{2k\delta^2}{(b-a)^2}\right).
\end{align*}
Moreover, with the symmetry of $U$, we have that:
\begin{align*}
    \mathbb{P}[|U - \mathbb{E}[h]| \geq \delta] \leq  2\exp\left(-\frac{2k\delta^2}{(b-a)^2}\right),
\end{align*}
which concludes the proof.
\end{proof}

\subsection{Concentration Bounds for V-statistics}
\label{sec:v-concentration}
We have the following equation for U-statistics and V-statistics
\begin{align*}
    \hat{L}_{K}^V(Q) = \frac{n-1}{n} \hat{L}_{K}^U(Q) + \sum_{i\in [n]} \ell_{\pi, Q}(\tau_i, \tau_i)\,,
\end{align*}
so we can upper bound $|\hat{L}_{K}^V(Q) - L_{K}(Q)|$ via
\begin{align*}
    |\hat{L}_{K}^V(Q) - L_{K}(Q)| \leq \frac{n-1}{n} |\hat{L}_{K}^U(Q)-L_{K}(Q)| + \left|\frac{1}{n^2} \sum_{i\in[n]} \ell_{\pi, Q}(\tau_i, \tau_i) - \frac{1}{n} L_{K}(Q)\right|\,.
\end{align*}
Thus, with the concentration bounds of $\hat{L}_{K}^U(Q)$, and the fact that $|\ell_{\pi, Q}(\tau_i, \tau_i)|\leq \ell_{\max}$, 
and $|L_{K}(Q)|\leq \ell_{\max}$, we have the desired result.
\subsection{Concentration Bounds for Non I.I.D. Samples}
\label{sec:non_iid}
In practice, the dataset $\mathcal{D} = \{x_i, r_i, s_i'\}_{1\leq i \leq n}$ can be obtained with a non i.i.d fasion (such as we collect trajectories in the MDP follow by a policy $\pi$), 
which violates the assumption that samples from $\mathcal{D}$ are drawn independently.
There are concentration bounds for U-statistics with weakly dependent data. For example, \citet{han2018exponential} considers the  concentration of U-statistics when the data are generated from a Markov Chain under mixing conditions. 
Here we show that, in the case when $Q = Q^\pi$, 
the concentration inequality holds without requiring the i.i.d. or any mixing condition, thanks to a martingale structure from the Bellman equation.

\begin{proposition}
Assume the transitions are sampled from the MDP, i.e. $s^\prime \sim\mathcal{P}(\cdot|x)$, $\bar{s}^\prime \sim \mathcal{P}(\cdot|\bar{x})$, then for any joint measure $\nu$ of $(x, \bar{x})$, we have the following property for $Q^\pi$:
\begin{align*}
    \mathbb{E}_{(x, \bar{x})\sim \nu, s^\prime \sim \mathcal{P}(\cdot|x), \bar{s}^\prime\sim \mathcal{P}(\cdot|\bar{x})}[K(x, \bar{x})\cdot \hat{\mathcal{R}}_{\pi} Q^\pi(x) \cdot \hat{\mathcal{R}}_{\pi} Q^\pi(\bar{x})] = 0\,,
\end{align*}
For i.i.d. case, $\nu = \mu\times \mu$.
\label{prop:q_pi_expectation}
\end{proposition}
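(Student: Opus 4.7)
The plan is to exploit the tower property of conditional expectation together with two structural facts: (i) $Q^\pi$ is the unique fixed point of the Bellman operator $\mathcal{B}_\pi$, so the population Bellman residual $\mathcal{R}_\pi Q^\pi \equiv 0$ pointwise; and (ii) given $(x,\bar x)$, the next-state samples $s' \sim \mathcal{P}(\cdot\mid x)$ and $\bar s'\sim \mathcal{P}(\cdot\mid \bar x)$ (together with the Monte Carlo next actions $a'\sim\pi(\cdot\mid s')$ and $\bar a'\sim\pi(\cdot\mid \bar s')$ used inside $\hat{\mathcal{R}}_\pi$) are drawn independently, so $\hat{\mathcal{R}}_\pi Q^\pi(x)$ and $\hat{\mathcal{R}}_\pi Q^\pi(\bar x)$ are conditionally independent given $(x,\bar x)$.

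First, I would condition on $(x,\bar x)$ and pull the deterministic factor $K(x,\bar x)$ out of the inner expectation, writing
$$
\mathbb{E}[K(x,\bar x)\,\hat{\mathcal{R}}_\pi Q^\pi(x)\,\hat{\mathcal{R}}_\pi Q^\pi(\bar x)] = \mathbb{E}_\nu\!\left[K(x,\bar x)\,\mathbb{E}\bigl[\hat{\mathcal{R}}_\pi Q^\pi(x)\,\hat{\mathcal{R}}_\pi Q^\pi(\bar x)\,\bigm|\,x,\bar x\bigr]\right].
$$
By the conditional independence noted above, the inner conditional expectation factorizes into the product
$$
\mathbb{E}\bigl[\hat{\mathcal{R}}_\pi Q^\pi(x)\mid x\bigr]\cdot \mathbb{E}\bigl[\hat{\mathcal{R}}_\pi Q^\pi(\bar x)\mid \bar x\bigr].
$$

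Next, I would invoke the unbiasedness identity $\mathbb{E}[\hat{\mathcal{R}}_\pi Q(x)\mid x] = \mathcal{R}_\pi Q(x)$ already recorded in the preliminaries, applied with $Q=Q^\pi$. Since $Q^\pi = \mathcal{B}_\pi Q^\pi$, we have $\mathcal{R}_\pi Q^\pi(x)=0$ and $\mathcal{R}_\pi Q^\pi(\bar x)=0$ for every $x,\bar x$, so each factor vanishes. Taking the outer expectation under $\nu$ therefore yields zero, which is the claim.

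The argument is essentially two lines once the conditional independence structure is identified, so there is no real obstacle. The only point that requires care is to make explicit that the Monte Carlo next-action draws implicit in the definition of $\hat{\mathcal{R}}_\pi$ are taken independently across the two copies $x$ and $\bar x$; with this observation the factorization step is immediate. Note also that the argument never uses independence between $x$ and $\bar x$ themselves, which is precisely why the i.i.d.\ assumption on the transition pairs can be dropped in the special case $Q=Q^\pi$ --- exactly the martingale-type property alluded to in Appendix \ref{sec:non_iid}.
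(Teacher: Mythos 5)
Your proof is correct and follows essentially the same route as the paper: condition on $(x,\bar x)$, use the conditional independence of the two next-state draws to factorize the inner expectation, and observe that each factor is $\mathcal{R}_\pi Q^\pi = 0$ by the Bellman fixed-point property. The paper's version is terser (it simply says one can "first take expectation w.r.t.\ $s'$ and $\bar s'$"), but you have made the factorization step and its reliance on conditional independence explicit, which is the only point requiring care.
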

\begin{proof}
By the definition of Bellman error for $Q$-function, we have that
\begin{align*}
    \mathbb{E}_{s^\prime \sim \mathcal{P}(\cdot|x)} \left[\hat{\mathcal{R}}_\pi Q^\pi(x)\right] = \mathbb{E}_{x^\prime \sim \mathcal{P}(\cdot|x) \times \pi(\cdot | s^\prime)} \left[\hat{\mathcal{R}}_{\pi}Q^{\pi}(x)\right] = 0\,.
\end{align*}
As we can first take expectation w.r.t $s^\prime$ and $\bar{s}^\prime$, we can conclude the proof.
\end{proof}
\begin{theorem}[Concentration Bounds with Non I.I.D. Samples] 
Consider a set of random transition pairs $\{\tau_i\}_{i=1}^n$ with $\tau_i = (s_i, a_i, r_i, s_i')$. 
Assume the state-action pairs $(s_i,a_i)_{i=1}^n$ are drawn from an arbitrary joint distribution, 
and given $(s_i,a_i)_{i=1}^n$, 
the local rewards and next states $(r_i, s_i')_{i=1}^n$ is drawn \emph{independently} from {$r_i = r(s_i, a_i)$} and $s_i' \sim \mathcal P(\cdot ~|~s_i, a_i)$.  

Then $\forall \delta \in (0, 1)$, we have the following concentration inequality for $Q^\pi$:
\begin{align*}
    \mathbb{P}\left[~~\left|\hat{L}_{K}^U(Q^\pi)\right| \geq 2\ell_{\max} \sqrt{\frac{\log \frac{2}{\delta}}{n}}~~\right] \leq \delta\,.
\end{align*}
where $\ell_{\max}$ is given via Lemma \ref{lem:empirical-kernel-loss-bound}.
\end{theorem}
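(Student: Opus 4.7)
The plan is to adapt the i.i.d.\ Hoeffding argument from Appendix \ref{sec:u-concentration-hoeffding} by conditioning on the state-action pairs $\{x_i\}_{i=1}^n$ so that the randomness in the summand comes only from the independent $(r_i, s_i')$ variables, and then to invoke Proposition \ref{prop:q_pi_expectation} to kill the conditional expectation when $Q = Q^\pi$. Concretely, I would take $n = 2k$ and define, as in \eqref{equ:definition_V},
\begin{equation*}
V(\tau_1,\dots,\tau_n) \;=\; \frac{1}{k}\sum_{i=1}^{k} K(x_{2i-1},x_{2i})\,\hat{\mathcal R}_\pi Q^\pi(x_{2i-1})\,\hat{\mathcal R}_\pi Q^\pi(x_{2i}),
\end{equation*}
and write the U-statistic as $\hat L_K^U(Q^\pi) = \frac{1}{n!}\sum_{\sigma\in S_n} V(\tau_{\sigma_1},\dots,\tau_{\sigma_n})$, exactly as before.

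The key new observation is that, conditional on the full collection $\{x_i\}_{i=1}^n$, each term $K(x_{2i-1},x_{2i})\,\hat{\mathcal R}_\pi Q^\pi(x_{2i-1})\,\hat{\mathcal R}_\pi Q^\pi(x_{2i})$ depends only on the independent blocks $(r_{2i-1},s'_{2i-1},r_{2i},s'_{2i})$, so these $k$ terms are conditionally independent. Moreover, by the conditional independence of $(r_{2i-1},s'_{2i-1})$ and $(r_{2i},s'_{2i})$ given the corresponding state-action pairs, together with Proposition \ref{prop:q_pi_expectation} (which uses the Bellman equation $Q^\pi = \mathcal B_\pi Q^\pi$), each term has conditional mean zero. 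Each term is also bounded by $\ell_{\max}$ in absolute value by Lemma \ref{lem:empirical-kernel-loss-bound}. Hoeffding's inequality applied conditionally on $\{x_i\}$ therefore gives
\begin{equation*}
\mathbb E\!\left[e^{\lambda V}\,\big|\,\{x_i\}\right] \;\leq\; \exp\!\left(\frac{\lambda^2\ell_{\max}^2}{2k}\right),
\end{equation*}
so by the tower property the same bound holds unconditionally, and by Jensen's inequality applied to the convex combination over permutations it transfers to $\hat L_K^U(Q^\pi)$:
\begin{equation*}
\mathbb E\!\left[e^{\lambda \hat L_K^U(Q^\pi)}\right] \;\leq\; \frac{1}{n!}\sum_{\sigma\in S_n}\mathbb E\!\left[e^{\lambda V_\sigma}\right] \;\leq\; \exp\!\left(\frac{\lambda^2\ell_{\max}^2}{2k}\right).
\end{equation*}

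A standard Chernoff optimization with $\lambda = kt/\ell_{\max}^2$ then yields $\mathbb P(\hat L_K^U(Q^\pi)\geq t)\leq \exp(-kt^2/(2\ell_{\max}^2))$, and by the same argument applied to $-\hat L_K^U(Q^\pi)$ we obtain a matching lower tail. Setting $t = 2\ell_{\max}\sqrt{\log(2/\delta)/n}$ and using $n=2k$ gives the claimed bound. The main subtlety to flag is that the argument depends essentially on the choice $Q = Q^\pi$: the conditional mean-zero property provided by Proposition \ref{prop:q_pi_expectation} is what removes the need for the $x_i$'s to be i.i.d.\ or to satisfy any mixing condition, and it is also the reason why no analogous bound is available for arbitrary $Q$ in the non-i.i.d.\ setting.
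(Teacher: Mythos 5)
Your proof is correct and follows essentially the same route as the paper's: both decompose the U-statistic as an average of Hoeffding-type block sums $V_\sigma$ over permutations, use Proposition \ref{prop:q_pi_expectation} (the Bellman identity at $Q^\pi$) to make each block term conditionally mean zero, bound the moment generating function of $V$, and finish with Jensen over permutations plus a Chernoff bound. The only cosmetic difference is that you condition on the entire collection $\{x_i\}$ and invoke conditional independence of the $(r_i,s_i')$ blocks with standard Hoeffding, whereas the paper conditions sequentially and phrases the same mean-zero structure as a bounded martingale difference sequence handled by Azuma's inequality; under the theorem's stated assumption these are interchangeable and yield the identical MGF bound.
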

\begin{proof}
First, $\forall ~(x, s^\prime), (\bar{x}, \bar{s}^\prime)$ pair, where $s^\prime \sim \mathcal{P}(\cdot|x)$, $\bar{s}^\prime \sim \mathcal{P}(\cdot|\bar{x})$, from the proof of Proposition \ref{prop:q_pi_expectation}, we can know 
\begin{align*}
    \mathbb{E}_{s^\prime \sim \mathcal{P}(\cdot|x), \bar{s}^\prime \sim \mathcal{P}(\cdot|\bar{x})} [K(x, \bar{x})\cdot \hat{\mathcal{R}}_{\pi} Q^\pi(x) \cdot \hat{\mathcal{R}}_{\pi} Q^\pi(\bar{x})] = 0\,.
\end{align*}
Then we revisit the definition of $V$ in Equation \eqref{equ:definition_V}:
\begin{align*}
    V(X_1, X_2, \cdots, X_n) = \frac{1}{k}\sum_{i \in [k]} h(X_{2i-1}, X_{2i})\,.
\end{align*}
For kernel Bellman statistic, $X_i = (x_i, s_i^\prime)$, where $s_i^\prime \sim \mathcal{P}(\cdot|x_i)$, and $h(X_i, X_j) = K(x_i, x_j)\cdot \hat{\mathcal{R}}_{\pi} Q^\pi(x_i) \cdot \hat{\mathcal{R}}_{\pi} Q^\pi(x_j)$. With Proposition \ref{prop:q_pi_expectation}, we have that 
\begin{align*}
    \mathbb{E}_{x_{2i-1}, x_{2i}, s_{2i-1}^\prime \sim \mathcal{P}(\cdot|x_{2i-1}),s_{2i}^\prime \sim \mathcal{P}(\cdot|x_{2i}) }\left[K(x_{2i - 1}, x_{2i})\cdot \hat{\mathcal{R}}_{\pi} Q^\pi(x_{2i-1}) \cdot \hat{\mathcal{R}}_{\pi} Q^\pi(x_{2i})~\big|~x_1, \cdots, x_{2i - 2}\right] = 0\,,
\end{align*}
as the expectation doesn't depend on how we get $x_{2i-1}$ and $x_{2i}$, but $s_{2i-1}^\prime \sim \mathcal{P}(\cdot|x_{2i-1})$, $s_{2i}^\prime \sim \mathcal{P}(\cdot|x_{2i})$. So we can view $V$ as a summation of bounded martingale differences.

By using the Azuma's inequality for the martingale differences, we can show:
\begin{align*}
    \mathbb{E} \left[\exp(\lambda V)\right] \leq \exp\left(\frac{\lambda^2 (b-a)^2}{8k}\right)\,.
\end{align*}

So the Hoeffding-type bound still holds, following the derivation of Appendix \ref{sec:u-concentration-hoeffding}.
\end{proof}

\textbf{Remark} We have proved that if the environment is Markovian, we still have the desired Hoeffding-type concentration bound for $Q^\pi$, and our algorithms are still valid given non i.i.d. samples.
However, in practice, we still need the data collecting process to be ergodic (which is a general assumption \citep{puterman94markov}), as we want to estimate $Q^\pi$ over all of the $\mathcal{S}\times \mathcal{A}$. 

\textbf{Remark} If we want to consider any $Q$ function other than $Q^\pi$, the non i.i.d $x$ will lead to an additional bias term, which can be difficult to estimate empirically .
We leave this as a future work.

\textbf{Remark} Notice that, here we only use the universal upper bound of $\ell_{\pi, Q}(\tau_i, \tau_i)$, so the bound for $Q^\pi$ is still valid for non i.i.d dataset $\mathcal{D}=\{x_i, r_i, s_i\}_{1\le i \le n}$ if we use Hoeffding-type bound for U-statistics.

\section{Proof of Lemmas}
\boundedq*
\begin{proof}
Recall the definition of $Q^{\pi}$, we have 
$$
    Q^{\pi}(x) = \E_{\pi}\left[\sum_{t=0}^{\infty}\gamma^{t}| s_0 = s, a_0 = a\right] \leq \left(\sum_{t=0}^{\infty} {\gamma}^{t}\right)r_{\max} = \frac{r_{\max}}{1 - \gamma}, ~~ \forall ~x\,.
$$
Recall the definition of $\ell_{\pi, Q^{\pi}}(\tau, \bar{\tau})$ in \eqref{equ:ellQpi}, we have 
\begin{align*}
|\ell_{\pi, Q^{\pi}}(\tau, \bar{\tau})| &= \left|\left(Q(x) - r(x) - \gamma Q(x') \right)K(x, \bar{x}) \left(Q(\bar{x}) - r(\bar{x}) - \gamma Q(\bar{x}') \right)\right| \\
& \leq \sup_{x, \bar{x}}|K(x, \bar{x})|\cdot \sup_{\tau}\left(Q(x) - r(x) - \gamma Q(x')\right)^2 \\
& \leq K_{\max} \cdot \left(\frac{r_{\max}}{1-  \gamma} + r_{\max} + \gamma \cdot \frac{r_{\max}}{1 - \gamma}\right)^2 \\
&= \frac{4K_{\max}r^2_{\max}}{(1 - \gamma)^2}\,.
\end{align*}
\end{proof}

\newpage
\section{Accountable Off-Policy Evaluation for Average Reward  \texorpdfstring{$(\gamma=1)$}{Lg}}
\label{sec:average_case}
In this section, we generalize our methods  to the average reward setting where $\gamma = 1$.
Denote by $M = \langle \mathcal{S}, \mathcal{A}, \mathcal{P}, r, \gamma\rangle$  a Markov decision process (MDP), where $\mathcal{S}$ is the state space; $\mathcal{A}$ is the action space; $\cP(s' | s, a)$ is the transition probability; $r(s,a)$ is the average immediate reward; $\gamma = 1$ (undiscounted case). 
The expected reward of a given policy $\pi$ is 
\begin{align*}
    \eta^\pi = \lim_{T\to\infty} \frac{1}{T+1} \mathbb{E}_{\pi} \left[\sum_{t=0}^T r_t\right]\,.
\end{align*}

In the discounted case ($0 < \gamma < 1$), the value function $Q^{\pi}(s, a)$ is the expected total discounted reward when the initial state $s_0$ is fixed to be $s$, and $a\sim \pi(\cdot |s)$: $Q^{\pi}(s,a)= \E_{\tau \sim \pi_{\pi}}[\sum_{t=0}^{\infty}\gamma^{t}r_t | s_0=s, a_0 =a]$. 
If the Markov process is ergodic \citep{puterman94markov},
the expected average reward does not depend on the initial states.
In the  average case, however, 
$Q^{\pi}(s,a)$ measures the \textit{average adjusted} sum of reward:  
$$
    Q^{\pi}(s, a) := \lim_{T \to \infty}\E_{\pi}\left[\sum_{t=0}^{T}(r_t - \eta^{\pi}) ~\big |~ s_{0} = s, a_0 = a\right]\,,
$$
which is referred to as the \textit{adjusted (state-action) value function}.

Under this definition, 
$(Q^{\pi},\eta^\pi)$  
is the unique fixed-point solution to the following Bellman equation:
\begin{align}
    Q(s, a)= \E_{s' \sim \mathcal{P}(\cdot | s,a), a'\sim \pi(\cdot | s')}[r(s, a) + Q(s', a')  - \eta] \,.
\end{align}

To simplify notation, we still assume $x = (s, a)$, $\bar{x} = (\bar{s}, \bar{a})$, and $x':=(s', a') $ with $s' \sim \mathcal{P}(\cdot| s, a), a' \sim \pi(\cdot | s')$.
Define the \textit{Bellman residual operator} as
$$
    \mathcal{R}_{\pi}Q(x) = \E_{s' \sim \mathcal{P}(\cdot | s,a), a'\sim \pi(\cdot | s')}\left[r(x) + Q(x^\prime) - \eta \right] - Q(x)\,,
$$
where $Q$ is an estimation of adjusted value function, and $\eta$ is an estimation of the expected reward. 
Note that $\mathcal{R}_{\pi}Q(x)$ depends on both $Q$ and $\eta$, even though it is not indicated explicitly on notation.  
Given a $(Q,\eta)$ pair, the kernel loss for $\gamma= 1$ can be defined as 
\begin{align*}
    L_K(Q, \eta) := \mathbb{E}_{x, \bar{x} \sim \mu} [ \mathcal{R}_{\pi} Q(x) \cdot K(x, \bar{x})\cdot \mathcal{R}_{\pi} Q(\bar{x})]\,.
\end{align*}

Given a set of observed transition pairs $\mathcal{D}= \{\tau_i\}_{i=1}^{n}$, and we can estimate $L_{K}(Q, \eta)$ with the following V-statistics:
\begin{align*}
    \hat{L}_{K}^V(Q, \eta) = & \frac{1}{n^2}\sum_{i, j = 1}^n \ell_{\pi, Q}(\tau_i, \tau_j)\,,
\end{align*}
where
\begin{align*}
    \ell_{\pi, Q} (\tau_i, \tau_j) = \hat{\R}_{\pi}Q(x_i) K(x_i, x_j) 
\hat{\R}_{\pi}Q(x_j)\,,
\end{align*}
and
\begin{align*}
    \hat{\mathcal{R}}_{\pi} Q(x_i) = r(x_i) + \mathbb{E}_{a_i^\prime \sim \pi(\cdot|s_i^\prime)}\left[Q(x_i^\prime) \right] - \eta - Q(x_i).
\end{align*}

Similarly, we can estimate $L_{K}(Q, \eta)$ via U-statistics:
$$
\hat{L}_{K}^U(Q, \eta) =  \frac{1}{n(n-1)}\sum_{1\leq i\neq j \leq n}\ell_{\pi, Q}(\tau_i, \tau_j)\, .
$$

\subsection{Concentration Bounds for U-/V-statistics}
Here we  derive the concentration bounds of the U-/V-statistics for the average reward case, under the mild assumption that the reward $r(x)$
and $Q(x)$ are bounded.

\begin{lemma}
Assume the reward function,  the adjusted (state-action) value function and the kernel function are uniformly bounded, i.e. $\sup_{s\in\mathcal{S}, a\in\mathcal{A}}|r(x)|\leq r_{\max}$, $\sup_{s\in\mathcal{S}, a\in\mathcal{A}}|Q^{\pi}(x)|\leq Q_{\max}$, $\sup_{x, \bar{x}} |K(x,\ \bar{x})| \leq K_{\max}$. Then we have
$$
\sup_{\tau, \bar{\tau}}|\ell_{\pi, Q^{\pi}}(\tau, \bar{\tau})| \leq \ell_{\max}:= 4K_{\max}(Q_{\max} + r_{\max} )^2\,.
$$
\end{lemma}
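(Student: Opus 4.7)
The plan is to bound $|\ell_{\pi,Q^\pi}(\tau,\bar\tau)|$ by repeatedly applying the triangle inequality, mirroring the argument of Lemma \ref{lem:empirical-kernel-loss-bound} but adapted to the average-reward Bellman operator. Since $\ell_{\pi,Q^\pi}(\tau,\bar\tau) = \hat{\mathcal R}_\pi Q^\pi(x) \, K(x,\bar x)\, \hat{\mathcal R}_\pi Q^\pi(\bar x)$, I would factor the supremum as
\[
\sup_{\tau,\bar\tau}|\ell_{\pi,Q^\pi}(\tau,\bar\tau)| \;\leq\; \Bigl(\sup_{x,\bar x}|K(x,\bar x)|\Bigr)\Bigl(\sup_{\tau}|\hat{\mathcal R}_\pi Q^\pi(x)|\Bigr)^2,
\]
so the work reduces to bounding $|\hat{\mathcal R}_\pi Q^\pi(x)|$ uniformly in $\tau = (s,a,r,s')$.

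Unlike the discounted setting, $\hat{\mathcal R}_\pi Q^\pi(x)$ now contains four terms: the immediate reward $r(x)$, the one-step bootstrap $\mathbb E_{a'\sim \pi(\cdot|s')}[Q^\pi(x')]$, the average-reward offset $\eta^\pi$, and the current value $Q^\pi(x)$. The first and third are bounded in absolute value by $r_{\max}$ (noting that $|\eta^\pi|\leq r_{\max}$ since $\eta^\pi$ is a limit of averages of rewards bounded by $r_{\max}$, a fact I would state as a one-line observation), and the second and fourth by $Q_{\max}$. Triangle inequality then yields $|\hat{\mathcal R}_\pi Q^\pi(x)| \leq 2r_{\max} + 2Q_{\max} = 2(Q_{\max}+r_{\max})$.

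Squaring and multiplying by $K_{\max}$ delivers the claimed constant $4K_{\max}(Q_{\max}+r_{\max})^2$. The only substantive point, rather than an obstacle, is the bound $|\eta^\pi|\leq r_{\max}$; this follows immediately from the definition $\eta^\pi = \lim_{T\to\infty} \frac{1}{T+1}\mathbb E_\pi[\sum_{t=0}^T r_t]$ together with $|r_t|\leq r_{\max}$, which I would include as a short preliminary remark before the main triangle-inequality computation. No additional assumptions beyond the three uniform bounds are needed.
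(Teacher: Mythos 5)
Your proposal is correct and follows essentially the same route as the paper: bound $|K|$ by $K_{\max}$ and the empirical Bellman residual by $r_{\max}+Q_{\max}+|\eta^\pi|+Q_{\max}\leq 2(r_{\max}+Q_{\max})$ via the triangle inequality, then square. The only difference is that you make explicit the observation $|\eta^\pi|\leq r_{\max}$, which the paper's one-line computation uses implicitly; spelling it out is a small improvement in rigor but not a different argument.
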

\begin{proof}
By definition, we have 
\begin{align*}
    \sup_{\tau, \bar{\tau}} |\ell_{\pi, Q^{\pi}}(\tau, \bar{\tau})| &\leq | r_{\max} + Q_{\max } - (-r_{\max}) - (-Q_{\max})|^2 \cdot | K(x, \bar{x})| \\ 
    &\leq 4K_{\max} (Q_{\max} + r_{\max})^2.
\end{align*}
    
\end{proof}

With a similar derivation in Appendix \ref{sec:u-concentration}, 
we can have the same Hoeffding-type bounds for U/V-statistics as that of the discounted case, which can be utilized to construct the confidence interval for $\eta^{\pi}$. 

\subsection{Confidence Bounds for Average Reward}
As our final target is to build the confidence interval for the average reward, we follow a similar idea as the discounted case 
to obtain a  high probability upper bound of the expected reward $\eta^\pi$
 by solving the following optimization problem:  
\begin{align*}
    \max_{|\eta|\leq r_{\max}, Q \in \mathcal{F},}\left\{~~\eta \quad\text{s.t.}%
    \quad \hat{L}_{K}^{V}(Q, \eta) \leq \lambda_K\right\}\,,
\end{align*}
where $\eta$ is a scalar variable and $Q\in \mathcal{F}$ is the adjusted value function, which we want to jointly optimize.

\subsection{Optimization in RKHS}
Similar to the discounted case, we can use random feature approximation to speed up the optimization. In this case, the optimization reduces to 
\begin{align*}
    \hat{\eta}^{+} = \max_{|\eta| \leq r_{\max}, \theta}\left\{\eta, ~~~~\text{s.t.}~~~ \left(Z\theta + \eta - v\right)^\top M \left(Z\theta + \eta - v\right) \leq \lambda_{K}\right \}\,.
\end{align*}
where the constants are the same as these defined in Section \ref{sec:main_rkhs}, except that $Z = X - X'$.

\clearpage
\clearpage
\section{Experiments}
\label{sec:exp_appendix}
In this section, we provide the details of the experiments, and some additional experiments for validating the effectiveness of our method.
\subsection{Experimental Details}
\paragraph{Evaluation Environments}
We evaluate the proposed algorithms in Section \ref{sec:apps} on two continuous control tasks: Inverted-Pendulum and Puck-Mountain.

Inverted-Pendulum is a pendulum that has its center of mass  above its pivot point.  It has a continuous state space on  $\mathbb{R}^{4}$. We discrete the action space to be $\{-1, -0.3, -0.2, 0, 0.2, 0.3, 1\}$.
The pendulum is unstable and would fall over without careful balancing.  %
We train a near optimal policy that can make the pendulum balance for a long horizon using deep Q learnings, and use its softmax function as policies.
We set the temperature to be higher for the behavior policies to encourage exploration. 
We use the implementation from OpenAI Gym \citep{brockman2016openai} and change the dynamic by adding some additional zero mean Gaussian noise to the transition dynamic.

Puck-Mountain is an environment similar to Mountain-Car, except that the goal of the task is to push the puck as high as possible in a local valley whose initial position is at the bottom of the valley.
If the ball reaches the top sides of the valley, 
it will hit a roof and change the speed to its opposite direction with half of its original speeds. The reward was determined by the current velocity and height of the puck. The environment has a $\mathbb{R}^{2}$ state space, and a discrete action space with 3 possible actions (pushing left, no pushing, and pushing right).
We also add zero mean Gaussian perturbations to the transition dynamic to make it stochastic.

\paragraph{Policy Construction} 

We use the open source implementation\footnote{https://github.com/openai/baselines.} of deep Q-learning to train a $32 \times 32$ MLP parameterized Q-function to converge.
We then use the softmax policy of the learned Q-function with different temperatures as policies. 
We set a default temperature $\tau = 0.1$ (to make it more deterministic) for the target policy $\pi$.
For behavior policies, we set the temperature $\tau = 1$ as default. 
We also study the performance of our method under behavior policies with different temperatures to demonstrate the  effectiveness of our method under behavior agnostic settings.

\paragraph{Hyperparameters Selection and Neural Feature Learning}
For all of our experiments, we use Gaussian RBF kernel $K(x, \Bar{x}) = \exp\left(-||x - \Bar{x} ||^2_2/h^2\right)$ in the kernel Bellman kernel (e.g.,  for Equation \eqref{equ:k_vstats}). 
We evaluate the kernel Bellman loss on a separate batch of training data, and find that we can set the bandwidth to $h = 0.5$, which will give a good solution.

When we parameterize $Q$ function $Q(x):= \theta^{\top}\Phi(x)$%
with random Fourier features:
$\Phi(x):= [\cos(\mu_i^\top x+ b_i) ]_{i=1}^m$, where $\mu_{i} \sim \mathcal{N}(0, \frac{1}{h_0^2} \rm{I})$, $b_i \sim \text{Uniform} ([0, 2\pi])$, and $h_0$ is a bandwidth parameter.
We select the bandwidth $h_{0}$ from a candidate set $\Pi = \{h_1, h_2, \ldots, h_k\}$ by finding the smallest lower bound and
largest upper bound 
on a separate validation data. 
Specifically, 
for each $h_i \in \Pi$, we follow the procedure of Algorithm \ref{algo:certified} to calculate an upper and lower bounds for $\eta^{\pi}$,
and select the lowest lower bound and the 
largest upper bound as our final bounds. 
Doing this ensures that our bounds are pessimistic and safe. 
In our empirical experiments we set $\Pi = \{0.2, 0.5,0.6, 0.8, 1.0\}$.

Following the similar procedure, we also select a set of   neural features (the neural feature map before the last linear layer) on the validation set, which have  relatively lower kernel loss when we optimize the neural network. 
Similarly, we select two different neural features for each environment and use the pessimisitc upper and lower bounds for all of our experiments.

\paragraph{Constructing Existing Estimators in Post-hoc Diagnosis}
Since we only need to demonstrate the effectiveness of our proposed post-hoc diagnosis process, we simply parameterize Q as a linear function of a small set of random Fourier features ($Q(\cdot) = \theta^{\top}\Phi(\cdot)$), 
and estimate $\theta$ by minimizing the kernel Bellman loss by running gradient descent for different numbers of iterations.
For the experiments in Inverted-Pendulum (Figure~\ref{fig:pendulum_debias}), 
$\hat Q_1$ (resp. $\hat Q_2$) are obtained 
when we run a large (resp. small) number of iterations in training, 
so that $\hat Q_1$ is relatively accurate  while $\hat Q_2$ is not. 
For Puck-Mountain in Figure~\ref{fig:puckmountain_debias},  
the error of both $\hat Q_1$ and $\hat Q_2$ are relatively large.

\begin{figure*}[t]
    \centering
     \begin{tabular}{ccc}
        \multicolumn{3}{c}{
        \includegraphics[width=.85\linewidth]{figs/pendulum_debias/debias_legend.pdf}}\\
        \raisebox{5.5em}{\rotatebox{90}{\small{Rewards}}}\includegraphics[height=\delen]{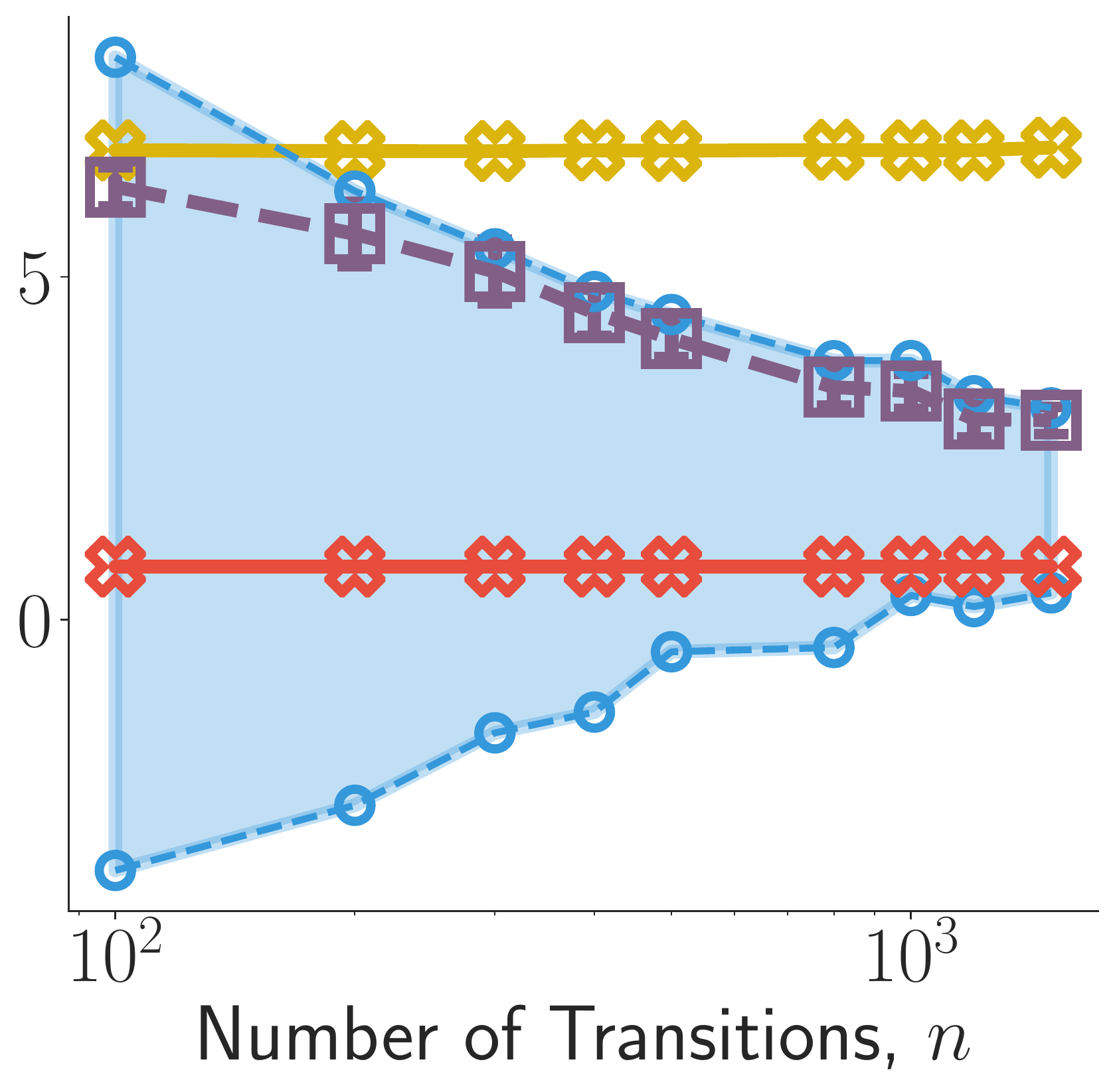}\hspace{-0.1em}&
        \raisebox{5.5em}{\rotatebox{90}{\small{Rewards}}}\includegraphics[height=\delen]{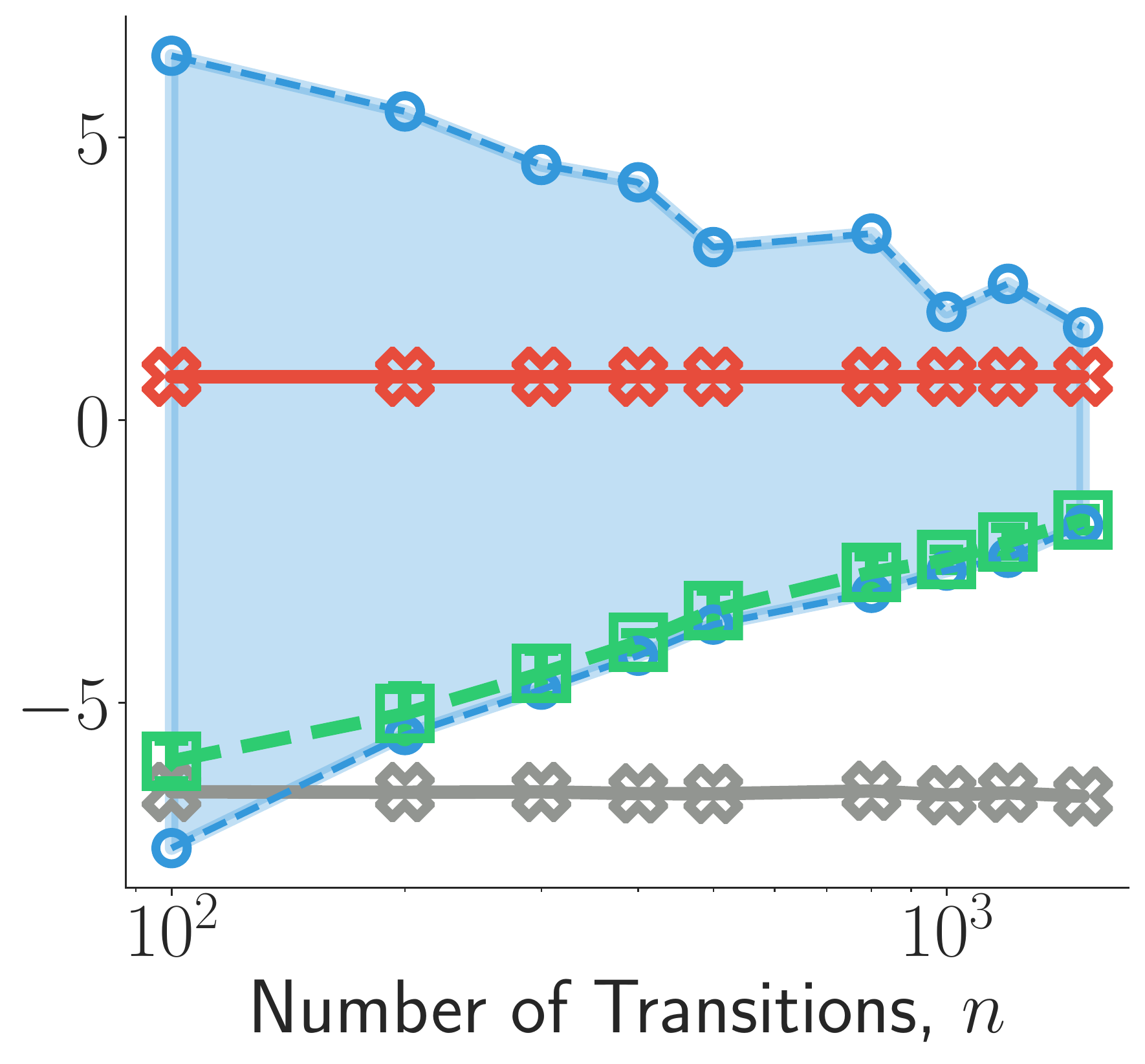} &
        \raisebox{6em}{\rotatebox{90}{\small{Norm }}}\includegraphics[height=\delen]{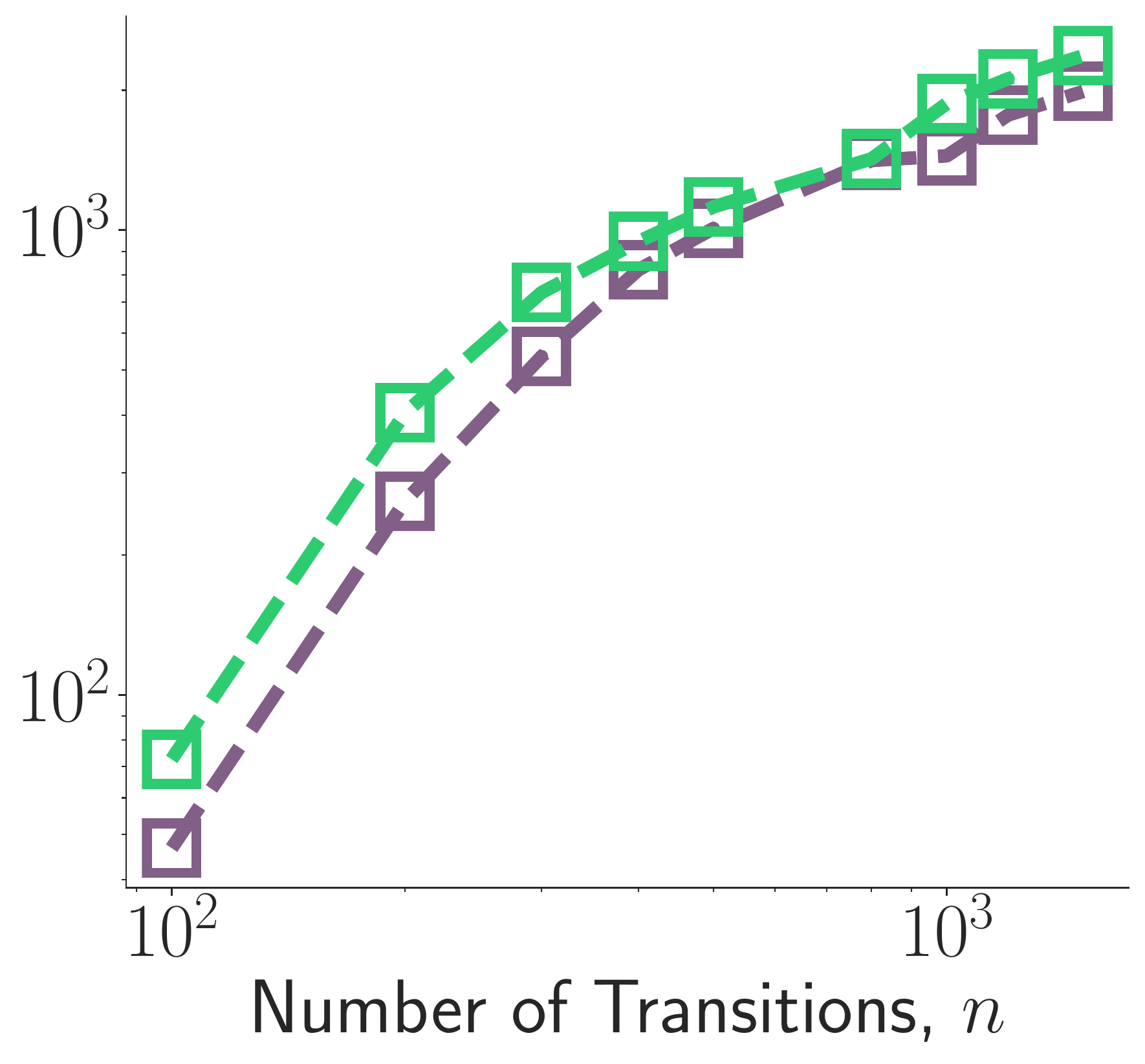}\\
        (a) \small{Diagnosis $\hat Q_1$ with different $n$ } & (b) \small{Diagnosis for $\hat Q_2$  with different $n$ } & (c) Norm of  $Q_{\rm{debias}}$\\
         \raisebox{5.5em}{\rotatebox{90}{\small{Rewards}}}\includegraphics[height=\delen]{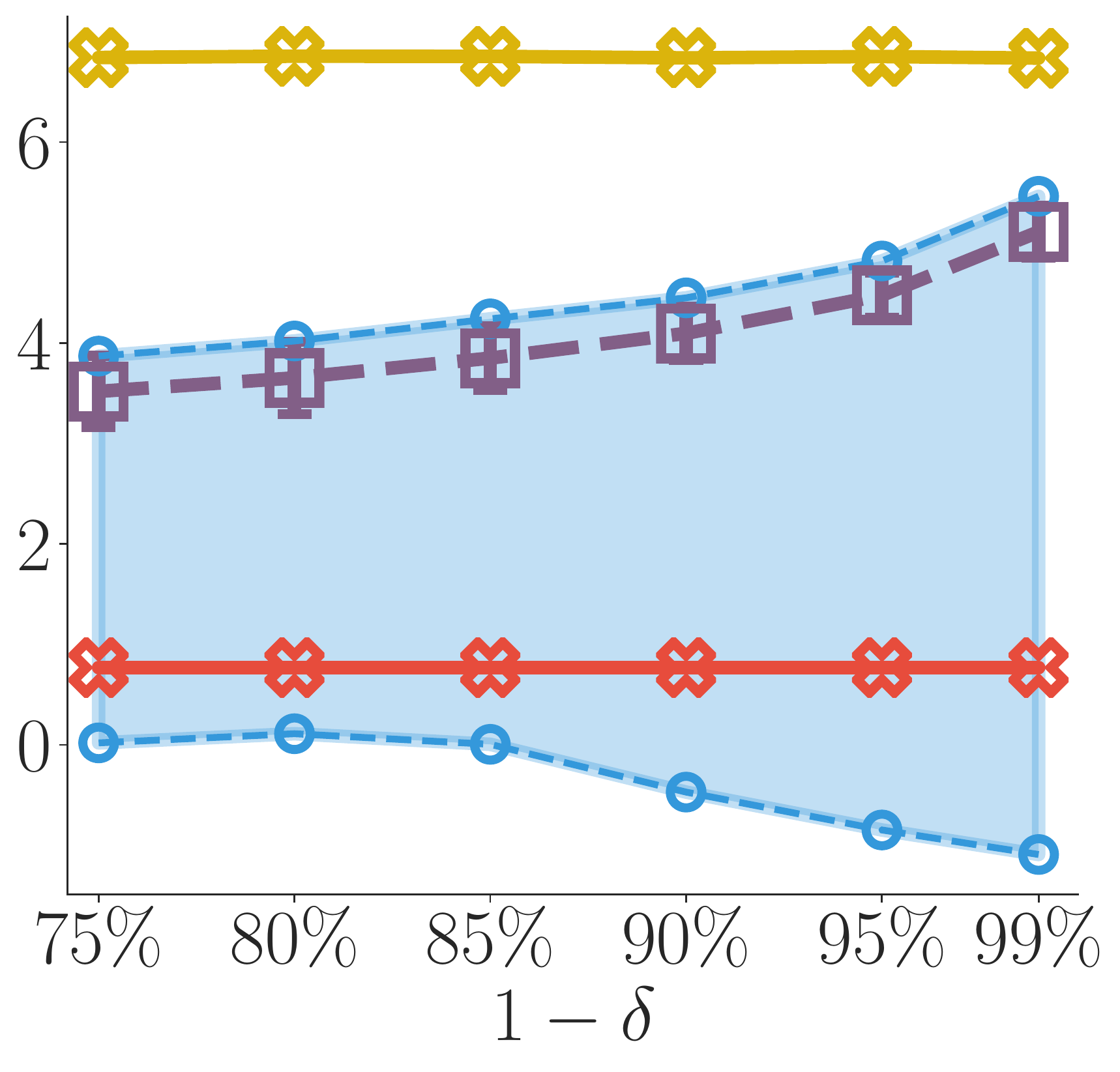}&
        \raisebox{5.5em}{\rotatebox{90}{\small{Rewards}}}\includegraphics[height=\delen]{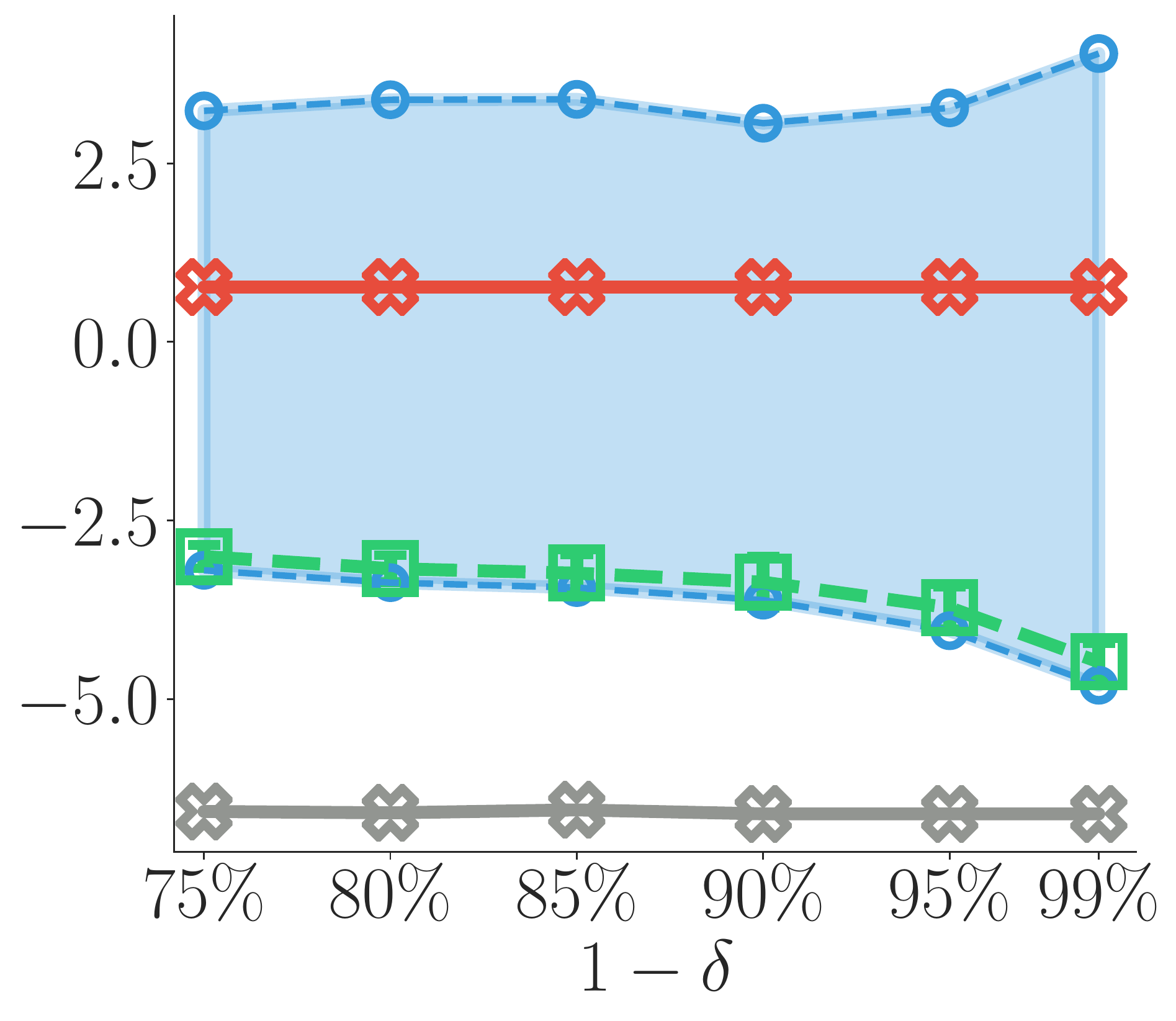}&
         \raisebox{6em}{\rotatebox{90}{\small{Norm }}}\includegraphics[height=\delen]{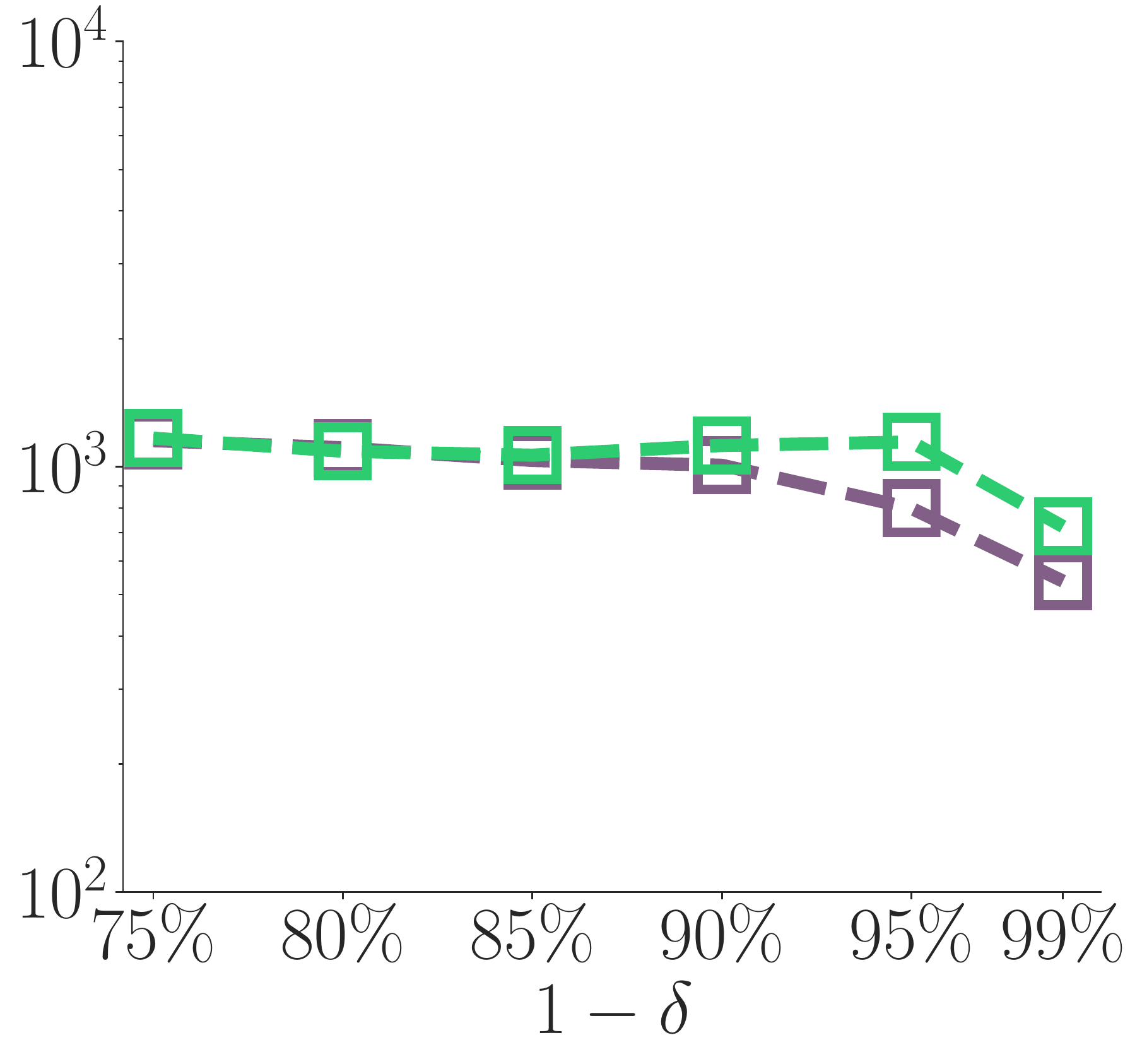}\\
         (d) \small{Diagnosis for $\hat Q_1$  with different $\delta$ } & (e) \small{Diagnosis for $\hat Q_2$ with different $\delta$ } & (f) Norm of  $Q_{\rm{debias}}$\\
    \end{tabular}
    \vspace{-.8em}
    \caption{Post-hoc diagnosis on Puck-Mountain. We set the discounted factor $\gamma=0.95$, the horizon length $T=100$, number of transitions $n=500$, failure probability $\delta=0.10$, temperature of the behavior policy $\tau = 1$, and the feature dimension 10 as default. The rest of the parameters are the same as that in Section \ref{sec:exp}.
    }
    \label{fig:puckmountain_debias}

\end{figure*}

\begin{figure}[t]
    \centering
     \begin{tabular}{ccc}
        \multicolumn{3}{c}{
        \includegraphics[width=.85\linewidth]{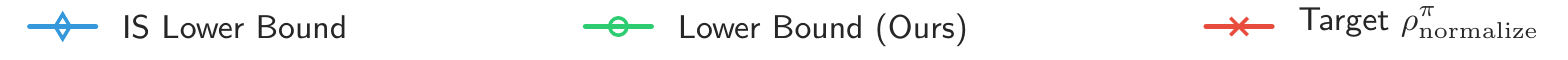}}\\
        \raisebox{2em}{\rotatebox{90}{\small{ Normalized Reward}}}\includegraphics[height=\delen]{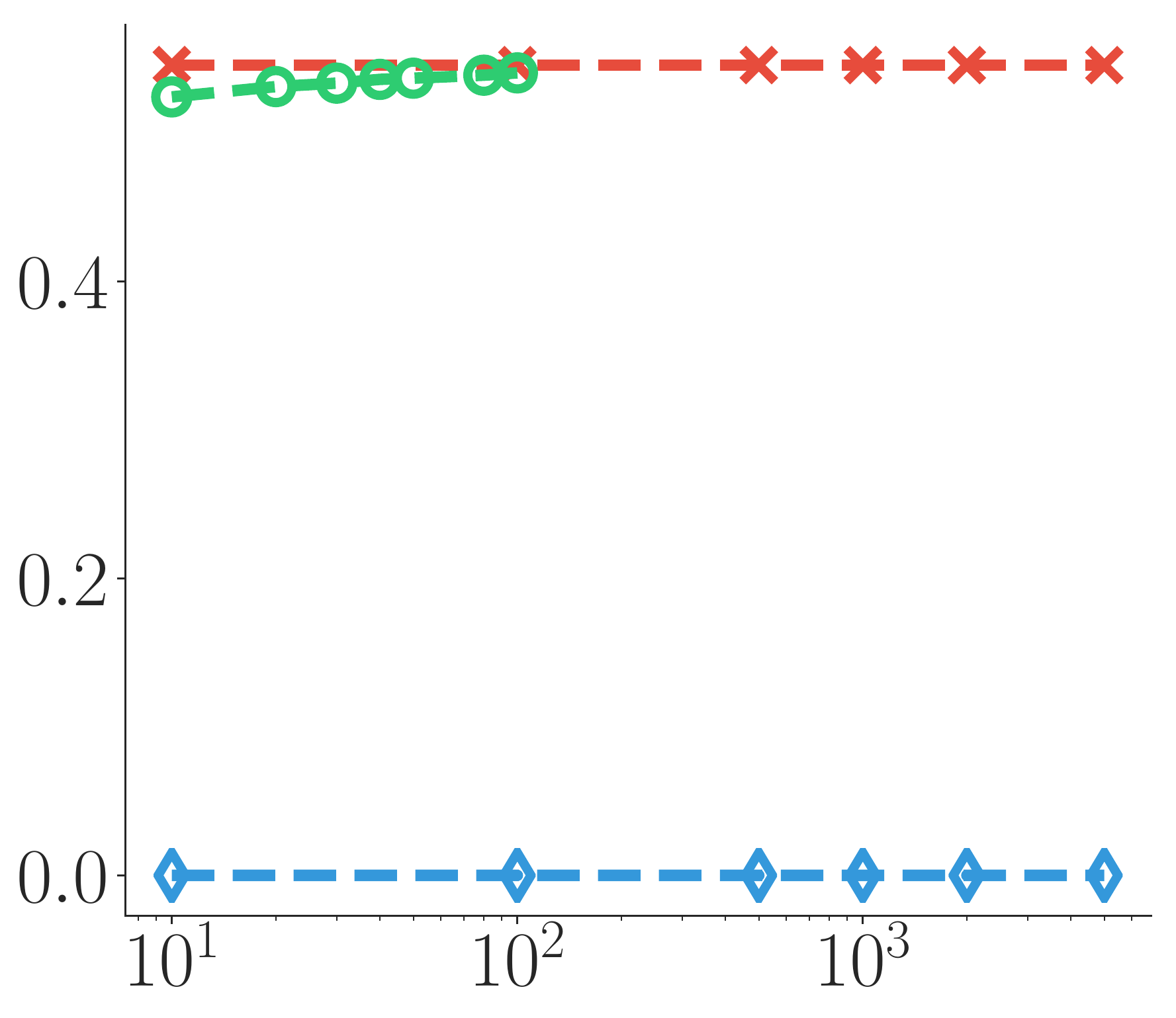}&
        \raisebox{3em}{\rotatebox{90}{\small{Normalized Rewards}}}\includegraphics[height=\delen]{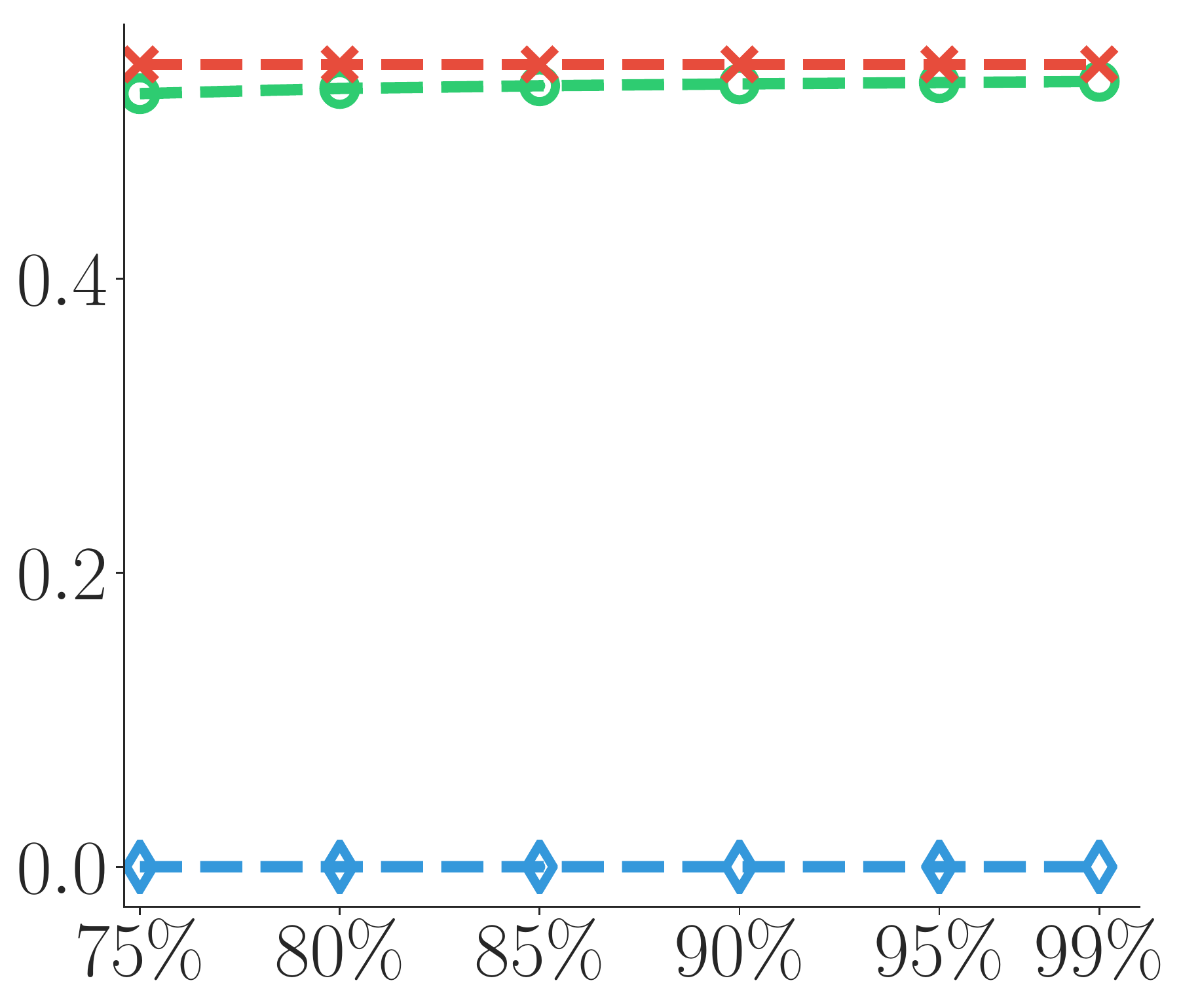} &
        \raisebox{3em}{\rotatebox{90}{\small{Normalized Rewards }}}\includegraphics[height=\delen]{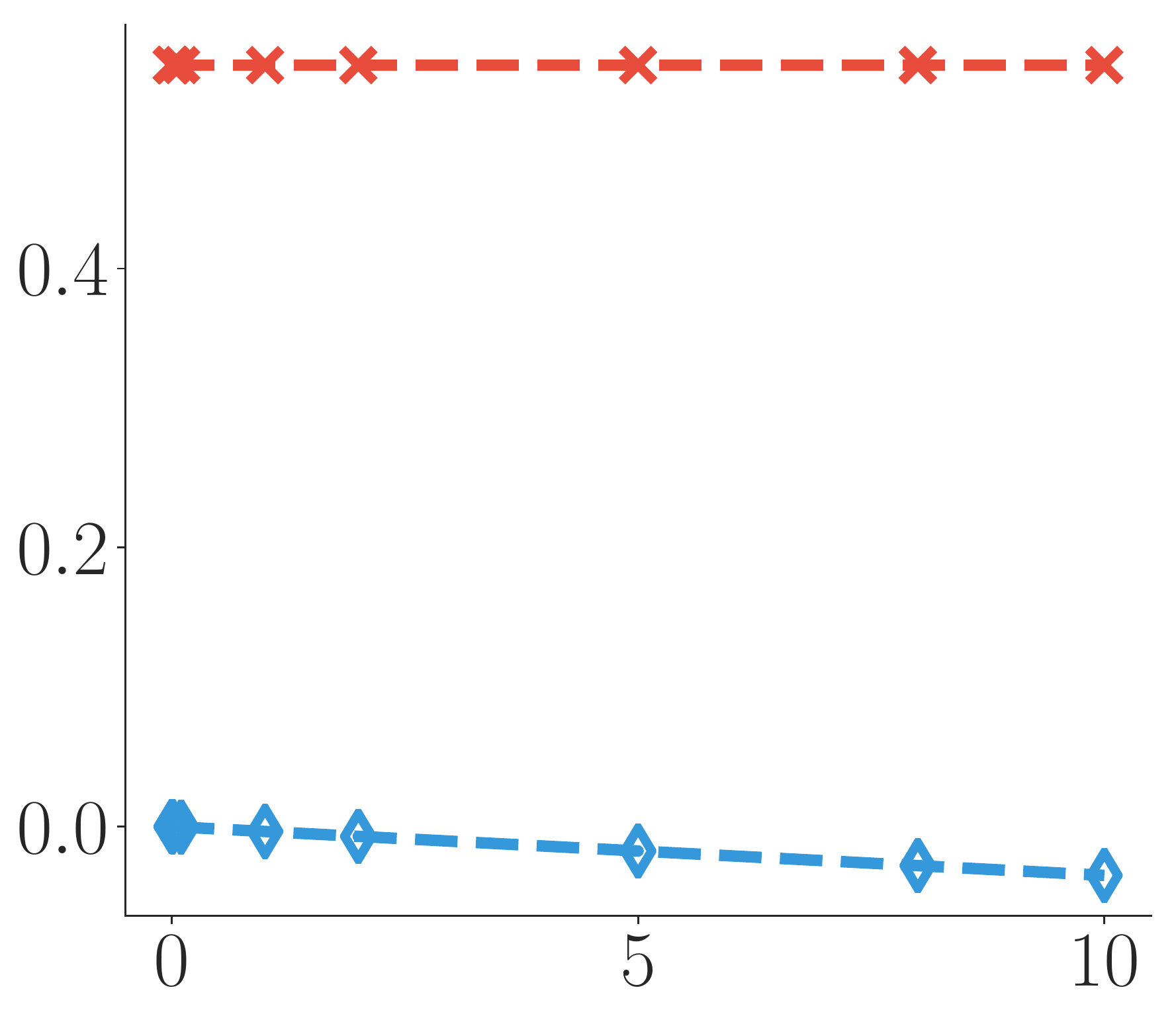}\\
        (a) Number of Episodes, $N$ & (b) $1 - \delta$ & (c) Threshold value $c$\\
    \end{tabular}
    
    \caption{IS-based confidence lower bounds \citet{thomas15high} on  Inverted-Pendulum following the same setting as that in Section \ref{sec:exp}. We use the results of random Fourier feature in Figure \ref{fig:pendulum} as our lower bound.
    For   \citet{thomas15high}, we set the threshold value to be $c=10^{-5}$ in (a) \& (b), and the number of episodes $N = 2000$ in (b) \& (c). 
    For our method, we only use  $n = 50 \times 20$ number of transition pairs in (b). 
    We report the normalized reward based on the procedure in \citet{thomas15high} (${\rho_{\rm{normalize}}} = (\sum_{t=1}^{T}\gamma^{t-1}r_{t} - R_{\min}) / (R_{\max} - R_{\min})$).  
    }
    \label{fig:thomas_bounds}

\end{figure}

\subsection{Additional Experiments}
\paragraph{Post-hoc Diagnosis Experiments on Puck-Mountain}
Figure \ref{fig:puckmountain_debias} (a)-(f) show the diagnosis results for two estimations of Q-function ($\hat Q_1$ and $\hat Q_2$) on Puck-Mountain. 
Here both $\hat Q_{1}$ and $\hat Q_{2}$ have relatively large bias, but $\hat Q_{1}$ tends to overestimate $\eta^{\pi}$ (see Figure~\ref{fig:puckmountain_debias}(a)), while $\hat Q_{2}$ tends to underestimate $\eta^{\pi}$ (see Figure~\ref{fig:puckmountain_debias}).

Figure \ref{fig:puckmountain_debias}(a)-(c) show that as we increase the number of transitions, the norm of the debiasing term $Q_{\rm{debias}}$ becomes larger.
This is because when we have more data, we have a relatively tight confidence bound and we need a more complex debias function to provide good post-hoc correction.
Figure \ref{fig:puckmountain_debias} (d)-(f) demonstrate the performance of our algorithms when we change the failure probability $\delta$. 

\paragraph{Comparison with \citet{thomas15high}}

As a comparison, we implement the method from \citet{thomas15high}, which uses concentration inequality to construct confidence bounds on an importance sampling (IS) based estimator.
Following \citet{thomas15high}, we assume the expected reward is normalized as follows 
\begin{align}
    \rho^{\pi}_{\rm{normalize}} := \frac{\E_{\pi}\left[\sum_{t=1}^{T}\gamma^{t-1}r_{t}\right] - R_{\min}}{R_{\max} - R_{\min}}\,, \label{equ:norm_return}
\end{align}
where $\sum_{t=1}^{T}\gamma^{t-1}r_{t}$ is the discounted return of a trajectory following policy $\pi$, and $R_{\max}$ and $R_{\min}$ are the upper and lower bounds on
$\sum_{t=1}^{T}\gamma^{t-1}r_t$; see \citet{thomas15high} for the choice of $R_{\max}$ and $R_{\min}$. 

Given a set of trajectories $\{(s_t^{(i)}, a_t^{(i)}, r_t^{(i)})_{t=1}^{T}\}_{1 \leq  i \leq N}$ generated by behavior policy $\pi_{0}$, we have 

\begin{align}
    \hat{\rho}^{\pi}_{\rm IS} := \frac{1}{N} \sum_{i=1}^{N} X_{i}, ~~~~~\text{with}~~~~~X_{i}= \underbrace{R_i}_{\text{return}}\underbrace{\prod_{t=1}^{T}\frac{\pi(a_t^{(i)}|s_t^{(i)})}{\pi_{0}(a_t^{(i)}|s_t^{(i)})}}_{\text{importance weight}}\,, \label{equ:is_est}
\end{align}

where $R_{i}$ is reward the $i$-th trajectory from the data 
(normalized as shown in Eq \eqref{equ:norm_return}). 
 Theorem 1 in \citet{thomas15high} provides a concentration inequality 
for constructing lower bound of  $\rho^{\pi}_{\rm{normalize}}$ based on a truncated importance sampling estimator. 
Let $\{c_i\}_{i=1}^N$ be a set of 
positive real-valued threshold and $\delta \in (0,1)$ and $Y_{i} = \min\{X_i, c_i\}$, %
we have with probability at least $1 - \delta$, %
\begin{align}
    \rho^{\pi}_{\rm{normalize}} \geq \underbrace{\left(\sum_{i=1}^{N}\frac{1}{c_i}\right)^{-1}\sum_{i=1}^{N}\frac{Y_i}{c_i}}_\text{empirical mean} - \underbrace{\left(\sum_{i=1}^{N}\frac{1}{c_i}\right)^{-1}\frac{7N\ln(2 / \delta)}{3(N-1)}}_\text{
term that goes to zero as $1 / N$ as $N \to \infty$} - \underbrace{\left(\sum_{i=1}^{N}\frac{1}{c_i}\right)^{-1}\sqrt{\frac{\ln(2/\delta)}{N-1}\sum_{i, j=1}^{N}\left(\frac{Y_i}{c_i} - \frac{Y_j}{c_j}\right)^2}}_\text{term that goes to zero as $1 / \sqrt{N}$ as $N \to \infty$}\,.\label{eq:thomas_bound}
\end{align}
The RHS provides a %
lower bound of $\rho^{\pi}_{\rm{normalized}}$ based the empirical trajectories. %
Following the settings in \citet{thomas15high}, we set the threshold values to be a constant $c$, that is, $c_i = c$ for all $i =1,\ldots, N$.

We  evaluate the method on Inverted-Pendulum under the same default settings as our experiments in the paper.  
Figure \ref{fig:thomas_bounds}(a)-(c) show the results of the high confidence lower bound.
We can see that the IS lower bounds are almost vacuous (i.e., very close to zero) and 
it does not help very much when we 
increase the number of episodes (up to $N=2000$) or the failure probability $\delta$. 
This is  because  there is only small overlap between the target policy $\pi$ and behavior policy $\pi_0$ in this case and when the horizon length  is large ($T = 50$), making  the IS  estimator degenerate. Although we need to point out that to get tighter lower bound with our method, we assume the true $Q^{\pi}$ is in the function space $\mathcal{F}$ that we choose, 
and also assume the horizon length $T\to \infty $ to make the confidence bound provably hold.

\clearpage
\section{Discussion on Reproducing Kernel Hilbert Space}
\label{sec:appendix_random_feature}
We provide proof of Proposition~\ref{pro:full_rkhs} and discussion for Section \ref{sec:main_rkhs}.

\begin{proof}[Proof of Proposition~\ref{pro:full_rkhs}]
 Consider the Lagrangian of the constrained optimization \eqref{eq:optimization-problem}, we have:
\begin{align*}
    L(Q, \lambda_1, \lambda_2) =& \E_{x\sim \mu_{0}\times \pi}[Q(x)] - \lambda_1 (\hat{L}_K(Q) - \lambda_K) - \lambda_2 (\|Q\|_{\H}^2 - \rho) \\
    =& \langle Q, f_0 \rangle - \frac{\lambda_1}{n^2}\left(\sum_{i,j} (\langle Q, f_i\rangle - r_i ) K(x_i,x_j) (\langle Q, f_j\rangle - r_j ) \right) - \lambda_2(\|Q\|_{\H}^2 ) - C,
\end{align*}
where $\lambda_1,\lambda_2$ are Lagrangian multipliers with respect to the two constraints, and $C$ is a constant related to $\lambda_1, \lambda_2, \lambda_K$ and $\rho$.
We rewrite $Q$ into 
$$
Q = \sum_{k=0}^n \alpha_k f_k + Q_{\perp},
$$
where $Q_\perp$ is in the orthogonal subspace to the linear span of $f_0, f_1,...,f_n$, that is, $\langle f_i, Q_\perp\rangle = 0,~\forall i\in [n]$.
By 
decomposin $Q$ into $\sum_{k=0}^n \alpha_k f_k$ and $Q_{\perp}$, we have
\begin{align*}
    L(Q, \lambda_1, \lambda_2) + C =& \langle Q, f_0 \rangle - \frac{\lambda_1}{n^2}\left(\sum_{i,j} (\langle Q, f_i\rangle - r_i ) K(x_i,x_j) (\langle Q, f_j\rangle - r_j )\right) - \lambda_2(\|Q\|_{\H}^2) \\
    =& \langle \sum_{k=0}^n \alpha_k f_k + Q_{\perp}, f_0 \rangle - \frac{\lambda_1}{n^2}\left(\sum_{i,j=1}^n (\langle \sum_{k=0}^n \alpha_k f_k + Q_{\perp}, f_i\rangle - r_i ) K(x_i,x_j) (\langle \sum_{k=0}^n \alpha_k f_k + Q_{\perp}, f_j\rangle - r_j )\right)\\
    ~~~~~~~~& - \lambda_2(\|\sum_{k=0}^n \alpha_k f_k\|_{\H}^2 +\|Q_{\perp}\|_{\H}^2) \\
    =& \langle \sum_{k=0}^n \alpha_k f_k, f_0 \rangle - \frac{\lambda_1}{n^2} \left(\sum_{i,j=1}^n (\langle \sum_{k=0}^n \alpha_k f_k , f_i\rangle - r_i ) K(x_i,x_j) (\langle \sum_{k=0}^n \alpha_k f_k , f_j\rangle - r_j )\right)\\
    ~~~~~~&- \lambda_2(\|\sum_{k=0}^n \alpha_k f_k\|_{\H}^2 + \|Q_{\perp}\|_{\H}^2 ) \\
    \leq& \langle \sum_{k=0}^n \alpha_k f_k, f_0 \rangle - \frac{\lambda_1}{n^2} \left(\sum_{i,j=1}^n (\langle \sum_{k=0}^n \alpha_k f_k , f_i\rangle - r_i ) K(x_i,x_j) (\langle \sum_{k=0}^n \alpha_k f_k , f_j\rangle - r_j )\ell\right)\\
    ~~~~~~&- \lambda_2(\|\sum_{k=0}^n \alpha_k f_k\|_{\H}^2) :=L(\alpha, \lambda_1, \lambda_2),
\end{align*}
where the optimum $Q$ will have $Q_{\perp} = 0$ and $L(\alpha, \lambda_1, \lambda_2)$ is the Lagrangian w.r.t. to coefficient $\alpha$.
Collecting the term we can reform the optimization w.r.t. $\alpha$ as
\begin{align*}
    \hat{\eta}^+ :=  \max_{\{\alpha_i\}_{0\leq i \leq n}} \Big \{  [c^\top \alpha  + \lambda_{\eta}],
    ~~~~~~~~~\text{s.t.} ~~ \alpha^\top A \alpha + b^\top \alpha + d \leq 0,%
    ~~~~~~~~~  \alpha^\top B \alpha \leq \rho. \Big\}\, 
\end{align*}
where $B_{ij} = \langle f_i, f_j\rangle_{\H_{\Kf}},~\forall 0 \leq i, j \leq n$ is the inner product matrix of $\{f_i\}_{0\leq i\leq n}$ under $\H_{\Kf}$, $c$ as the first column of $B$ and $B_1$ be the remaining sub-matrix.
Let $M_{ij} = K(x_i,x_j)$ be the kernel matrix, $R_{i} = r_i$ be the vector of reward from data, then
$A = \frac{1}{n^2} B_1 M B_1^\top$, $b = -\frac{1}{n^2} B_1 M R$ and $d = \frac{1}{n^2} R^\top M R - \lambda_K$.
\end{proof}

\paragraph{Random Feature Approximation} 
Consider the random feature representation of  kernel $\Kf$ 
\begin{align}  \label{equ:K0app}
\Kf(x,x') = \E_{w\sim \mu}[\phi(x,w) \phi(x',w)],
\end{align}
where $\mu$ is a distribution of random variable $w$. 
Every function  $f$ in the RKHS $\H_{K_0}$ associated with $K_0$ can be represented as 
$$
f(x) = \E_{w\sim \mu}[\phi(x,w) \alpha_f(w)],
$$
whose RKHS norm equals 
$$
\|f\|_{\H_{K_0}}^2 = \E_{w\sim \mu}[\alpha_f(w)^2].
$$

To approximate $\Kf$, we draw an i.i.d. sample $\{w_i\}$ from $\mu$ to approximate $\Kf$ with  %
\begin{align} \label{equ:hatK0app}
\widehat{\Kf}(x,x') = \frac{1}{m} \sum_{i=1}^m \phi(x,w_i) \phi(x',w_i).
\end{align}
Similarly, each $f \in \H_{\widehat{\Kf}}$ can be represented as 
$$
f(x) = \frac{1}{m}\sum_{i=1}^m \phi(x,w_i) \alpha_i
$$
and its corresponding RKHS norm is 
$$
\|f\|_{\H_{\widehat{\Kf}}}^2 = \frac{1}{m} \sum_{i=1}^m \alpha_i^2.
$$
Denote by $\theta = \alpha/m$, then we have $f(x) = \theta^\top \Phi(x)$ and $\norm{f}^2_{\H_{\widehat{\Kf}}} = m \norm{\theta}_2^2$, which is the form used in the paper. 

The result below shows that when $Q^\pi$ is included in $\H_{K_0}$ but may not be included in $\H_{\hat K_0}$, we can still get a provably upper bound by setting the radius of the optimization domain properly large.  

\begin{thm}\label{thm:finte_random_feature_error}
Let $K_0$ be a positive definite kernel with random feature expansion in \eqref{equ:K0app} and $\hat K_0$ defined in \eqref{equ:hatK0app} with $\{w_i\}_{i=1}^m$ i.i.d. drawn from $\mu$. Assume $\norm{\phi}_{\infty} = \sup_{x,w}|\phi(x,w)| <0$. 
Define 
\begin{align} 
 & \eta^+_{\Kf}  = 
 \max_{Q\in \H_{{\Kf}}}\{\eta(Q),~~~~\text{s.t.}~~~~ \hat{L}_K(Q) \leq \lambda, ~~~~\|Q\|_{\H_{{\Kf}}}^2\leq \rho\}. \label{equ:pp1} 
 \end{align}
 Let $Q^*$ be the optimal solution of \eqref{equ:pp1} and $Q^*(\cdot) = \E_{w\sim \mu}[\phi(\cdot,w)\alpha^*(w)]$. Assume $C := \var_{w\sim \mu}((\alpha^*(w))^2)<\infty$. %
 For $\delta \in (0,1)$, define 
 \begin{align} 
 & \eta^+_{\widehat{\Kf}}  = 
 \max_{Q\in \H_{\widehat{\Kf}}}\{\eta(Q),~~~~\text{s.t.}~~~~ \hat{L}_K(Q) \leq \lambda, ~~~~\|Q\|_{\H_{\widehat{\Kf}}}^2\leq \rho' \} .  \label{equ:pp2}
\end{align}
If we set $\rho' \geq \rho + \sqrt{\frac{C}{\delta m}}$, then we have with probability at least $1-2\delta$ 
\begin{equation*}
 \eta^+_{\Kf}  \leq  \eta^+_{\widehat{\Kf}}  + \|\phi\|_{\infty} \sqrt{\frac{ \rho }{\delta m}}. 
\end{equation*}
Therefore, 
if $Q^\pi$ belongs to $\H_{K_0}$ (and hence $\eta^\pi \leq \eta^+_{K_0}$), 
then $\eta^+_{\widehat{\Kf}}  + \|\phi\|_{\infty} \sqrt{\frac{ \rho }{\delta m}}$ provides a high probability upper bound of $\eta^\pi$. 
\end{thm}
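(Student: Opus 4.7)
The plan is to exhibit a witness $\widehat{Q}\in\mathcal{H}_{\widehat{\Kf}}$ that is (approximately) feasible for \eqref{equ:pp2} and whose value $\eta(\widehat{Q})$ tracks $\eta(Q^*)$; the desired inequality then drops out from the definition of $\eta^+_{\widehat{\Kf}}$ as a maximum. Let $Q^*$ attain the max in \eqref{equ:pp1}, write $Q^*(x)=\E_{w\sim\mu}[\phi(x,w)\alpha^*(w)]$ with $\E[\alpha^*(w)^{2}]\leq\rho$, and take the Monte Carlo surrogate
\[
\widehat{Q}(x) \;:=\; \frac{1}{m}\sum_{j=1}^{m}\phi(x,w_j)\,\alpha^*(w_j),
\]
which lies in $\mathcal{H}_{\widehat{\Kf}}$ with $\|\widehat{Q}\|^{2}_{\mathcal{H}_{\widehat{\Kf}}}=\frac{1}{m}\sum_{j=1}^{m}\alpha^*(w_j)^{2}$ by the finite-feature norm formula established in Section~\ref{sec:main_rkhs}.

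Two applications of Chebyshev's inequality handle the feasibility and the objective separately. First, $\frac{1}{m}\sum_{j}\alpha^*(w_j)^{2}$ is an i.i.d.\ empirical mean with population mean at most $\rho$ and variance $C$, so with probability at least $1-\delta$,
\[
\|\widehat{Q}\|^{2}_{\mathcal{H}_{\widehat{\Kf}}} \;\leq\; \rho + \sqrt{\tfrac{C}{\delta m}}\;\leq\;\rho',
\]
matching the norm constraint in \eqref{equ:pp2}. Second, since $\eta$ is a linear functional of $Q$, one has $\eta(\widehat{Q})=\frac{1}{m}\sum_{j}g(w_j)$ with $g(w):=\alpha^*(w)\,\E_{x_0\sim\mu_0\times\pi}[\phi(x_0,w)]$, which satisfies $\E[g(w)]=\eta(Q^*)$, $|g(w)|\leq \|\phi\|_\infty|\alpha^*(w)|$, and $\var(g(w))\leq\|\phi\|_\infty^{2}\E[\alpha^*(w)^{2}]\leq\|\phi\|_\infty^{2}\rho$. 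Chebyshev therefore gives $|\eta(\widehat{Q})-\eta(Q^*)|\leq \|\phi\|_\infty\sqrt{\rho/(\delta m)}$ with probability at least $1-\delta$. Intersecting the two events (probability $\geq 1-2\delta$) and invoking that $\widehat{Q}$ is then feasible in \eqref{equ:pp2} yields
\[
\eta^+_{\Kf}\;=\;\eta(Q^*)\;\leq\; \eta(\widehat{Q})+\|\phi\|_\infty\sqrt{\tfrac{\rho}{\delta m}}\;\leq\;\eta^+_{\widehat{\Kf}}+\|\phi\|_\infty\sqrt{\tfrac{\rho}{\delta m}},
\]
which is the claimed bound. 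The final assertion follows since, when $Q^\pi\in\mathcal{H}_{\Kf}$ with $L_K(Q^\pi)=0$, $Q^\pi$ is feasible in \eqref{equ:pp1} under the high-probability event that $\hat L_K(Q^\pi)\leq\lambda$, so $\eta^\pi=\eta(Q^\pi)\leq\eta^+_{\Kf}$.

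The subtle step, and the one I expect to be the main obstacle, is verifying the kernel-loss constraint $\hat L_K(\widehat{Q})\leq\lambda$ for the Monte Carlo surrogate: unlike the RKHS-norm bound and the linear functional $\eta$, $\hat L_K$ is quadratic in its argument and the constraint carries no slack in \eqref{equ:pp2}. The natural route is to apply Chebyshev pointwise to $\widehat{Q}(x_i)-Q^*(x_i)$ and $\widehat{Q}(x_i')-Q^*(x_i')$ (each has variance bounded by $\|\phi\|_\infty^{2}\rho/m$), union-bound over the $2n$ data points, and propagate this uniform pointwise closeness through the quadratic form defining $\hat L_K$. This would introduce an additional $O(1/\sqrt m)$ slack in the kernel-loss inequality that the present statement seems to absorb implicitly; alternatively, one can slightly loosen the threshold $\lambda$ in \eqref{equ:pp2} by this amount to make the argument fully airtight.
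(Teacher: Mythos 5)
Your proposal follows essentially the same route as the paper's proof: the same Monte Carlo witness $\widetilde{Q}(\cdot)=\frac{1}{m}\sum_j\phi(\cdot,w_j)\alpha^*(w_j)$, one Chebyshev bound for the empirical RKHS norm against $\rho'$, and a second Chebyshev bound for the linear functional $\eta$, combined by a union bound. The ``subtle step'' you flag at the end is a genuine observation: the paper's own proof asserts that $\widetilde{Q}$ ``is included in the optimization set of \eqref{equ:pp2}'' after verifying only the norm constraint, and never checks $\hat{L}_K(\widetilde{Q})\leq\lambda$, which is exactly the issue you identify; your suggested remedy of inflating $\lambda$ by the $O(1/\sqrt{m})$ slack obtained from propagating the pointwise Chebyshev bounds through the quadratic form is a sensible way to make the argument fully rigorous, and in that respect your write-up is more careful than the one in the appendix.
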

\begin{proof}
Following 
$Q^*(\cdot) = \E_{w\sim \mu}[\phi(\cdot,w)\alpha^*(w)]$, we have $\alpha^*$ satisfies  
\begin{align}
    \label{equ:rtreitrt}
    \norm{Q^*}_{\H_{K_0}}^2 = \E_{w\sim \mu} \left [ ( \alpha^*(w) )^2 \right ] \leq \rho. 
\end{align}
Let %
$$
\widetilde{Q}(\cdot) = \frac{1}{m} \sum_{i=1}^m \phi(\cdot, w_i) \alpha^*(w_i), 
$$
for which we have 
$$
\|{\widetilde{Q}}\|_{\H_{\hat K_0}}^2 = \frac{1}{m} \sum_{i=1}^m  \alpha^*(w_i)^2. 
$$
By Chebyshev inequality, we have with probability at least $1-\delta$ 
$$
\|{\widetilde{Q}}\|_{\H_{\hat K_0}}^2 \leq 
\norm{Q^*}_{\H_{K_0}}^2  + \sqrt{\frac{1}{\delta m}\var_{w\sim \mu}(\alpha^*(w)^2)} \leq \rho + \sqrt{\frac{C}{\delta m}}. 
$$
If $\rho' \geq \rho + \sqrt{\frac{C}{\delta m}}$, 
then $\widetilde{Q}$ is included in the optimization set of \eqref{equ:pp2}, and hence \begin{align} \label{equ:tmpdfdfdfjidjf}
\eta(\widetilde{Q}) \leq \eta^+_{\hat K_0}.
\end{align}

On the other hand, 
because $\eta(Q)$ is a linear functional, we have 
\begin{align*}
    \eta(\widetilde{Q}) - \eta(Q^*)
    =& \frac{1}{m}\sum_{i=1}^m \Phi(w_i) \alpha^*(w_i) - \E_{w\sim \mu}[\Phi(w_i)\alpha^*(w)],
\end{align*}
where $\Phi(w) = \eta(\phi(\cdot,w))$. Note that 
$$
\var_{w\sim \mu}(\Phi(w) \alpha^*(w)) \leq \norm{\phi}_\infty^2  \E_{w\sim \mu} \left [(\alpha^*(w))^2\right] \leq \norm{\phi}_\infty^2 \rho, 
$$%
where we used \eqref{equ:rtreitrt}.  
By Chebyshev inequality, we have with probability at least $1-\delta$ 
$$
|\eta(\widetilde{Q}) - \eta(Q^*)| \leq \sqrt{\frac{\|\phi\|_{\infty}^2 \rho }{\delta m}}.
$$
Combining this with \eqref{equ:tmpdfdfdfjidjf}, we have, with probability at least $1-2\delta$, 
\begin{align*} 
 \eta^+_{K_0} = \eta(Q^*)
& =  \eta(\widetilde{Q})~+~ (\eta(Q^*) - \eta(\widetilde{Q}))  
  \leq  \eta^+_{\hat K_0}  ~ + ~\sqrt{\frac{\|\phi\|_{\infty}^2 \rho }{\delta m}}. 
\end{align*}
\end{proof}

\end{document}